\pdfoutput=1
\documentclass{informs4TD}
\usepackage{lmodern}
\usepackage{microtype}
\usepackage{makecell}
\usepackage{booktabs}
\usepackage{multirow}
\usepackage{hyperref}
\usepackage{xurl}
\usepackage[normalem]{ulem}
\OneAndAHalfSpacedXI

\usepackage{tablefootnote}
\usepackage[para,online,flushleft]{threeparttable}
\usepackage[top=1in, bottom=1in,left=1in,right=1in]{geometry}
\usepackage[nameinlink]{cleveref}
\crefname{assumption}{Assumption}{Assumptions}
\crefname{lemma}{Lemma}{Lemmas}
\crefname{corollary}{Corollary}{Corollaries}
\crefname{proposition}{Proposition}{Propositions}
\crefname{example}{Example}{Examples}
\crefname{remark}{Remark}{Remarks}
\crefname{definition}{Definition}{Definitions}
\crefname{figure}{Figure}{Figures}
\crefname{section}{Section}{Sections}
\crefname{appendix}{Appendix}{Appendices}
\crefname{table}{Table}{Tables}
\crefformat{equation}{(#2#1#3)}
\Crefformat{equation}{(#2#1#3)}
\usepackage{bbm}
\usepackage[normalem]{ulem}
\usepackage{tikz}
\usepackage{graphicx}
\usepackage{comment}
\usepackage{amsmath, amssymb}
\usepackage{enumitem}
\usepackage{epstopdf}
\usepackage{enumitem}
\usepackage{pdfpages} 
\usepackage{soul}
\usepackage{silence}
\usepackage{float}
\usepackage{placeins}
\usepackage{enumitem}
\usepackage[english]{babel}
\usepackage{natbib}
\usepackage{tabularx}
\usepackage{caption}
\usepackage{tikz}
\usepackage{booktabs, tabularx}
\usetikzlibrary{shapes,positioning,arrows.meta,decorations.pathreplacing,calc}
\usepackage{placeins}
\newtheorem{lemma}{Lemma}

\newtheorem{proposition}{Proposition}
\newtheorem{claim}{Claim}

\newtheorem{assumption}{Assumption}

\newtheorem{corollary}{Corollary}

\newtheorem{example}{Example}

\bibpunct[,]{(}{)}{,}{a}{}{,}%
\def\bibfont{\small}%
\def\bibsep{\smallskipamount}%

\definecolor{hopkins-blue}{RGB}{0,45,114}
\definecolor{columbia-blue}{RGB}{185, 217, 235}
\definecolor{chicago-maroon}{RGB}{128,0,0}
\definecolor{northwestern-purple}{RGB}{82,0,99}
\definecolor{cornell-red}{RGB}{179,27,27}
\definecolor{lawngreen}{RGB}{0,250,154}
\definecolor{gray}{RGB}{192,192,192}
\definecolor{AccentDeep}{RGB}{25, 55, 95}
\hypersetup{
  colorlinks=true,
  linkcolor=chicago-maroon,
  citecolor=chicago-maroon,
  urlcolor=chicago-maroon,
}

\usepackage{xcolor}

\ECRepeatTheorems

\MANUSCRIPTNO{}
	
	\RUNAUTHOR{Singh, Ghosh, and Dai}
	 \RUNTITLE{Skill Investment Under Improving AI and Worker Mobility}

\TITLE{Managing the Human Fallback: Skill Investment Under Improving AI and Worker Mobility
}

\ARTICLEAUTHORS{%
   \AUTHOR{Simrita Singh$^\ast$  \hspace{0.25in}  Naireet Ghosh$^\dagger$ \hspace{0.25in}  Tinglong Dai$^\ddagger$}
   \smallskip
   \AFF{$^\ast$Leavey School of Business, Santa Clara University, Santa Clara, California, 95053 \href{mailto:ssingh17@scu.edu}{\tt ssingh17@scu.edu}\\$^\dagger$Naveen Jindal School of Management, University of Texas at Dallas, Richardson, Texas, 75080 \href{mailto:naireet.ghosh@utdallas.edu}{\tt naireet.ghosh@utdallas.edu}\\ $^\ddagger$Carey Business School, Johns Hopkins University, Baltimore, Maryland 21202, \href{mailto:dai@jhu.edu}{\tt dai@jhu.edu}}
}

\ABSTRACT{%
When firms deploy autonomous AI, they must decide how much work to leave to the system and how much to keep workers engaged. This decision affects current output and future human capital. We develop a parsimonious two-period model in which AI may outperform the worker when it functions, but may fail with positive probability. A firm chooses worker engagement; engagement lowers current output for below-benchmark workers, but changes future skill through learning and erosion. We distinguish two dimensions of AI progress: capability, the system's output when it works, and reliability, the probability that it works. In a single-firm benchmark, engagement is valuable only as fallback investment. The firm engages the least-skilled workers most, because they have the largest skill gaps and are least costly to bring toward a useful fallback level. With worker mobility, engagement also affects labor-market sorting: workers prefer jobs that build more valuable skill trajectories. This sorting motive targets higher-skill workers near the AI frontier, where skill gains are more valuable and engagement is less costly. Mobility can therefore reverse the engagement pattern, shifting investment from the least-skilled toward the most-skilled workers below the AI benchmark. Mobility also reshapes how AI progress affects engagement: greater capability raises engagement by increasing the value of the skill trajectory a firm offers, whereas greater reliability can raise or lower it because it reduces fallback need while also changing learning opportunities. Under worker mobility, human--AI work design becomes a problem of human-capital investment, in which allocating work today shapes future skill.
}

\KEYWORDS{Human--AI interaction, service operations, human capital, workforce management}

\begin{document}
\maketitle

\section{Introduction}\label{sec:intro}
As of 2026, about three-quarters of new code at Google is generated by AI and approved by engineers, up from about half in late 2025 \citep{pichai2026cloudnext}. AI writes the code; a person reviews it and answers for it \citep{linuxkernel2026coding}. When the code is wrong or insecure, when a request falls outside what the model has seen, or when the assistant is unavailable, an engineer must step in and do the work unaided. A firm that deploys an improving AI must decide how much human skill to keep in reserve for the moments AI falls short.

The same division of labor extends to other settings in which a capable but imperfect AI shares a task with a person. Waymo's vehicles complete rides with no driver, at crash rates below human benchmarks, yet a vehicle that meets an ambiguous road calls a remote specialist for guidance \citep{kusano2024waymo,waymo2024fleetresponse}. A radiologist adjudicates what an algorithm has flagged, and an underwriter signs the risk a model has scored. In each case, AI absorbs more of the routine work, and human skill is most valuable exactly where AI reaches the limits of its competence, limits that arise less and less often as that absorption proceeds.

It is tempting to view this division of labor as a transitional arrangement on the way to production with little human involvement. Popular accounts take that view: Yuval Noah Harari warns of a coming ``useless class'' \citep{harari2017useless}, and Sam Altman predicts that ``whole classes of jobs'' may disappear \citep{altman2025singularity}. For the firm itself, the decision is subtler. Human skill remains valuable because AI is imperfect: the worker is the fallback when AI fails, faces a case outside its domain, or is unavailable. But the skill is portable, so part of the return to preserving it accrues to the worker or another employer; and engaging the worker is costly, because AI often does this work better than the worker would. Reducing engagement thus raises current output but erodes the skill the firm may later need. The same progress that makes AI less costly to lean on also makes the skill it displaces harder to sustain. We study how this friction shapes worker engagement: the extent to which a firm keeps the worker actively involved in AI-assisted production.\footnote{Institutions raise engagement through concrete instruments: AI-free settings, a minimum number of cases handled without AI before access is granted, staged access, and protocols that have the worker commit to a judgment before consulting AI \citep{keren2026expertise}. Shell, for example, requires early-career employees to frame a problem before using AI \citep{bcg2026deskilling}.} Engaging the worker lowers output today but preserves skill valued both inside the firm and in the labor market.

A worker's human capital is not fixed; sustained reliance on AI can weaken it. In software, programmers who leaned on AI to learn a new library gained little speed on average yet ended up worse at reading and debugging code unaided \citep{shen2026skillformation}. In education, high school students given an unguarded AI tutor solved more problems while they had it, but once it was withdrawn they scored below classmates who had never used it \citep{bastani2025genai}. In medicine, after routine exposure to AI-assisted colonoscopy, experienced endoscopists' unassisted adenoma-detection rate fell from 28.4\% to 22.4\% \citep{budzyn2025endoscopist}. In a survey of business leaders, about half already observe deskilling, and more than 60\% expect it to become a real threat within three to five years \citep{bcg2026deskilling}. Engagement is costly today, but disengagement erodes the human capital future production may need.

Workers, in turn, value the skill a job builds. Career advancement, learning, and skill development have become central to whether they take, keep, or leave a job \citep{gallup2024upskilling,pew2022quit}. A job that routes most of its work to AI builds little human capital, and a mobile worker has reason to leave it for one that builds more.

This matters most where firms cannot differentiate on pay. When wages for comparable roles are standardized by salary bands, internal-equity norms, and external market benchmarking \citep{baker1988compensation,mas2017transparency}, a firm cannot simply outbid a rival for a worker who could leave; what it can offer instead is the skill trajectory the job builds. Engagement, the same lever that governs current output, then becomes the instrument through which a firm competes for mobile workers.

We develop a parsimonious two-period model of skill investment when an improving AI shares a task with a mobile worker. A firm chooses how actively to involve a worker in an AI-assisted task; engagement shapes current output and, through learning-by-doing and erosion, the worker's future skill. We model two attributes of AI along which it improves: \emph{capability}, how good its output is when it functions, and \emph{reliability}, how often it functions as intended. Both are visible in the settings above: a more capable system handles more of the road, or more of the codebase, on its own, whereas a more reliable one fails or defers to a person less often. We focus on below-benchmark workers, whose unaided output falls short of what AI delivers; for them engagement lowers current output, because AI is faster or more accurate, but it builds skill the firm may later need. Models of automation allocate tasks between people and machines but take the skill distribution as fixed \citep{Acemoglu2018,Autor2003}; here engagement reshapes it.

A firm engages such a worker, despite the current cost, because engagement builds the worker's skill. Engagement today sets the skill the worker carries forward, so the firm weighs current output against a future in which that skill may be its fallback, its draw in the labor market, or both. Without mobility, that skill is worth preserving as a fallback for when AI fails. Under mobility, two ex ante identical firms set engagement policies and workers sort across them by the skill trajectories those policies imply; engagement also attracts workers, whose skill the firm cannot fully keep.

Our analysis yields three findings. First, the two motives target opposite ends of the skill range. Without mobility, the fallback motive drives engagement toward the lowest-skilled workers: they sit furthest below AI, so they have the most room to grow, and a unit of engagement builds the most skill. Mobility adds a sorting motive, strongest for higher-skilled workers, for whom added skill is worth the most in the labor market and so does the most to draw them. The same skill is a fallback where workers are weakest and a draw where they are strongest, so which workers a firm engages most can reverse, from the bottom of the skill range to the top, once workers are mobile. This is the engagement reversal: the firm's lever does not change, but the workers it engages most can shift from the least- to the most-skilled as the labor market opens.

Second, the two dimensions of AI progress can move engagement in opposite directions. Reliability sets how often human skill is called on. Greater reliability makes the fallback less likely to be needed, weakening one reason to engage; but workers build skill best with a system that mostly works yet occasionally fails, so engagement can be highest at intermediate reliability. A system that never fails leaves the worker nothing to stay ready for, while one that fails often is barely worth deploying; the room to build skill lies in between. Capability does not by itself change how often workers are needed for fallback, but under mobility a more capable AI widens the gap from which workers learn, strengthening the trajectory a firm can offer. Treating AI progress as a single index conflates channels that can point in opposite directions.

Third, mobility reshapes engagement relative to the no-mobility benchmark. Because the skill a firm builds is general, it cannot capture the full return, so mobility should depress investment, as \citet{Becker1962} first argued.\footnote{In Becker's analysis of general (fully portable) training, a competitive firm bears none of its cost: because a trained worker can quit and command the now-higher marginal product at any employer, wages rise to match it and the firm recoups nothing. As Becker puts it, ``the cost as well as the return from general training would be borne by trainees, not by firms'' (\citealp{Becker1962}, p.~13). The worker therefore finances such training through lower wages while training and captures its return afterward; only firm-specific training is shared between firm and worker.} We prove that this holds when workers respond weakly to differences in skill trajectories: engagement falls below the single-firm benchmark. When they respond strongly, the sorting motive can instead reverse the timing of engagement, so that mobility need not uniformly discourage investment in general skill. Mobility can also break the symmetry between ex ante identical firms, with one investing in skill while the other draws on the shared labor pool.

To our knowledge, this is among the first analytical models in which a firm's engagement with an improving AI shapes the skill of mobile workers. It identifies a sorting motive for skill investment, absent from existing work on AI-assisted production, by which firms differentiate through the skill a job builds when they cannot compete on pay \citep{bastani2025contracting,caosun2026augmentation,dai2025enterpriseAI,lu2025tomlin,siderius2026useit,xu2025orgstructure}.

Together, these results recast the design of human--AI work as a human-capital problem as much as an operational one. Wherever an improving but imperfect AI shares a task with a mobile worker, whether in autonomous driving, software, radiology, underwriting, contact centers, or legal review, the firm's engagement choice governs not only today's output but the skill it will have on hand tomorrow and the workers it can keep.

The remainder of the paper is organized as follows. \cref{sec:literature} reviews the related literature. \cref{sec:model} presents the model. \cref{sec:monopoly} analyzes the single-firm benchmark. \cref{sec:competition} studies how worker mobility reshapes engagement. \cref{sec:discussion} discusses implications and concludes.

\section{Related Literature}\label{sec:literature}

Our paper connects two questions that are often treated separately. The first is how automation changes the use and preservation of human skill. The second is how firms invest in human capital. AI-assisted work brings these questions together. When the system outperforms the worker, engagement is costly today. Yet engagement may also preserve skill that the firm needs as fallback and that the worker values in the labor market. Our contribution is to model this tradeoff and show how worker mobility changes both the level of engagement and the workers to whom engagement is directed.

The first question, how automation changes the use and preservation of human skill, has been studied from several angles. The most direct is AI-induced \emph{deskilling}: more reliable systems erode the human capabilities needed when those systems fail. \citet{bainbridge1983} identifies the ``ironies of automation,'' in which more reliable systems leave operators less able to intervene when failures occur, and \citet{parasuraman1997} formalize the related taxonomy of automation misuse, disuse, and abuse. Modern AI generates the same pattern: \citet{bastani2025genai} find unrestricted generative-AI access improves high-school students' performance during practice but lowers exam scores once access is removed; \citet{poulidis2025selfregulated} show learners over-request help even when they understand its long-term cost; and \citet{dellacqua2023} find AI raises consultants' productivity within its frontier but degrades it outside, because workers anchor to AI output. Clinical evidence points the same way: endoscopists' unassisted adenoma-detection declined after routine AI-assisted colonoscopy \citep{budzyn2025endoscopist}, and \citet{abdulnour2025nejm}, \citet{natali2024deskilling}, and \citet{shen2026skillformation} document deskilling, never-skilling, and mis-skilling in medical training and software. This literature establishes that reliance on automation can erode the skill needed in failure states, but it treats that erosion as a byproduct of automation rather than as something a firm controls. We make erosion and learning \emph{endogenous} to the firm's engagement policy and ask which workers a profit-maximizing firm keeps~engaged.

A second angle is human--AI co-production, in which a central finding is that adding humans to AI need not improve output. \citet{Vaccaro2024combinations}, in a meta-analysis of experiments, find human--AI combinations often underperform the better of human or AI alone, especially when AI is stronger. \citet{devericourt2023machine} provide an analytical explanation: when decision-makers supervise AI output, verification bias can keep them from learning whether the machine outperforms their own judgment. \citet{jamanetwork2025} document a related pattern in a randomized clinical trial: physician access to an LLM did not improve diagnostic reasoning even though the LLM alone performed well, suggesting that integrating AI advice into expert judgment is itself difficult. \citet{combn_human_ai} show a similar friction in radiology: AI predictions alone did not improve performance on average because radiologists underweighted the AI signal and did not combine it correctly with their own information, whereas contextual information improved performance. \citet{singh2025pathfinder} show that when physicians exhibit anchoring bias, AI should serve as a gatekeeper for low-risk patients and a second opinion for high-risk patients.

AI assistance also appears most helpful for less-experienced workers: \citet{brynjolfsson2025genaiwork} find generative AI raises customer-support productivity on average, with gains concentrated among less-experienced agents and small quality declines among the most skilled, and \citet{ni2024ecommerce} find a similar skill-dependent pattern in a field experiment at Alibaba. This literature studies whether and how to pair people with AI; we study the dynamic skill consequences of keeping a worker engaged. Our engagement variable is therefore distinct: not whether a worker has access to AI, but whether the worker stays actively involved in production when the AI baseline is already stronger. Keeping a below-benchmark worker engaged is operationally costly, pulling output away from the AI baseline, even as it builds skill through learning-by-doing.

Closest to our model is a growing literature on the design of human--AI work. \citet{siderius2026useit} develop a principal-agent model in which workers exert costly effort to verify imperfect AI output and show profit-maximizing compensation can be non-monotonic in AI quality. \citet{caosun2026augmentation} develop a continuous-time model in which a decision-maker chooses AI usage intensity, showing forward-looking adoption can rationally lead to long-run skill loss, an ``augmentation trap.'' \citet{bastani2025contracting} identify a ``human--AI contracting paradox'': as AI becomes more reliable, motivating human vigilance becomes prohibitively costly, so firms may prefer less reliable AI. \citet{dai2025enterpriseAI} study a principal-agent model in which firms jointly choose AI temperature and effort-contingent pay, showing endogenizing AI design can reverse classical delegation results by making temperature and worker effort complementary levers. \citet{lu2025tomlin} show fully disclosing an AI demand forecast can reduce managerial effort below what preserves the machine's value, and characterize partial disclosure as the optimal design response. \citet{xu2025orgstructure} show AI's effect on entry-level skill requirements and organizational span of control depends on whether firms deploy it as automation or augmentation. These models study verification, contracting, disclosure, or AI-use intensity, each within a single firm, with AI progress as a single dimension and the worker tied to one employer even as skill changes over the long run. We ask how these forces interact, and our model brings them together: engagement trades current production against future skill; AI improves along two distinct dimensions, capability and reliability, which affect engagement through different channels; and workers sort across firms by the skill trajectories engagement creates.

The second question, how firms invest in skill when workers are mobile, belongs to the human-capital literature. The economics of human capital, following \citet{Becker1962} and \citet{mincer1974}, establishes that skills accumulate through education and on-the-job training. \citet{acemoglu1999structure} show labor-market frictions, particularly compressed wage structures, enable firm-sponsored general training that would unravel in a frictionless market. The operations-management literature has modeled closely related workforce-learning and turnover tradeoffs with dynamic staffing and recruitment models: \citet{gans2002learningturnover} on staffing when employees learn and turn over, \citet{arlotto2014hiringretention} on hiring and retention for heterogeneous workers who learn, and \citet{whitt2006retention} on retention and contact-center performance. Workers may also sort across employers by the skill trajectories their jobs create. We contribute to this literature by identifying a sorting channel for skill investment under wage standardization: when firms cannot differentiate through pay, they differentiate through the skill trajectories their engagement policies imply. Unlike the classical prediction that mobility weakens incentives to invest in general skill, this channel can strengthen them, shifting engagement toward higher-skill workers, the opposite end from the fallback motive, and it can lead ex ante identical firms to specialize.
\section{Model}\label{sec:model}

We study a firm that, over two periods, chooses worker engagement with an improving AI system. The worker is below the AI benchmark (the worker's effective throughput is lower than that of AI when it functions), so engagement lowers current output. The value of engagement is instead dynamic: it slows erosion and can build skill that matters later as fallback capacity and, when workers are mobile, as a labor-market draw. The model therefore has three primitives: AI capability and failure risk, skill dynamics under engagement, and a wage schedule that values portable skill.

\subsection{Environment and Technology}

The model spans $T$ periods, $t=1,2,\ldots,T$. A firm employs a continuum of workers who do not interact in production, so we describe its problem for a single worker of skill $s_t\in[0,1]$, measured as the worker's unaided effective throughput on the task. In software development, for example, $s_t$ is the worker's independent rate of correct routine code completion; lower skill means slower work, more mistakes, or more rework. Production is assisted by an AI system indexed by $A_t$. The technology level determines two parameters: capability $\alpha_t \triangleq \alpha(A_t) \in (0,1)$, measured on the same normalized effective-throughput scale as worker skill $s_t$, and failure probability $\pi_t \triangleq \pi(A_t) \in (0,1)$, the probability that AI cannot be used as the autonomous producer. The path $\{A_t\}_{t=1}^T$ is exogenous and known to all agents. In each period, the firm chooses an engagement level $h_t \in [0,1]$, which sets how much of the workflow is routed through the worker rather than left to the AI alone. A higher $h_t$ means deeper human review, more hands-on work or verification, or more frequent escalation to the worker. \cref{fig:model_timing} shows the sequence of events within a period.

\begin{figure}[t]
\centering
\definecolor{timingink}{HTML}{2A2A2A}
\begin{tikzpicture}[font=\small, >={Stealth[length=2.2mm]},
   box/.style={draw=timingink!80, line width=0.5pt, fill=timingink!3,
               align=center, text width=22.5mm, minimum height=11mm,
               font=\scriptsize, inner sep=2.5pt}]

\foreach \x/\lab in {0/n1, 5/n2, 10/n3}
  \node[circle, draw=timingink, fill=white, line width=0.7pt, inner sep=1.8pt] (\lab) at (\x,0) {};
\node[below=1.5pt of n1] {$s_1$};
\node[below=1.5pt of n2] {$s_2$};
\node[below=1.5pt of n3] {$s_3$};
\draw[->, line width=0.9pt, timingink] (n1) -- node[above=1pt, font=\footnotesize]{$g(s_1,h_1;A_1)$} (n2);
\draw[->, line width=0.9pt, timingink] (n2) -- node[above=1pt, font=\footnotesize]{$g(s_2,h_2;A_2)$} (n3);
\draw[->, line width=0.9pt, dashed, timingink] (n3) -- ++(2.1,0) node[right, font=\footnotesize]{$B(s_3)$};
\node[font=\scriptsize, timingink, above=2.5pt] at (11.05,0) {post-horizon};

\draw[decorate, decoration={brace, amplitude=4pt, mirror, raise=12pt}, timingink!60]
     (n1.south) -- (n2.south) node[midway, below=17pt, font=\footnotesize]{Period 1};
\draw[decorate, decoration={brace, amplitude=4pt, mirror, raise=12pt}, timingink!60]
     (n2.south) -- (n3.south) node[midway, below=17pt, font=\footnotesize]{Period 2};

\node[box, anchor=west] (a) at (-1.35,-1.95) {AI technology\\ $A_t$ realized};
\node[box, right=3mm of a] (b) {Firm chooses\\ engagement $h_t$};
\node[box, right=3mm of b] (c) {Workers sort\\ (logit share $\sigma_t^{\,j}$)};
\node[box, right=3mm of c] (d) {Output, wages,\\ and profits};
\node[box, right=3mm of d] (e) {Skills update\\ via $g(\cdot)$};
\foreach \i/\j in {a/b,b/c,c/d,d/e} \draw[->, timingink!85, line width=0.7pt] (\i) -- (\j);
\end{tikzpicture}
\caption{Timing of the two-period model. Worker skill is carried forward
$s_1\!\to\!s_2\!\to\!s_3$; the terminal skill $s_3$ is valued post-horizon by
$B(\cdot)$, and within each period skill evolves according to
$s_{t+1}=g(s_t,h_t;A_t)$. The lower row lists the order of events within a period. The sorting step is active under worker mobility and degenerate in the single-firm benchmark.}
\label{fig:model_timing}
\end{figure}
Throughout the paper, we focus on the two-period case $T = 2$,
which is sufficient to capture the key tradeoffs (between current output and future skill, between the fallback and sorting motives, and between capability and reliability) while keeping the analysis tractable and the results interpretable.

\begin{assumption}\label{ass:tech}
The capability function $\alpha(\cdot)$ is strictly increasing and the
failure probability $\pi(\cdot)$ is strictly decreasing. The technology
path satisfies $A_2 \ge A_1$, so $\alpha_2 \ge \alpha_1$ and
$\pi_2 \le \pi_1$.
\end{assumption}

This assumption captures technological improvement along both
dimensions: more advanced AI systems are both more capable when they
function and less likely to fail. We keep capability and
reliability as separate parameters because they enter the model through
distinct channels: capability $\alpha_t$ governs the value of AI-assisted output and the size of the performance gap $\alpha_t-s_t$, whereas reliability $1-\pi_t$ determines how often AI functions and how often workers must perform the task on their own.

\subsection{AI-Assisted Production and the Engagement-Output Tradeoff}

Production depends on whether the AI system functions or fails. Let $q\in[0,1]$ denote effective throughput: the fraction of a standardized task load completed correctly in a period, measured on the same scale as capability $\alpha_t$ and worker skill $s_t$. When AI functions (is on), which occurs with probability $1-\pi_t$, the AI-alone effective throughput is $\alpha_t$, and worker engagement modifies this throughput according to
\begin{align*}
q^{\mathrm{on}}(s_t,h_t;A_t)
=
\alpha_t+\delta h_t(s_t-\alpha_t),
\end{align*}
where $\delta\in(0,1)$ measures the influence of worker intervention on AI-on production. When $s_t > \alpha_t$, engagement raises effective throughput by pulling it toward the worker's skill; when $s_t < \alpha_t$, engagement lowers effective throughput for the same reason.

When AI fails (is off), which occurs with probability $\pi_t$, the worker performs the task independently, so effective throughput equals the worker's own skill, $q^{\mathrm{off}}(s_t)=s_t$.

Let $\lambda>0$ denote the standardized task load per worker per period, and let $R>0$ denote revenue per correctly completed task. The firm's expected operational revenue per period therefore equals
\begin{align*}
S(s_t,h_t;A_t)
=
\lambda R
\left[
(1-\pi_t)\big(\alpha_t+\delta h_t(s_t-\alpha_t)\big)
+
\pi_t s_t
\right].
\end{align*}

The production effect of engagement is $\partial S(s_t,h_t;A_t)/\partial h_t = \lambda R\delta(1-\pi_t)(s_t-\alpha_t)$, so for every below-benchmark worker ($s_t<\alpha_t$) engagement strictly lowers current operational revenue. The analysis below focuses on this region, where engagement is costly today but can preserve or build skill.

\subsection{Skill Dynamics: Learning and Erosion}

Engagement changes future skill through a tradeoff between learning and erosion. When the firm leaves the task to AI and AI functions, the worker does less of the task and loses skill through passive reliance. When the firm keeps the worker engaged, the worker learns by staying involved in AI-assisted cases and by practicing independent execution in the states in which AI cannot be relied on. For a below-benchmark worker, $s_t < \alpha_t$, we capture these two forces with an erosion term, $\gamma(1-h_t)(1-\pi_t)(\alpha_t-s_t)$, and a learning term, $\phi h_t \pi_t(1-\pi_t)(\alpha_t-s_t)$. Both are proportional to the gap $\alpha_t-s_t$: workers with more room to grow learn more from engagement, but they also lose more when AI routinely does the work. The product $\pi_t(1-\pi_t)$ implies that learning is strongest at intermediate reliability. If AI never works, the worker has little useful output to learn from; if it never fails, the worker gets little fallback practice. The product form is a parsimonious reduced form for this exposure--practice complementarity; for the results most exposed to this specification, we report robustness checks that replace $\pi_t(1-\pi_t)$ with a generic learning profile $\ell(\pi_t)$.\footnote{We consider profiles such as $\ell(\pi)=\pi^a(1-\pi)^c$, $\ell(\pi)=\kappa+\pi(1-\pi)$,
$\ell(\pi)=\pi$, and $\ell(\pi)=1-\pi$, with $a,c>0$ and $\kappa\ge 0$. These alternatives capture asymmetric exposure--practice complementarity, baseline learning independent of failures, failure-practice learning, and
AI-exposure learning, respectively.} The parameters $\gamma>0$ and
$\phi>0$ measure the rates of erosion and learning, respectively.

Combining these forces yields the skill transition
\begin{align*}
s_{t+1}
=
g(s_t,h_t;A_t)
=
\begin{cases}
\max\{s_t+\Gamma(h_t;A_t)(\alpha_t-s_t),\,0\}
& \text{if } s_t<\alpha_t,\\[4pt]
s_t
& \text{if } s_t\ge\alpha_t,
\end{cases}
\end{align*}
where
\begin{align*}
\Gamma(h_t;A_t)
=
(1-\pi_t)\big[h_t(\phi\pi_t+\gamma)-\gamma\big].
\end{align*}
For workers with $s_t < \alpha_t$, $\Gamma(h_t;A_t)$ represents the net
fraction of the skill gap $\alpha_t - s_t$ that the worker closes in one period (when $\Gamma$ is positive) or the fraction by which the gap widens (when $\Gamma$ is negative). For workers with
$s_t \ge \alpha_t$, we set $s_{t+1} = s_t$, that is, we assume workers already at or above AI's capability level maintain their current skill.

Solving $\Gamma(h_t;A_t) = 0$ yields the engagement level that
preserves worker skill,
\begin{align*}
h^c(A_t) \triangleq \frac{\gamma}{\phi\pi_t + \gamma} \in (0,1).
\end{align*}
For workers with $s_t < \alpha_t$, skill declines when $h_t < h^c(A_t)$
and improves when $h_t > h^c(A_t)$. Engagement therefore functions as an investment in worker skill: for below-benchmark workers, it lowers the current effective throughput of the AI-assisted workflow, but raises future worker skill and hence future earning capacity. The threshold $h^c(A_t)$ is strictly decreasing in
$\pi_t$, and therefore rises as AI becomes more reliable; that is, more
engagement is required to maintain worker skill when failures become
rarer. This is the \emph{automation paradox}
\citep{bainbridge1983,parasuraman1997}: $\pi_t$ governs both how often
human skill is needed and how often it is exercised, so gains in AI
reliability simultaneously reduce the need for human skill and the
opportunities to preserve it. More generally, with learning $\phi h_t\ell(\pi_t)(\alpha_t-s_t)$, the threshold is $h^c_\ell(A_t)=\gamma(1-\pi_t)/[\phi\ell(\pi_t)+\gamma(1-\pi_t)]$; it rises with reliability (i.e., falls in $\pi_t$) whenever $\ell(\pi)/(1-\pi)$ is increasing in $\pi$, including $\ell(\pi)=\pi^a(1-\pi)^c$ on ranges where $a(1-\pi)>(c-1)\pi$.

\begin{assumption}\label{ass:skill}
For every AI technology level $A$,
$\phi\pi(A)(1-\pi(A)) < 1$.
\end{assumption}

\cref{ass:skill} rules out one-period leapfrogging: a worker who starts
below the AI benchmark remains below it after one period, even under full
engagement. The point of the assumption is economic as well as
technical. It keeps the analysis in the region in which engagement is
costly in current production, so any positive engagement for a
below-benchmark worker must be justified by future skill rather than by
an immediate operating gain. The assumption also rules out an
implausibly fast learning path in which one period of engagement turns a
below-benchmark worker into the new frontier. The largest fraction of
the skill gap closed in one period occurs under full engagement, where it
equals $\phi\pi(A)(1-\pi(A))$; lower engagement closes a weakly smaller
fraction because passive reliance erodes skill. Because
$\pi(1-\pi) \le 1/4$, the simpler restriction $\phi < 4$ is a
technology-independent sufficient condition. For the general learning profile, the analogous no-leapfrogging condition is $\phi\ell(\pi(A))<1$ for every AI technology level $A$.

\begin{assumption}\label{ass:skill_domain}
The initial skill level satisfies $s_1 \in (\underline{s},\,\alpha_1)$,
where
\begin{align*}
\underline{s}
\triangleq
\frac{\gamma(1-\pi_2)\alpha_2 + \gamma(1-\pi_1)\alpha_1
\bigl[1+\gamma(1-\pi_2)\bigr]}
{\bigl[1+\gamma(1-\pi_2)\bigr]\bigl[1+\gamma(1-\pi_1)\bigr]}.
\end{align*}
\end{assumption}

\cref{ass:skill_domain} keeps the zero-skill boundary from binding on
the relevant domain. Even under complete disengagement in both periods,
worker skill remains strictly positive. Because engagement only builds
skill for a below-benchmark worker, disengagement is the binding case:
the worst case is $h_1 = h_2 = 0$, and $\underline{s}$ is constructed so
that this path still stays in the interior. In particular, for every
$h_1\in[0,1]$,
\begin{align*}
s_2=g(s_1,h_1;A_1)>\underline{s}_2
\triangleq
\frac{\gamma(1-\pi_2)\alpha_2}{1+\gamma(1-\pi_2)},
\end{align*}
and hence $g(s_2,h_2;A_2)>0$ for every $h_2\in[0,1]$. As a result, the
main analysis can work with the smooth part of the skill transition
rather than carrying the $\max\{\cdot,0\}$ operator through every
comparative static.

\subsection{Worker Payoff}\label{sec:worker_utility}

Workers earn a market-clearing wage that depends on their skill
level. Because skill is general-purpose and observable, an outside labor
market pins each worker's compensation to a common wage schedule: any
firm employing a worker of skill $s_t$ pays $W(s_t)$. Standardized wages
capture settings with salary bands, internal equity constraints, and
market benchmarking \citep{baker1988compensation,mas2017transparency}, in
which firms have limited ability to offer idiosyncratic premiums. Because
firms cannot bid for workers with wages, their main lever for attracting
workers is job design, specifically how actively workers engage with
AI-assisted tasks: a policy that builds more skill leads to higher future
wages, making the firm more attractive to workers.

We maintain standardized wages throughout the main
analysis. Holding pay fixed isolates job design, and engagement in
particular, as the lever firms use to attract workers, which is the
mechanism we study. The restriction is not essential to our conclusions:
in \cref{app:bonus} we let firms attract workers with pay as
well, offering a bonus above the market wage alongside their
engagement choice, and a numerical study there shows our main
results carry over.

The firm's per-worker margin equals operational revenue minus wages,
\begin{align*}
M(s_t,h_t;A_t) = S(s_t,h_t;A_t) - W(s_t).
\end{align*}
Because the wage in period $t$ depends only on entering skill $s_t$, not on the engagement chosen within the period, engagement affects the margin only through operational revenue. Since $S(s_t,h_t;A_t)$ is affine in $h_t$
with slope $\lambda R\delta(1-\pi_t)(s_t-\alpha_t)$, the margin is
strictly decreasing in $h_t$ whenever $s_t < \alpha_t$: for
below-benchmark workers, engagement reduces current output and lowers the
firm's current margin.

Fix a realized engagement path and abstract from the idiosyncratic taste shocks introduced in the mobility model. The deterministic portion of a worker's payoff then has three components:
wages earned in period~1, discounted wages earned in period~2, and a
discounted terminal value capturing the labor-market worth of the skill
the worker carries beyond the model horizon. A worker with initial skill
$s_1$ who experiences engagement levels $h_1$ and $h_2$ has deterministic payoff
\begin{align*}
W(s_1) + \beta\, W(s_2) + \beta^2\, B(s_3),
\end{align*}
where $s_2 = g(s_1, h_1; A_1)$ and $s_3 = g(s_2, h_2; A_2)$. Here
$B:[0,1]\to\mathbb{R}_+$ is the post-horizon value of skill, which we take
to be $C^2$ on $(0,1]$, strictly increasing, and strictly convex; in our comparative
statics and numerical examples, we set $B=W$, valuing post-horizon skill
by the same wage schedule used within the model.

Because wages are determined by the market and do not depend on the
firm's engagement policy, this deterministic payoff differs across firms only
through the skill trajectory each firm's policy creates. In the single-firm
benchmark of \cref{sec:monopoly}, a single employer operates, so the
worker's payoff does not enter the firm's engagement decision. When
workers are mobile (\cref{sec:competition}), the terminal
value drives sorting: a firm whose engagement policy yields higher future
skill, and hence higher future wages and terminal value, attracts more
workers.

\begin{assumption}\label{ass:wages_revenue}
The wage function $W : [0,1] \to \mathbb{R}_+$ is $C^2$ on $(0,1]$,
strictly increasing, strictly convex, with $W(0) = 0$. The revenue scale
satisfies $\lambda R > \max_{t=1,2} W(\alpha_t)/\alpha_t$.
\end{assumption}

Convexity of $W$ is consistent with the Mincerian earnings function
\citep{mincer1974} and with evidence on convex returns to education
\citep{psacharopoulos2018}; the power wage $W(s) = s^b$ with $b > 1$, used
in our numerical examples, satisfies these conditions. The revenue
condition comes from setting $h_t = 0$ and $s_t = \alpha_t$ in the revenue
expression: $S(\alpha_t, 0; A_t) = \lambda R \alpha_t$, so $M = \lambda R
\alpha_t - W(\alpha_t)$, and $M \ge 0$ requires $\lambda R >
W(\alpha_t)/\alpha_t$. This is the tightest case; the margin is positive
for all other $(s_t, h_t)$ in the relevant domain. Employing a
below-benchmark worker is therefore always profitable, so the analysis
concerns how much to engage the worker, not whether to employ one.

The mobility variables $\eta$ and $\sigma_t^j$ are introduced formally in \cref{sec:competition}. The model primitives have direct empirical counterparts in task-level workflow data, and their measurement and calibration are detailed in \cref{app:measurement}. A summary of the notation appears in \cref{tab:notation} of the online appendix.

\section{The Single-Firm Benchmark}\label{sec:monopoly}

We begin with the single-firm benchmark, which isolates the fallback motive. Without worker mobility, engagement matters only because it changes the skill the firm will have on hand later when AI fails. This benchmark makes clear what worker mobility adds in \cref{sec:competition}: sorting and imperfect appropriability. We first study a one-period benchmark and then specialize
to the two-period setting.

\subsection{Single Period: No Skill Investment}
\label{sec:one_period_monopoly}

Consider a single period with AI technology level $A$. For a worker of
skill $s$, the single firm chooses engagement $h\in[0,1]$ to maximize
the per-worker margin $M(s,h;A) = S(s,h;A) - W(s)$. Because the wage
$W(s)$ does not depend on $h$, maximizing the margin is equivalent to
maximizing operational revenue.

That revenue is affine in $h$ with slope
$\lambda R\delta(1-\pi(A))(s-\alpha(A))$, so engagement raises it only
when the worker's skill exceeds AI capability. The single firm therefore
engages a worker fully when $s>\alpha(A)$ and not at all when
$s<\alpha(A)$, and is indifferent at $s=\alpha(A)$. With no future skill
to protect, it disengages every below-benchmark worker, those with
$s<\alpha(A)$. Engagement here is purely static: the firm builds no
skill, because no future state remains in which that skill could pay
off.

\subsection{Two Periods: The Fallback Motive}
\label{sec:two_period_monopoly}

We now set the horizon to two periods, $t \in \{1,2\}$, with AI technology
path $(A_1, A_2)$. The single firm chooses $(h_1^{\mathrm{sf}}, h_2^{\mathrm{sf}}) \in [0,1]^2$
to maximize total discounted profit. We focus on workers with initial
skill $s_1 < \alpha_1$, because these are the workers for whom engagement involves a tradeoff: it lowers current production but preserves future skill. By the no-leapfrogging condition,
$g(s_1, h_1; A_1) < \alpha_1 \le \alpha_2$ for every $h_1 \in [0,1]$,
so engagement is operationally costly in both periods, and any positive first-period engagement must reflect the firm's goal of preserving fallback skill.
In the terminal period, the single firm has no future to protect, so the
static logic above applies directly: it disengages every below-benchmark
worker, setting $h_2^{\mathrm{sf}} = 0$ when $s_2 < \alpha_2$ (and is indifferent at
$s_2 = \alpha_2$). The period-1 problem is therefore
\begin{align*}
\max_{h_1^{\mathrm{sf}} \in [0,1]}\;
M(s_1, h_1^{\mathrm{sf}}; A_1)
+ \beta\, M\!\bigl(g(s_1, h_1^{\mathrm{sf}}; A_1),\, 0;\, A_2\bigr).
\end{align*}
The objective is strictly concave in $h_1^{\mathrm{sf}}$, so the optimum is unique. 
The first-order condition balances the benefit of building worker skill against its cost. The benefit is the extra operational revenue a more skilled worker generates as a fallback when AI fails; the cost is the current production forgone plus the higher wage the firm must later pay. At an interior optimum, this gives $W'(g(s_1,h_1;A_1))=\lambda R\pi_2-\lambda R\delta/[\beta(\phi\pi_1+\gamma)]$.
When the right-hand side lies in the range of marginal wages,
\begin{align}\label{eq:interior_target_condition}
\lambda R \pi_2 - \frac{\lambda R \delta}{\beta(\phi\pi_1 + \gamma)}
\in \bigl(W'(0),\, W'(\alpha_1)\bigr),
\end{align}
the right-hand side has a unique preimage under $W'$, the \emph{target skill} $s^*$:
\begin{align}\label{eq:s_star}
s^* \triangleq
(W')^{-1}\!\left(
\lambda R \pi_2 - \frac{\lambda R \delta}{\beta(\phi\pi_1 + \gamma)}
\right) \in (0, \alpha_1),
\end{align}
the period-2 skill the single firm would most like the worker to reach. If the right-hand side is weakly below $W'(0)$, then the objective is decreasing in $h_1^{\mathrm{sf}}$, so the firm sets $h_1^{\mathrm{sf}}(s_1)=0$ for every below-benchmark worker. If the right-hand side is weakly above $W'(\alpha_1)$, then the objective is increasing in $h_1^{\mathrm{sf}}$, so the firm sets $h_1^{\mathrm{sf}}(s_1)=1$ for every below-benchmark worker. Condition~\cref{eq:interior_target_condition} rules out these two boundary cases and focuses attention on the case in which the target skill $s^*$ is well defined. Even then, the target may lie outside a particular worker's attainable interval $[g(s_1,0;A_1),g(s_1,1;A_1)]$, giving the worker-specific corner cases characterized below.
\begin{proposition}
\label{prop:monopoly_period1}
For each $s_1 \in (\underline{s}, \alpha_1)$, the firm's period-1
engagement $h_1^{\mathrm{sf}}(s_1)$ is unique and drives period-2 skill toward the
target $s^*$, found by comparing $s^*$ with the worker's attainable
interval $[g(s_1,0;A_1),\,g(s_1,1;A_1)]$:
\begin{enumerate}
  \item[\textup{(i)}] if $s^*\ge g(s_1,1;A_1)$, so even full engagement
        falls short of the target, the firm engages fully,
        $h_1^{\mathrm{sf}}(s_1)=1$;
  \item[\textup{(ii)}] if $g(s_1,0;A_1)<s^*<g(s_1,1;A_1)$, the target is
        attainable and engagement is interior, setting period-2 skill
        $g(s_1,h_1^{\mathrm{sf}}(s_1);A_1)=s^*$,
        \begin{align}\label{eq:h_interior_closed}
        h_1^{\mathrm{sf}}(s_1) =
        \frac{s^* - g(s_1, 0; A_1)}
        {(1-\pi_1)(\phi\pi_1 + \gamma)(\alpha_1 - s_1)};
        \end{align}
  \item[\textup{(iii)}] if $s^*\le g(s_1,0;A_1)$, so even full atrophy
        leaves the worker above the target, the firm does not engage,
        $h_1^{\mathrm{sf}}(s_1)=0$.
\end{enumerate}
In the interior region, $h_1^{\mathrm{sf}}(s_1)$ is strictly decreasing in $s_1$.
\end{proposition}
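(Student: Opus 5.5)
The plan is to reduce the period-1 problem to a one-dimensional concave maximization in the period-2 skill $s_2$ itself, and then read off the three cases from where $s^*$ falls relative to the attainable interval.

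\textbf{Reduction to a problem in $s_2$.} By \cref{ass:skill_domain}, the $\max\{\cdot,0\}$ in $g$ never binds. Hence
\[
s_2=g(s_1,h_1;A_1)=s_1+(1-\pi_1)\big[h_1(\phi\pi_1+\gamma)-\gamma\big](\alpha_1-s_1)
\]
is affine and strictly increasing in $h_1$, with slope $k\triangleq(1-\pi_1)(\phi\pi_1+\gamma)(\alpha_1-s_1)>0$. By \cref{ass:skill} we have $s_2<\alpha_1\le\alpha_2$, so $h_2^{\mathrm{sf}}=0$ is optimal and the continuation value is $M(s_2,0;A_2)=\lambda R[(1-\pi_2)\alpha_2+\pi_2 s_2]-W(s_2)$.

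The period-1 margin is affine in $h_1$ with slope $-\lambda R\delta(1-\pi_1)(\alpha_1-s_1)$. Substituting $h_1=(s_2-g(s_1,0;A_1))/k$, the objective becomes, up to constants,
\[
F(s_2)=-\frac{\lambda R\delta}{\phi\pi_1+\gamma}\,s_2+\beta\big[\lambda R\pi_2 s_2-W(s_2)\big],
\]
to be maximized over $s_2\in[g(s_1,0;A_1),g(s_1,1;A_1)]$. The factor $(1-\pi_1)(\alpha_1-s_1)$ cancels, and this cancellation is the key step. $F$ is strictly concave because $W$ is strictly convex, so the maximizer is unique. Since $h_1\mapsto s_2$ is a bijection onto the interval, uniqueness of $h_1^{\mathrm{sf}}$ follows.

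\textbf{Case analysis.} We have $F'(s_2)=\beta\big[\lambda R\pi_2-\lambda R\delta/(\beta(\phi\pi_1+\gamma))-W'(s_2)\big]$, which vanishes exactly at $s_2=s^*$ under \cref{eq:interior_target_condition}. By strict convexity $W'$ is strictly increasing, so $F'>0$ for $s_2<s^*$ and $F'<0$ for $s_2>s^*$. The maximizer of a strictly concave function over an interval is therefore the projection of $s^*$ onto $[g(s_1,0;A_1),g(s_1,1;A_1)]$. This gives the three cases:
\begin{itemize}
\item[(i)] if $s^*\ge g(s_1,1;A_1)$, then $F$ is increasing on the whole interval and $h_1=1$;
\item[(iii)] if $s^*\le g(s_1,0;A_1)$, then $F$ is decreasing on the interval and $h_1=0$;
\item[(ii)] otherwise $s_2=s^*$, and inverting the affine map gives \cref{eq:h_interior_closed}.
\end{itemize}
One small point to check is that $s^*\in(0,\alpha_1)$ need not lie inside the interval; the projection argument handles this regardless.

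\textbf{Monotonicity.} This is the only part needing care. Write $g(s_1,0;A_1)=s_1-c(\alpha_1-s_1)$ with $c=\gamma(1-\pi_1)$. Then
\[
h_1^{\mathrm{sf}}=\frac{1}{(1-\pi_1)(\phi\pi_1+\gamma)}\left[\frac{s^*-\alpha_1}{\alpha_1-s_1}+1+c\right].
\]
Since $s^*<\alpha_1$, the numerator $s^*-\alpha_1$ is negative. As $s_1$ increases, $\alpha_1-s_1$ decreases, so $(s^*-\alpha_1)/(\alpha_1-s_1)$ becomes more negative. Hence $h_1^{\mathrm{sf}}$ is strictly decreasing in $s_1$ on the interior region. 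Equivalently, the derivative is $(s^*-\alpha_1)/\big[(1-\pi_1)(\phi\pi_1+\gamma)(\alpha_1-s_1)^2\big]<0$. The fact that $s^*<\alpha_1$, guaranteed by \cref{eq:interior_target_condition}, is exactly what makes this sign work.
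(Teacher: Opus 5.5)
Your proof is correct and follows essentially the same route as the paper. The paper works directly in $h_1$, factors $(1-\pi_1)(\alpha_1-s_1)$ out of $V'(h_1)$ and applies the KKT conditions, which is exactly your change of variables to $s_2$ followed by projecting $s^*$ onto the attainable interval; your monotonicity step, with derivative $(s^*-\alpha_1)/\bigl[(1-\pi_1)(\phi\pi_1+\gamma)(\alpha_1-s_1)^2\bigr]<0$, coincides with the paper's.
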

\cref{prop:monopoly_period1} shows that the firm follows a skill-targeting rule: a single target $s^*$ summarizes the period-2 skill it wants each worker to reach, and it engages just enough to get there. Whether a worker is engaged fully, partially, or not at all depends only on where the target sits relative to the skill the worker can attain.

This characterization can equivalently be stated in terms of
initial-skill thresholds. Define
\begin{align}\label{eq:s_L_s_H}
s_L \triangleq \frac{s^* - \phi\pi_1(1-\pi_1)\alpha_1}
{1 - \phi\pi_1(1-\pi_1)},
\qquad
s_H \triangleq \frac{s^* + \gamma(1-\pi_1)\alpha_1}
{1 + \gamma(1-\pi_1)}.
\end{align}
Full engagement arises for $s_1 \le s_L$, no engagement for
$s_1 \ge s_H$, and interior engagement for $s_1 \in (s_L, s_H)$,
intersected with the feasible domain
$s_1 \in (\underline{s}, \alpha_1)$. Equivalently, the policy can be
summarized by the thresholds $s_L$ and $s_H$; a tabular summary is
reported in \cref{app:skill_transition_table}.
\begin{corollary}
\label{cor:reliability_vs_capability}
When the target skill $s^*$ is interior:

\medskip

\noindent\textup{(i)} The target skill satisfies
\begin{align*}
\frac{\partial s^*}{\partial \pi_2}
= \frac{\lambda R}{W''(s^*)} > 0,
\qquad
\frac{\partial s^*}{\partial \alpha_2} = 0,
\end{align*}
and the thresholds $s_L,\, s_H$ in~\cref{eq:s_L_s_H} are strictly
increasing in $\pi_2$ and independent of $\alpha_2$.

\medskip

\noindent\textup{(ii)} For any fixed $s_1 \in (\underline{s}, \alpha_1)$,
the period-1 engagement $h_1^{\mathrm{sf}}(s_1)$ characterized in
\cref{prop:monopoly_period1} is weakly increasing in $\pi_2$
on the feasible domain $(0, \pi_1]$, with interior engagement strictly
increasing in $\pi_2$. As $\pi_2$ falls from $\pi_1$ toward zero, the
worker's period-1 engagement either remains constant or decreases.
\end{corollary}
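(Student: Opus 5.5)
Part (i) follows from the implicit function theorem applied to the defining equation of the target skill in \cref{eq:s_star},
\begin{align*}
W'(s^*) = \lambda R\pi_2 - \frac{\lambda R\delta}{\beta(\phi\pi_1+\gamma)}.
\end{align*}
Under the interiority condition \cref{eq:interior_target_condition}, $s^*\in(0,\alpha_1)$. Since $W$ is $C^2$ and strictly convex on $(0,1]$ by \cref{ass:wages_revenue}, we have $W''(s^*)>0$ (or at least $W'$ is strictly increasing; I will assume $W''>0$ at $s^*$, as the displayed formula requires). Differentiating both sides in $\pi_2$ gives $W''(s^*)\,\partial s^*/\partial\pi_2=\lambda R$, which is the stated derivative and is positive. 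The right-hand side contains no $\alpha_2$; neither $\pi_1$ nor $\gamma,\phi,\delta,\beta$ depend on period-2 capability. Hence $\partial s^*/\partial\alpha_2=0$. The thresholds in \cref{eq:s_L_s_H} are affine in $s^*$ with coefficients depending only on period-1 primitives $(\pi_1,\alpha_1,\phi,\gamma)$. The slope of $s_L$ is $1/(1-\phi\pi_1(1-\pi_1))$, which is positive by \cref{ass:skill}. The slope of $s_H$ is $1/(1+\gamma(1-\pi_1))>0$. The chain rule then gives strict monotonicity in $\pi_2$ and independence from $\alpha_2$.

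For part (ii), I fix $s_1$ and use the three-region characterization of \cref{prop:monopoly_period1}. The key observation is that the attainable endpoints $g(s_1,0;A_1)$ and $g(s_1,1;A_1)$ depend only on $(s_1,A_1)$ and not on $\pi_2$. The policy is a function of $s^*$ alone: it equals $1$ above the upper endpoint, $0$ below the lower endpoint, and is given by \cref{eq:h_interior_closed} in between. The interior formula is affine in $s^*$ with positive slope $1/[(1-\pi_1)(\phi\pi_1+\gamma)(\alpha_1-s_1)]$, and it matches the corner values $0$ and $1$ at the endpoints, since $g$ is affine in $h_1$. So $h_1^{\mathrm{sf}}$ is a continuous, weakly increasing clamp of $s^*$. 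Composing with the strictly increasing map $\pi_2\mapsto s^*$ from part (i) gives weak monotonicity in $\pi_2$, and strict monotonicity wherever engagement is interior. The final sentence is a restatement: lowering $\pi_2$ from $\pi_1$ weakly lowers $h_1^{\mathrm{sf}}$.

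The main obstacle is domain bookkeeping. As $\pi_2$ ranges over $(0,\pi_1]$, the right-hand side of \cref{eq:s_star} may exit $(W'(0),W'(\alpha_1))$, and $\underline{s}$ in \cref{ass:skill_domain} depends on $\pi_2$. I handle the first issue by noting that, outside the interiority range, the paper's boundary cases give $h_1^{\mathrm{sf}}=0$ (right-hand side low, i.e.\ small $\pi_2$) or $h_1^{\mathrm{sf}}=1$ (right-hand side high). These are exactly the limits of the clamp, so monotonicity extends across the whole range. One can also define $s^*$ extended by $0$ or $\alpha_1$ as a monotone function of $\pi_2$. For the second issue, I restrict to $\pi_2$ values for which the fixed $s_1$ remains feasible. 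Feasibility only affects whether the zero-skill boundary binds, and the clamp argument does not otherwise use $\underline{s}$.
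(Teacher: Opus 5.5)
Your proposal is correct and follows essentially the same route as the paper: implicit differentiation of the target-skill condition for part~(i), and for part~(ii) the fact that the attainable endpoints $g(s_1,0;A_1)$ and $g(s_1,1;A_1)$ do not depend on $\pi_2$ while $s^*$ rises with it. This is exactly the paper's argument that $s_L(\pi_2)$ and $s_H(\pi_2)$ shift upward, so a worker can move only from no engagement to interior to full engagement. Your clamp formulation, together with your explicit handling of $\pi_2$ values where $s^*$ leaves the interior range or where $\underline{s}$ shifts, is a slightly cleaner packaging of the same idea and fills in bookkeeping the paper leaves implicit.
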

As AI becomes more reliable, the value of preserving fallback skill declines, and the firm withdraws engagement: if AI will rarely fail in the future, workers will rarely need to step in, so the firm has less reason to maintain worker skill today. Future capability, by contrast, has no such effect. Mathematically, $\alpha_2$ enters the continuation payoff only as an additive constant, so it drops out of the first-order condition for period-1 engagement. The reason is intuitive: because the firm disengages below-benchmark workers in the terminal period, worker skill affects period-2 production only when AI fails. A more capable AI generates higher effective throughput when it functions, but that output does not depend on the worker's skill. The marginal value of worker skill in period 2 is therefore governed by how often AI fails ($\pi_2$), not by AI's performance level when it functions ($\alpha_2$). \cref{fig:skill_targeting} plots this policy: the next-period skill it produces against the worker's current skill.

\begin{figure}[ht]
\centering
\includegraphics[width=0.6\textwidth]{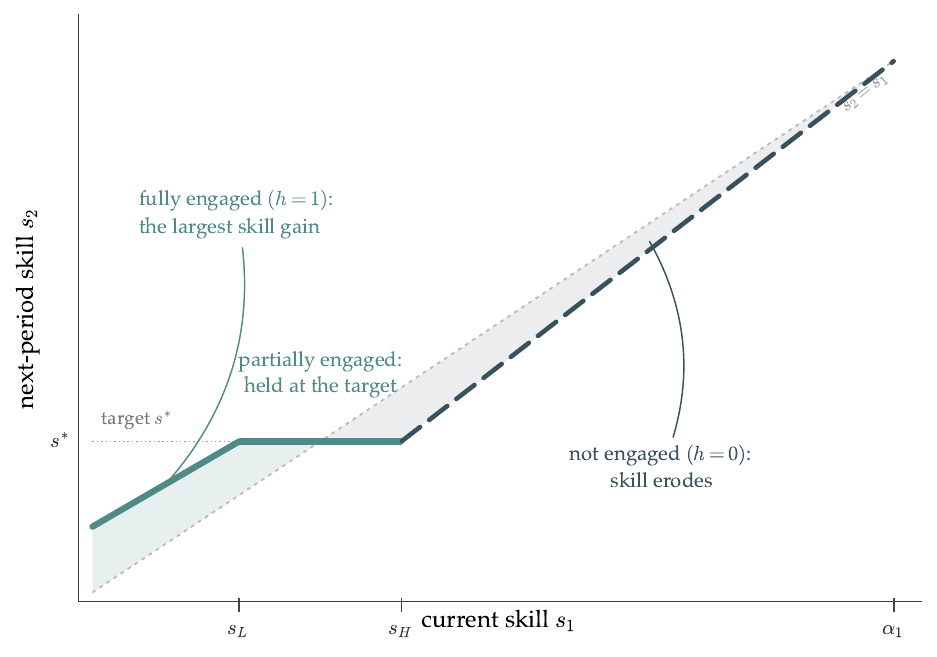}
\caption{Single-firm engagement targets the least-skilled workers. The figure traces the next-period skill $s_2$ produced by optimal period-1 engagement against current skill $s_1$. The firm engages the lowest-skilled workers fully ($h_1^{\mathrm{sf}}=1$), building skill toward the target $s^*$; intermediate workers partially, holding skill at $s^*$; and the highest-skilled not at all ($h_1^{\mathrm{sf}}=0$), letting skill erode toward the $45^\circ$ line. The shaded band is the skill built by engagement, widest for the least-skilled. Parameter values: $W(s)=s^{1.2}$, $\alpha_1=0.5$, $\alpha_2=0.6$, $\pi_1=0.45$, $\pi_2=0.35$, $\phi=0.5$, $\gamma=0.3$, $\delta=0.05$, $\lambda R=3.65$, $\beta=0.95$.}
\label{fig:skill_targeting}
\end{figure}
This targeting of the least-skilled is not special to the parameters of \cref{fig:skill_targeting}.

\begin{proposition}
\label{prop:fallback_targets_bottom}
In the absence of worker mobility, the firm's engagement is nonincreasing in initial skill: it allocates the most engagement to the lowest-skilled workers in the feasible domain, partial engagement to intermediate workers so as to bring next-period skill exactly to $s^*$, and no engagement to workers whose future skill remains above the target even under atrophy.
\end{proposition}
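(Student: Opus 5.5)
This follows from \cref{prop:monopoly_period1} once we show the three regions are ordered in $s_1$. The key object is the attainable interval $[g(s_1,0;A_1),\,g(s_1,1;A_1)]$ as a function of $s_1$. On the feasible domain, the $\max\{\cdot,0\}$ does not bind (\cref{ass:skill_domain}). There
\begin{align*}
g(s_1,h;A_1)=s_1+\Gamma(h;A_1)(\alpha_1-s_1)=\bigl(1-\Gamma(h;A_1)\bigr)s_1+\Gamma(h;A_1)\alpha_1 .
\end{align*}
\cref{ass:skill} gives $\Gamma(h;A_1)\le \phi\pi_1(1-\pi_1)<1$, so both endpoints are strictly increasing affine functions of $s_1$. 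Setting $g(s_1,1;A_1)=s^*$ and $g(s_1,0;A_1)=s^*$ recovers exactly the thresholds $s_L<s_H$ in \cref{eq:s_L_s_H}. The inequality $s_L<s_H$ holds because $g(s,1;A_1)>g(s,0;A_1)$ for every $s<\alpha_1$. Hence case (i) of \cref{prop:monopoly_period1} is the set $s_1\le s_L$, case (ii) is $s_L<s_1<s_H$, and case (iii) is $s_1\ge s_H$. Each is intersected with $(\underline{s},\alpha_1)$, and some may be empty.

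Next, I show monotonicity within and across regions. On $s_1\le s_L$ engagement equals $1$, and on $s_1\ge s_H$ it equals $0$. On the interior region, \cref{prop:monopoly_period1} already states that $h_1^{\mathrm{sf}}$ is strictly decreasing. I would also verify this directly from \cref{eq:h_interior_closed}. The numerator $s^*-g(s_1,0;A_1)$ is positive and decreasing in $s_1$, and the denominator is positive and decreasing in $s_1$. That alone does not settle the sign of the ratio, so I rewrite the numerator as $s^*-\alpha_1+(1+\gamma(1-\pi_1))(\alpha_1-s_1)$. This gives
\begin{align*}
h_1^{\mathrm{sf}}(s_1)=\frac{1+\gamma(1-\pi_1)}{(1-\pi_1)(\phi\pi_1+\gamma)}-\frac{\alpha_1-s^*}{(1-\pi_1)(\phi\pi_1+\gamma)(\alpha_1-s_1)} .
\end{align*}
Since $s^*<\alpha_1$, the second term is increasing in $s_1$, so $h_1^{\mathrm{sf}}$ is decreasing. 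For continuity at the boundaries, at $s_1=s_L$ the target equals $g(s_1,1;A_1)$, so the formula gives $1$. At $s_1=s_H$ it gives $0$. The policy is therefore a continuous, nonincreasing function that equals $1$ at the bottom, is interior in the middle, and equals $0$ at the top. In the middle region, next-period skill equals $s^*$ by case (ii).

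The remaining claims follow directly. The maximum of $h_1^{\mathrm{sf}}$ over the feasible domain is attained at its lowest skills, because the function is nonincreasing. Workers with $s_1\ge s_H$ satisfy $g(s_1,0;A_1)\ge s^*$, so they remain above the target even under atrophy, which is case (iii).

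The main obstacle is modest. It is the careful handling of the feasible domain: I must confirm that intersecting with $(\underline{s},\alpha_1)$ and possibly empty regions does not break the ordering, and that the interior formula's monotonicity holds without assuming $s^*$ lies in any particular sub-range. The rewriting above handles the latter using only $s^*<\alpha_1$ from \cref{eq:s_star}.
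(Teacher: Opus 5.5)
Your proof is correct and follows essentially the same route as the paper. No-leapfrogging (\cref{ass:skill}) makes $g(s_1,h_1;A_1)$ strictly increasing in $s_1$ for every $h_1$, which orders the full-, partial-, and no-engagement sets of \cref{prop:monopoly_period1} as lower, middle, and upper intervals; the only difference is that you show the interior decrease by rewriting the closed form \cref{eq:h_interior_closed}, whereas the paper implicitly differentiates $g(s_1,h_1^{\mathrm{sf}}(s_1);A_1)=s^*$ to obtain $dh_1^{\mathrm{sf}}/ds_1=-(\partial g/\partial s_1)/(\partial g/\partial h_1)<0$.
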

Intuitively, the firm engages only to secure a fallback for when AI is unavailable or unreliable, and that fallback is least expensive to build from below. Two forces pull the same way. The skill a unit of engagement adds is proportional to the gap $\alpha_1 - s_1$, so the firm gains more skill per unit of engagement from a worker who starts further down. Because wages are convex, the marginal wage it pays to build that skill is also lower the lower the worker starts. An already-skilled worker offers neither saving, so the firm withholds engagement.

The firm engages to secure adequate fallback capability at minimum cost, not to maximize skill. Because AI handles most of the work, building a worker far above the target earns little return at a higher wage. The firm aims only for the level at which a worker can take over when AI is unavailable or unreliable, and no higher.

\section{Worker Mobility}\label{sec:competition}

Worker mobility adds a second motive to the firm's engagement choice. With two ex ante identical firms and a common market wage, workers choose between jobs on the basis of the skill trajectories those jobs build. Engagement now affects profits in two ways. It lowers the current margin on a below-benchmark worker, as in the single-firm benchmark, but it also makes the firm more attractive to workers who value the skill the job creates. This sorting motive, absent without mobility because a worker has nowhere else to go, comes at a price: it weakens the firm's hold on the skill it builds, because that skill is portable and moves with the worker. The sorting motive operates even in the terminal period, in which a single firm disengages every below-benchmark worker, and it targets the higher-skilled workers, the opposite end of the spectrum from the least-skilled workers a single firm favors. Under mobility, AI capability and reliability move engagement in different directions. In period~1, the sorting motive interacts with the fallback motive. When workers respond weakly to the skill a job builds, mobility drains the firm's return on portable skill and pulls engagement below the single firm's level. The analysis also illustrates three further possibilities: engagement can rise above that level, its timing can reverse so that a worker is engaged only after re-sorting, and two ex ante identical firms can split apart.

In each period $t\in\{1,2\}$, the two firms simultaneously choose engagement levels $h_t^j,h_t^{-j}\in[0,1]$, and workers then sort between them. A worker of skill $s_t$ who joins firm $j$ is engaged at level $h_t^j$ and enters the next period with skill $g(s_t,h_t^j;A_t)$. Let $V_t^j(s_t,h_t^j)$ denote the worker's economic value from joining firm $j$. Because both firms pay the common market wage $W(s_t)$, differences in $V_t^j$ come only from the skill trajectory each engagement policy creates. A worker's realized valuation of firm $j$ is this value plus a private taste shock, $U_t^j(s_t,h_t^j)=V_t^j(s_t,h_t^j)+\eta\varepsilon_t^j$, where the shocks $\varepsilon_t^j$ are drawn independently across workers and firms from a type-I extreme-value distribution. The scale $\eta>0$ governs how strongly workers respond to the skill trajectories firms offer: as $\eta$ falls, they track those trajectories ever more closely, and as it rises, private taste dominates. This structure yields the familiar \emph{logit share} \citep{aksoypierson2013,basuroy1998,mcfadden1974}: a worker of skill $s_t$ joins firm $j$ with probability
\begin{align*}
\sigma_t^j(h_t^j,h_t^{-j};s_t,A_t)
=
\frac{1}{1+\exp\!\left(\bigl[V_t^{-j}(s_t,h_t^{-j})-V_t^j(s_t,h_t^j)\bigr]/\eta\right)},
\end{align*}
and, before the taste shock, the worker's expected value
$\mathbb{E}\!\left[\max\{U_t^1,\,U_t^2\}\right]$ takes the log-sum form
\begin{align}\label{eq:worker_expected_value}
V_t(s_t,h_t^1,h_t^2)
=
\eta \log\!\left[
\exp\!\left(V_t^1(s_t,h_t^1)/\eta\right)
+
\exp\!\left(V_t^2(s_t,h_t^2)/\eta\right)
\right]
+\eta\kappa,
\end{align}
the standard type-I extreme-value result \citep{mcfadden1980}, with
$\kappa$ the Euler--Mascheroni constant. 
The economic value $V_t^j$ takes a different form in each period. We begin with the terminal period, in which the firm's own fallback motive is absent, so any engagement it offers can serve only to attract workers, isolating the sorting motive. We take up the period-1 problem, in which current output, future skill, and worker sorting all interact, in~\cref{sec:period1_duopoly}.

\subsection{Terminal Period: The Sorting Motive}\label{sec:terminal_duopoly}

In the terminal period, a firm without worker mobility has no reason to engage a below-benchmark worker: with no future left to protect, engagement only sacrifices current output, so it disengages entirely. Worker mobility reverses this. Once a worker can choose between firms, a firm engages even though doing so earns it no fallback value of its own, because engagement raises the worker's post-horizon skill, and a worker who anticipates a stronger skill trajectory is more likely to join that firm. Engagement thus becomes a tool to attract workers. Two questions then organize the terminal analysis: whether a firm engages a below-benchmark worker at all, settled here by a single threshold on the sorting friction, and which workers it engages most, taken up in \cref{sec:targeting_duopoly}.

The worker's economic value $V_2^j$ is the current
wage plus the discounted post-horizon value of the skill carried
forward,
\begin{align*}
V_2^j(s_2,h_2^j)
=
W(s_2)
+
\beta B\!\bigl(g(s_2,h_2^j;A_2)\bigr),
\end{align*}
where $B(\cdot)$ is the post-horizon skill-value function introduced in \cref{sec:worker_utility}. Because both firms pay the common wage, it
cancels out in the logit sorting probabilities, so a firm's terminal-period logit share depends only on the skill its engagement promises:
\begin{align*}
\sigma_2^j(h_2^j,h_2^{-j};s_2,A_2)
=
\frac{1}{1+\exp\!\left(\beta\bigl[B(g(s_2,h_2^{-j};A_2))-B(g(s_2,h_2^j;A_2))\bigr]/\eta\right)}.
\end{align*}
The firm that builds more skill draws the larger logit share, increasingly so as $\eta$ falls.

For a below-benchmark worker ($s_2<\alpha_2$), firm $j\in\{1,2\}$ chooses $h_2^j\in[0,1]$ to maximize expected profit, the product of its worker share and its per-worker margin,
\begin{align*}
\Pi_2^j(h_2^j, h_2^{-j}; s_2, A_2)
=
\sigma_2^j(h_2^j, h_2^{-j}; s_2, A_2)\,
M(s_2, h_2^j; A_2).
\end{align*}
The firm is pulled in two directions. A higher $h_2^j$ lifts the
worker's future skill, and with it the firm's share $\sigma_2^j$; but
because the worker sits below the benchmark, the same $h_2^j$ erodes
the current margin $M(s_2,h_2^j;A_2)$. The best response weighs
attracting workers against current output.

Two ex ante identical firms could, in principle, support an asymmetric
pure-strategy equilibrium in the terminal-period game, with one firm
engaging more than the other. The next lemma rules~that~out.

\begin{lemma}
\label{lem:terminal_symmetry}
In the terminal period, any pure-strategy Nash equilibrium of the game in
which firms $1$ and $2$ simultaneously choose $h_2^1,h_2^2\in[0,1]$ to
maximize $\Pi_2^1(h_2^1,h_2^2;s_2,A_2)$ and $\Pi_2^2(h_2^2,h_2^1;s_2,A_2)$
is symmetric: $h_2^{1*}=h_2^{2*}$.
\end{lemma}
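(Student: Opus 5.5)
The plan is a swap-deviation argument. It avoids first-order conditions altogether and so handles interior and corner equilibria uniformly. Fix a below-benchmark worker, $s_2<\alpha_2$, and write $M(h)\triangleq M(s_2,h;A_2)$ and $u(h)\triangleq \beta B\bigl(g(s_2,h;A_2)\bigr)/\eta$. With $\Lambda(z)\triangleq 1/(1+e^{-z})$ the logistic function, firm $j$'s terminal profit is
\begin{align*}
\Pi_2^j(h^j,h^{-j}) = \Lambda\bigl(u(h^j)-u(h^{-j})\bigr)\,M(h^j).
\end{align*}

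First I would record two facts from the model primitives. The first is that $M(h)>0$ for every $h\in[0,1]$, which follows from \cref{ass:wages_revenue} and the discussion after it (employing a below-benchmark worker is always profitable). The second is that $u$ is strictly increasing on $[0,1]$. This holds because, on the relevant domain, $g(s_2,h;A_2)=s_2+\Gamma(h;A_2)(\alpha_2-s_2)$ is affine in $h$ with slope $(1-\pi_2)(\phi\pi_2+\gamma)(\alpha_2-s_2)>0$. Here \cref{ass:skill_domain} guarantees that the $\max\{\cdot,0\}$ does not bind, since $s_2>\underline{s}_2$. Because $B$ is strictly increasing, $u$ is strictly increasing as well.

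Next, suppose toward a contradiction that $(h^1,h^2)$ is a pure-strategy equilibrium with $h^1\neq h^2$, and set $D\triangleq u(h^1)-u(h^2)$. Strict monotonicity of $u$ gives $D\neq 0$. Each firm has the feasible deviation of copying its rival, which yields share $\Lambda(0)=1/2$. Equilibrium optimality of firm 1 against the deviation $h^1\to h^2$, and of firm 2 against the deviation $h^2\to h^1$, gives
\begin{align*}
\Lambda(D)\,M(h^1) \ge \tfrac12 M(h^2), \qquad \Lambda(-D)\,M(h^2) \ge \tfrac12 M(h^1).
\end{align*}
Both sides of both inequalities are strictly positive, so I can multiply them and cancel $M(h^1)M(h^2)>0$. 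This leaves $\Lambda(D)\Lambda(-D)\ge 1/4$.

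Finally, $\Lambda(D)\Lambda(-D)=\Lambda(D)\bigl(1-\Lambda(D)\bigr)\le 1/4$, with equality only when $\Lambda(D)=1/2$, that is, only when $D=0$. Since $D\neq 0$, the product is strictly below $1/4$, which contradicts the previous inequality. Hence every pure-strategy equilibrium has $h^{1*}=h^{2*}$.

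The only delicate step is securing the two primitive facts: strict positivity of the margin over all of $[0,1]$, and strict monotonicity of $u$, which requires the transition to stay on its smooth branch. Both come directly from the earlier assumptions. Note that the argument uses neither convexity of $B$ nor any concavity of $\Pi_2^j$ in own engagement. That matters, because the logit share times a decreasing margin need not be quasi-concave, so a first-order-condition route would have to treat corners and multiple critical points separately. The swap deviation sidesteps all of that.
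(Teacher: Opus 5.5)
Your proof is correct and is essentially the paper's argument: the paper also lets each firm deviate to its rival's engagement level, multiplies the two no-deviation inequalities using strict positivity of the margins, and contradicts the fact that $\sigma_2^{1*}\sigma_2^{2*}\le 1/4$, with equality only at equal shares. Your remark that $g$ must stay on its smooth, strictly increasing branch, which holds for $s_2>\underline{s}_2$, makes explicit a step the paper simply asserts.
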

\cref{lem:terminal_symmetry} reflects a balance of two forces. A firm
that engages more attracts a larger share of workers but earns a thinner
margin on each; a firm that engages less faces the reverse. These two
pulls offset, so neither firm can gain by engaging differently, and any gap between them unravels. The balance holds only in the
terminal period. In period~1, by contrast, skill moves with the worker, so once workers re-sort the other firm can draw on it without having paid to build it, and this spillover can break the symmetry between them (\cref{sec:period1_duopoly}).

Because \cref{lem:terminal_symmetry} rules out asymmetric pure-strategy
equilibria in the terminal period, it suffices to characterize symmetric
candidates. At a symmetric profile \(h_2^j=h_2^{-j}=h\), the own-action
marginal payoff $\partial \Pi_2^j(h_2^j, h_2^{-j}; s_2, A_2)/\partial h_2^j$ of firm \(j\) has the same sign as
\(F_2(h;s_2,A_2)-\lambda R\delta\), where
\begin{align*}
F_2(h;s_2,A_2)
\triangleq
\frac{\beta(\phi\pi_2+\gamma)}{2\eta}\,
B'\!\bigl(g(s_2,h;A_2)\bigr)\,
M(s_2,h;A_2).
\end{align*}
Thus an interior symmetric equilibrium solves
\(F_2(h;s_2,A_2)=\lambda R\delta\). The term \(F_2(h;s_2,A_2)\) is the marginal
engagement gain: higher engagement builds more
post-horizon skill and makes the firm more attractive to workers. The
term \(\lambda R\delta\) is the normalized marginal operating cost of
engagement for a below-benchmark worker.

We impose a sufficient single-crossing restriction that makes this
comparison monotone. There exists a cutoff
\(\bar{\bar{s}}_2\in[\underline{s}_2,\alpha_2)\) such that, with
\(D_2\equiv(\bar{\bar{s}}_2,\alpha_2)\), for every \(s_2\in D_2\),
\begin{align}\label{eq:monotone_condition}
\max_{h \in [0,1]}
\left\{
\frac{B''(g(s_2, h; A_2))}{B'(g(s_2, h; A_2))}\,
M(s_2, h; A_2)
\right\}
<\frac{\lambda R \delta}{\phi\pi_2 + \gamma}.
\end{align}
We call \cref{eq:monotone_condition} the terminal single-crossing
condition. Engagement raises future skill, and convex \(B\)
makes that skill increasingly valuable; but engagement also lowers the
firm's current margin. The condition ensures the margin decline dominates, so the marginal engagement gain
falls with engagement and meets the cost \(\lambda R\delta\) at most once. Throughout the rest of \cref{sec:competition}, terminal-period statements are understood for
\(s_2\in D_2\).

\begin{proposition}
\label{prop:duopoly_terminal}
Suppose the terminal single-crossing condition in
\cref{eq:monotone_condition} holds. Then the
engagement gain \(F_2(h;s_2,A_2)\) is strictly decreasing in \(h\), and the
terminal-period two-firm game has a unique pure-strategy Nash equilibrium,
symmetric at \(h_2^*(s_2)\), found by comparing the engagement gain at the
boundaries with the operating cost \(\lambda R\delta\):
\begin{enumerate}
  \item[\textup{(i)}] if \(\lambda R\delta\le F_2(1;s_2,A_2)\), both firms
        engage fully, \(h_2^*=1\);
  \item[\textup{(ii)}] if \(F_2(1;s_2,A_2)<\lambda R\delta<F_2(0;s_2,A_2)\),
        both engage at the interior level \(h_2^*\in(0,1)\) solving
        \(F_2(h_2^*;s_2,A_2)=\lambda R\delta\);
  \item[\textup{(iii)}] if \(F_2(0;s_2,A_2)\le\lambda R\delta\), neither firm
        engages, \(h_2^*=0\).
\end{enumerate}
\end{proposition}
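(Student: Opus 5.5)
The proof has three steps. First, differentiate $F_2$ in $h$ and use \cref{eq:monotone_condition} to show it is strictly decreasing. Second, show that for a fixed rival action the firm's own payoff is single-peaked, so best responses are well defined. Third, combine the first-order conditions with \cref{lem:terminal_symmetry}, which lets us restrict attention to symmetric profiles.

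\emph{Step 1: monotonicity of $F_2$.} On $D_2$ the skill update is smooth. By \cref{ass:skill_domain} and $s_2>\underline{s}_2$ we have $g>0$, so
\[
\partial g/\partial h=(1-\pi_2)(\phi\pi_2+\gamma)(\alpha_2-s_2)>0,
\qquad
\partial M/\partial h=-\lambda R\delta(1-\pi_2)(\alpha_2-s_2).
\]
The product rule then gives
\[
\frac{\partial F_2}{\partial h}=\frac{\beta(\phi\pi_2+\gamma)}{2\eta}(1-\pi_2)(\alpha_2-s_2)B'(g)\Big[(\phi\pi_2+\gamma)\tfrac{B''(g)}{B'(g)}M-\lambda R\delta\Big].
\]
The bracket is negative by \cref{eq:monotone_condition}. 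Because $B'>0$, $s_2<\alpha_2$, and $M>0$ by \cref{ass:wages_revenue}, $F_2$ is strictly decreasing in $h$.

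\emph{Step 2: own-payoff first-order condition.} Fix any rival action $h^{-j}$. Write $x=\beta[B(g(h^{j}))-B(g(h^{-j}))]/\eta$, so that $\sigma=1/(1+e^{-x})$ and $\partial\sigma/\partial h^j=\sigma(1-\sigma)\beta B'(g)g_h/\eta$. Then
\[
\partial\Pi/\partial h^j=\sigma(1-\pi_2)(\alpha_2-s_2)\Big[(1-\sigma)\tfrac{\beta(\phi\pi_2+\gamma)}{\eta}B'(g(h^j))M(h^j)-\lambda R\delta\Big].
\]
At a symmetric profile $\sigma=1/2$, so the bracket is exactly $F_2(h)-\lambda R\delta$, which confirms the sign claim in the text. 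Away from symmetry, define $G(h^j)=(1-\sigma(h^j))\,\beta(\phi\pi_2+\gamma)B'M/\eta$. This equals $2(1-\sigma)F_2$ evaluated at $h^j$. Since $1-\sigma$ is strictly decreasing in $h^j$ and $F_2>0$ is strictly decreasing, $G$ is strictly decreasing. Hence $\Pi^j$ is strictly quasiconcave in $h^j$: it increases while $G>\lambda R\delta$ and decreases afterward. The best response is therefore unique and characterized by the KKT conditions, and a pure-strategy equilibrium exists, for example because the best response is a continuous self-map of $[0,1]$ applied to the diagonal.

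\emph{Step 3: classification and uniqueness.} By \cref{lem:terminal_symmetry}, every equilibrium is symmetric at some $h$. A symmetric profile $(h,h)$ is an equilibrium if and only if $h$ is a best response to itself, which by Step 2 means the KKT conditions hold with $G(h)=F_2(h)$.
\begin{itemize}
\item Case (i), $F_2(1)\ge\lambda R\delta$: $h=1$ is a best response to $h=1$. Since $F_2$ is decreasing, $F_2>\lambda R\delta$ on $[0,1)$, so no $h<1$ can satisfy the conditions.
\item Case (ii): strict monotonicity and continuity of $F_2$ give a unique root $h_2^*\in(0,1)$. The endpoints fail: $h=0$ fails because $F_2(0)>\lambda R\delta$, and $h=1$ fails because $F_2(1)<\lambda R\delta$.
\item Case (iii), $F_2(0)\le\lambda R\delta$: only $h=0$ satisfies the conditions.
\end{itemize}
In each case, the symmetric candidate is a genuine equilibrium and not just a stationary point, because of the global quasiconcavity from Step 2.

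The main obstacle is Step 2. The local first-order condition only identifies candidates, so we need the unilateral payoff to be quasiconcave away from the symmetric point. The factor $(1-\sigma)$ is what delivers this, and it has to be combined carefully with Step 1.
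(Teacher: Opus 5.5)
Your proposal is correct and follows the paper's proof essentially step for step. You differentiate $F_2$ and use the single-crossing condition to show it is strictly decreasing. You then show the own-action bracket $2(1-\sigma_2^j)F_2(h_2^j;s_2,A_2)-\lambda R\delta$ is strictly decreasing (a positive decreasing $1-\sigma_2^j$ times a positive decreasing $B'M$), which gives strict quasiconcavity. Finally you classify symmetric candidates via $\sigma_2^j=1/2$ and use \cref{lem:terminal_symmetry} to exclude asymmetric equilibria. The fixed-point existence remark in Step~2 is unnecessary, since Step~3 exhibits the symmetric equilibrium directly in each of the three cases.
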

\cref{prop:duopoly_terminal} shows worker mobility creates a
terminal-period sorting motive even though engagement lowers the
current margin. At a symmetric profile, engaging more wins workers but
loses margin, whereas engaging less saves margin but loses workers.

For the power wage \(B(s)=W(s)=s^b\), \(b>1\), which we use in several results below, the single-crossing condition becomes explicit and fixes the domain \(D_2\). The left-hand side of \cref{eq:monotone_condition} is then largest at \(h=0\) and, under \cref{ass:wages_revenue}, strictly decreasing in \(s_2\): it diverges as \(s_2\downarrow\underline{s}_2\) and falls toward the benchmark, so the condition fails for the least-skilled workers and holds for the most-skilled. The domain \(D_2=(\bar{\bar{s}}_2,\alpha_2)\) is therefore an upper-tail band of below-benchmark skills just under the AI frontier, where the cutoff \(\bar{\bar{s}}_2\) is the unique skill at which the single-crossing condition holds with equality; \Cref{app:power_single_crossing_cutoff} in the online appendix gives its closed-form characterization.\footnote{The restriction is an upper-tail modest-convexity condition, consistent with Mincerian wage profiles, which are convex but typically only modestly so \citep{mincer1974,psacharopoulos2018}. In a power-wage calibration, \(D_2\) covers about \(73\%\) of the below-benchmark interval (\cref{app:d2_size}). Outside \(D_2\), \(F_2\) need not be monotone and the terminal equilibrium need not be unique; \(D_2\) is the region on which single-crossing yields a unique equilibrium.}

\cref{prop:duopoly_terminal} states the no-engagement boundary in general
form as \(F_2(0;s_2,A_2)\le\lambda R\delta\). Under the power wage,
\(F_2(0;s_2,A_2)\) is proportional to \(1/\eta\), so this boundary can be
written as a threshold condition on the sorting friction. Define
\begin{align}\label{eq:eta_cutoff_terminal}
\bar\eta(s_2)
\triangleq
\frac{\beta b(\phi\pi_2+\gamma)}{2\lambda R\delta}
\bigl(g(s_2,0;A_2)\bigr)^{b-1}
M(s_2,0;A_2).
\end{align}
Equivalently,
\(F_2(0;s_2,A_2)=\lambda R\delta\,\bar\eta(s_2)/\eta\).

\begin{corollary}
\label{cor:terminal_cutoff}
Suppose \(B(s)=W(s)=s^b\) with \(b>1\). For \(s_2\in D_2\), \(h_2^*(s_2)>0\) if and only if \(\eta<\bar\eta(s_2)\).
\end{corollary}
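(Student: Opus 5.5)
The plan is to read the corollary off \cref{prop:duopoly_terminal} directly. On $D_2$ the terminal single-crossing condition holds, so by \cref{prop:duopoly_terminal} the terminal game has a unique pure-strategy equilibrium, symmetric at $h_2^*(s_2)$, and $F_2(h;s_2,A_2)$ is strictly decreasing in $h$. Case (iii) of that proposition gives $h_2^*=0$ exactly when $F_2(0;s_2,A_2)\le\lambda R\delta$. Cases (i) and (ii) together cover $F_2(0;s_2,A_2)>\lambda R\delta$, and there $h_2^*\in(0,1]$. Indeed, (i) gives $h_2^*=1>0$, and (ii) gives an interior root, which exists and is unique by strict monotonicity and continuity of $F_2$ in $h$. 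Since the three cases partition the parameter space, we get $h_2^*(s_2)>0$ if and only if $F_2(0;s_2,A_2)>\lambda R\delta$.

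It remains to rewrite this inequality as a threshold in $\eta$. Under $B(s)=W(s)=s^b$ we have $B'(x)=bx^{b-1}$. Substituting $h=0$ into the definition of $F_2$ gives
\begin{align*}
F_2(0;s_2,A_2)=\frac{\beta b(\phi\pi_2+\gamma)}{2\eta}\bigl(g(s_2,0;A_2)\bigr)^{b-1}M(s_2,0;A_2)=\frac{\lambda R\delta\,\bar\eta(s_2)}{\eta},
\end{align*}
which is the identity recorded after \cref{eq:eta_cutoff_terminal}. The inequality $F_2(0;s_2,A_2)>\lambda R\delta$ is therefore equivalent to $\bar\eta(s_2)/\eta>1$, and since $\eta>0$, to $\eta<\bar\eta(s_2)$.

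The only point needing care is that $\bar\eta(s_2)$ is well defined and strictly positive, so that the threshold is meaningful. This reduces to checking two facts. First, $g(s_2,0;A_2)>0$, which holds because $s_2>\underline{s}_2$ on $D_2$ (\cref{ass:skill_domain} and the bound $s_2>\underline{s}_2$ derived after it). Second, $M(s_2,0;A_2)>0$, which follows from \cref{ass:wages_revenue}: the paper notes that the margin is positive on the relevant domain. Neither $\bar\eta(s_2)$ nor $D_2$ depends on $\eta$, because the single-crossing condition \cref{eq:monotone_condition} involves neither $\eta$ nor the share. Hence $D_2$ stays fixed as $\eta$ varies, and the equivalence holds for each fixed $s_2\in D_2$. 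I do not expect a real obstacle. The main step is simply matching the case boundaries of \cref{prop:duopoly_terminal} so that the boundary $F_2(0)=\lambda R\delta$ falls into the no-engagement case, which gives the strict inequality $\eta<\bar\eta(s_2)$.
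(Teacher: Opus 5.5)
Your proposal is correct and follows the paper's proof essentially step for step. You use case (iii) of \cref{prop:duopoly_terminal} to get $h_2^*(s_2)>0$ if and only if $F_2(0;s_2,A_2)>\lambda R\delta$, then the power-wage identity $F_2(0;s_2,A_2)=\lambda R\delta\,\bar\eta(s_2)/\eta$ together with $\eta>0$. One citation is off: $g(s_2,0;A_2)>0$ follows directly from $D_2\subset(\underline{s}_2,\alpha_2)$, which is how the paper argues it, not from \cref{ass:skill_domain}.
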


Here \(\bar\eta(s_2)\) is the critical sorting-friction threshold for
engaging a worker of skill \(s_2\): the firm engages for
\(\eta<\bar\eta(s_2)\) and disengages for \(\eta\ge\bar\eta(s_2)\). Thus
\(\bar\eta(s_2)\) is the engagement gain at \(h=0\), expressed in
units of the sorting friction. As workers grow more responsive, that is,
as \(\eta\) falls, engagement becomes profitable for a wider set of
skills within \(D_2\).

A single firm's workers cannot move to a rival, so it has no sorting motive; it engages a below-benchmark
worker in period~1 only when the fallback motive, the prospect of a
more productive worker in period~2, justifies the cost. With no future
left, even that motive is absent, so a single firm sets \(h_2^{\mathrm{sf}}=0\).
Under mobility, by contrast, a firm engages a worker with
\(s_2\in D_2\) whenever the engagement gain covers the cost,
\(F_2(0;s_2,A_2)>\lambda R\delta\). Engagement is then no longer a
purely operational choice. The question is not whether a firm engages,
but whom.

\subsection{The Engagement Reversal}
\label{sec:targeting_duopoly}
Mobility reverses which workers a firm engages. Without it, the fallback motive engages the least-skilled workers (\cref{sec:monopoly}): far below the AI benchmark, a unit of engagement closes the widest skill gap and buys fallback capability at the lowest wage. In the terminal period the fallback motive is gone, and the sorting motive stands alone. It points the other way, toward the high-skill workers near the AI frontier, for two reasons. First, engaging a below-benchmark worker forgoes operational revenue in proportion to the skill gap \(\alpha_2-s_2\), so workers nearer the benchmark are less costly to engage, a force that points upward even when wages are linear. Second, convex wages reinforce the pull: the marginal wage \(B'(s)\) rises with skill, so the portable skill that engagement builds is worth more to a higher-skill worker. A margin force works against both: higher-skill workers also command higher current wages, compressing the firm's per-worker margin and dampening the gain from attracting more of them. How strongly the net pull translates into engagement depends on the sorting friction \(\eta\): the lower \(\eta\), the more responsive workers are to the skill trajectories firms offer. Mobility therefore shifts engagement from the bottom of the skill range toward an upper band that, for low enough \(\eta\), can reach the frontier~(\cref{fig:reversal}).

\begin{figure}[t]
  \centering
  \includegraphics[width=0.55\textwidth]{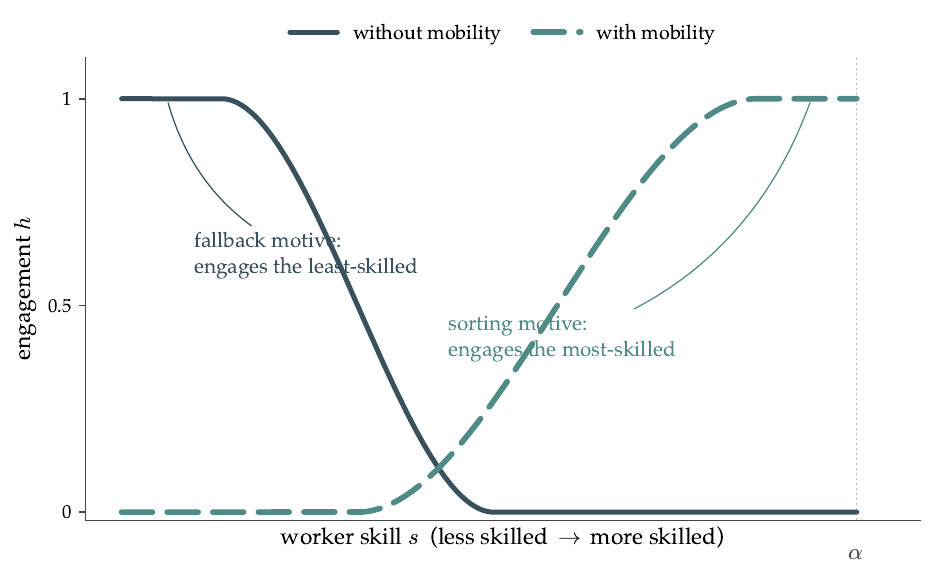}
  \caption{The engagement reversal. Without mobility, the fallback motive engages the least-skilled workers most; under mobility, the sorting motive engages the most skilled below-benchmark workers. The figure is schematic.}
  \label{fig:reversal}
\end{figure}

Under the power wage \(B(s)=W(s)=s^b\), a firm engages a worker of skill \(s_2\in D_2\) when the sorting friction falls below a skill-specific cutoff, \(\eta<\bar\eta(s_2)\) (\cref{cor:terminal_cutoff}). For a given \(\eta\), the engaged workers are those whose cutoff \(\bar\eta(s_2)\) exceeds \(\eta\), so which workers a firm engages is determined by how that cutoff varies with skill. The cutoff and the engagement gain \(F_2(0;s_2,A_2)\) are proportional, because \(F_2(0;s_2,A_2)=\lambda R\delta\,\bar\eta(s_2)/\eta\) and the factor \(\lambda R\delta/\eta\) does not depend on \(s_2\). The two therefore rise, fall, and peak together, so the targeting problem reduces to a single object: how the engagement gain at \(h=0\) varies with skill. That variation settles both which workers are engaged and how intensely.

Whether the workers a firm engages most reach the AI frontier or sit just below it is governed by a single comparison, summarized by the scalar \(\Delta_2\). Engagement requires \(\eta<\bar\eta(s_2)\), so at a given \(\eta\) the engaged workers are those with the highest cutoffs, an interval at the top of the admissible range. Where this interval sits, at the very top or in a band just below the frontier, depends on the shape of the cutoff \(\bar\eta(s_2)\), and that shape is determined by the sign of \(\Delta_2\):
\begin{align}\label{eq:delta2_def}
\Delta_2 \triangleq (b-1)[1+\gamma(1-\pi_2)](\lambda R\alpha_2-\alpha_2^b)+\alpha_2(\lambda R\pi_2-b\alpha_2^{b-1}).
\end{align}

Intuitively, \(\Delta_2\) weighs the current-period value of a worker's skill near the AI frontier against the wage it commands there. When that value keeps pace with the rising wage up to the frontier, the engaged interval reaches it; when the wage overtakes it first, the most-engaged workers sit in a band just below. The following proposition makes this precise.

\begin{proposition}
\label{prop:terminal_targeting}
Suppose \(B(s)=W(s)=s^b\) with \(b>1\). Fix \(\eta\). On the admissible domain \(D_2\), the engaged set \(E_2(\eta)\equiv\{s_2\in D_2:\eta<\bar\eta(s_2)\}\) is an interval that widens as \(\eta\) falls, beginning from the workers with the highest cutoff \(\bar\eta(s_2)\). When \(\Delta_2\ge0\), \(\bar\eta(s_2)\) rises monotonically over \((\underline{s}_2,\alpha_2)\) and is largest at the benchmark \(\alpha_2\); when \(\Delta_2<0\), \(\bar\eta(s_2)\) is single-peaked, reaching an interior maximum at some \(s_2^\dagger\in(\underline{s}_2,\alpha_2)\).
\end{proposition}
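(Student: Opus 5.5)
The plan is to reduce everything to the shape of $\bar\eta(s_2)$ on $(\underline{s}_2,\alpha_2)$. Then I read off the engaged set as a superlevel set of that function, intersected with $D_2$. From \cref{eq:eta_cutoff_terminal}, $\bar\eta(s_2)$ is a positive constant times $f(s)\triangleq x(s)^{b-1}M(s)$. Here, writing $c\triangleq\gamma(1-\pi_2)$,
\begin{align*}
x(s)=g(s,0;A_2)=(1+c)s-c\alpha_2,
\end{align*}
and
\begin{align*}
M(s)=M(s,0;A_2)=\lambda R\bigl[(1-\pi_2)\alpha_2+\pi_2 s\bigr]-s^b .
\end{align*}
The function $x$ is affine and increasing, vanishes exactly at $\underline{s}_2$, and equals $\alpha_2$ at $s=\alpha_2$. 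By \cref{ass:wages_revenue} and the remark following it, $M>0$ on the relevant domain, so $f>0$ on $(\underline{s}_2,\alpha_2)$. Logarithmic differentiation gives $\operatorname{sign} f'(s)=\operatorname{sign} N(s)$, where
\begin{align*}
N(s)\triangleq (b-1)(1+c)M(s)+x(s)M'(s),\qquad M'(s)=\lambda R\pi_2-bs^{b-1}.
\end{align*}

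Next I evaluate $N$ at the endpoints. At $s=\underline{s}_2$ we have $x=0$, so $N(\underline{s}_2)=(b-1)(1+c)M(\underline{s}_2)>0$. At $s=\alpha_2$ we have $x=\alpha_2$ and $M=\lambda R\alpha_2-\alpha_2^b$, so $N(\alpha_2)$ equals exactly $\Delta_2$ in \cref{eq:delta2_def}. This identification is what links the statement's scalar to the geometry.

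The key step, and the one I expect to be the main obstacle, is to show that $N$ changes sign at most once. I would show that $N$ is strictly concave. Differentiating,
\begin{align*}
N'(s)=b(1+c)M'(s)+x(s)M''(s)=b(1+c)\bigl(\lambda R\pi_2-bs^{b-1}\bigr)-b(b-1)\,x(s)s^{b-2}.
\end{align*}
The first term is strictly decreasing because $b>1$. For the second term, I need $x(s)s^{b-2}$ to be increasing even when $b<2$. Its derivative is $s^{b-3}\bigl[(1+c)s+(b-2)x\bigr]$. Since $x\le(1+c)s$, the bracket is at least $x+(b-2)x=(b-1)x$, which is positive on the domain. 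Hence $N'$ is strictly decreasing, so $N$ is strictly concave. If this bound failed for some parameters, I would fall back on showing directly that $N$ is decreasing wherever it is nonpositive, which suffices for a single crossing.

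Given concavity and $N(\underline{s}_2)>0$, the two cases follow. If $\Delta_2=N(\alpha_2)\ge0$, concavity keeps $N$ above the chord, so $N>0$ on the open interval. Then $\bar\eta$ is strictly increasing and is maximized at $\alpha_2$. If $\Delta_2<0$, $N$ has a unique zero $s_2^\dagger\in(\underline{s}_2,\alpha_2)$ with $N>0$ to its left and $N<0$ to its right. Then $\bar\eta$ is single-peaked with interior maximum at $s_2^\dagger$. In either case $\bar\eta$ is unimodal, so by \cref{cor:terminal_cutoff} the set $\{s:\bar\eta(s)>\eta\}$ is an interval containing the maximizer whenever it is nonempty. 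Intersecting with the interval $D_2$ gives an interval $E_2(\eta)$. Monotone nesting in $\eta$ is immediate, because lowering $\eta$ enlarges the superlevel set. As $\eta$ decreases from $\sup_{D_2}\bar\eta$, the set $E_2(\eta)$ first becomes nonempty at the skills where $\bar\eta$ restricted to $D_2$ is largest.
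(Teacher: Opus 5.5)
Your proposal is correct and follows the paper's proof essentially step for step. Your $N$ is the paper's $H(s_2)=(b-1)[1+\gamma(1-\pi_2)]r_0+z_0r_0'$, and, as the paper does, you show it is strictly concave, positive at $\underline{s}_2$, and equal to $\Delta_2$ at $\alpha_2$, then read off the two shape cases and the superlevel-set structure of $E_2(\eta)$. The only difference is cosmetic: you get concavity by showing both pieces of $N'$ are decreasing, using $(1+c)s+(b-2)x\ge(b-1)x$, whereas the paper computes $H''$ directly and bounds $a(b+1)s_2+(b-2)z_0$ from below.
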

The sign of \(\Delta_2\) determines where engagement is most intense. When \(\Delta_2\ge0\), the cutoff is highest at the AI frontier, so the most-engaged workers sit at the top of the admissible range; when \(\Delta_2<0\), the cutoff peaks at an interior skill \(s_2^\dagger\), and they sit in a band just below the frontier. \Cref{app:terminal_targeting_detail} gives \(s_2^\dagger\) and the endpoints of the engaged interval in each case.

Under one further condition, \(\lambda R\pi_2\ge b\alpha_2^{b-1}\), engagement rises monotonically with skill across \(D_2\): the firm engages the most-skilled admissible workers most. The condition rules out an offsetting channel. A higher skill \(s_2\) raises the engagement gain \(F_2\) by lifting the skill trajectory engagement builds, but it also raises the wage and so could erode the firm's current margin. The condition ensures the first effect dominates: even at the benchmark, where the wage slope is steepest, the extra current revenue a more skilled worker earns when AI fails covers the higher wage. The gain \(F_2(h;s_2,A_2)\) is then increasing in \(s_2\) at every engagement level \(h\). Combined with the single-crossing property, which makes \(F_2\) decreasing in \(h\), a higher skill shifts the whole marginal-gain curve upward without disturbing its single crossing with the cost \(\lambda R\delta\). Equilibrium engagement therefore rises weakly with skill: zero for the least-skilled admissible workers, then interior, then full near the frontier.

\begin{corollary}
\label{cor:terminal_monotone}
Suppose \(B(s)=W(s)=s^b\) with \(b>1\), and suppose \(\lambda R\pi_2\ge b\alpha_2^{b-1}\). Then, on \(D_2\), the gain to attracting workers, \(F_2(h;s_2,A_2)\), is decreasing in engagement and increasing in skill, and equilibrium engagement \(h_2^*(s_2)\) is continuous and nondecreasing in skill. Two thresholds \(\bar{\bar{s}}_2\le\hat{s}_2^D\le\overline{s}_2^D\le\alpha_2\) partition the admissible workers into three consecutive bands: engagement is zero for \(s_2\le\hat{s}_2^D\), interior for \(\hat{s}_2^D<s_2<\overline{s}_2^D\), and full for \(s_2\ge\overline{s}_2^D\), rising with skill across the interior band. The lower threshold \(\hat{s}_2^D\) is the skill at which the engagement gain at zero engagement first covers the cost \(\lambda R\delta\), so below it no worker is engaged; the upper threshold \(\overline{s}_2^D\) is the skill at which the gain at full engagement covers the cost, so above it every worker is engaged fully.
\end{corollary}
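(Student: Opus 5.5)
The plan is a monotone comparative statics argument, layered on \cref{prop:duopoly_terminal}. That proposition already gives $F_2$ strictly decreasing in $h$ on $D_2$ (single crossing). It also gives a unique symmetric equilibrium, described by comparing $F_2(0;s_2,A_2)$ and $F_2(1;s_2,A_2)$ with $\lambda R\delta$. What remains is to show that $F_2(h;s_2,A_2)$ is increasing in $s_2$ for every $h\in[0,1]$, and then to read off the three bands.

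\textbf{Step 1 (monotonicity in skill).} Write $g(s_2,h;A_2)=s_2+\Gamma(h;A_2)(\alpha_2-s_2)$. Then $\partial g/\partial s_2=1-\Gamma(h;A_2)$, which is positive by \cref{ass:skill}, since $\Gamma\le\phi\pi_2(1-\pi_2)<1$. Hence $B'(g)=b\,g^{b-1}$ is strictly increasing in $s_2$.

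Turning to the margin, $M(s_2,h;A_2)=\lambda R[(1-\pi_2)(\alpha_2+\delta h(s_2-\alpha_2))+\pi_2 s_2]-s_2^b$. Its derivative is
\begin{align*}
\frac{\partial M}{\partial s_2}=\lambda R\delta h(1-\pi_2)+\lambda R\pi_2-b s_2^{b-1}\ \ge\ \lambda R\pi_2-b\alpha_2^{b-1}\ \ge 0,
\end{align*}
using $s_2<\alpha_2$, $b>1$, and the hypothesis $\lambda R\pi_2\ge b\alpha_2^{b-1}$. The margin is positive by \cref{ass:wages_revenue}. So $F_2$ is a product of a positive constant and two positive factors, one strictly increasing and one nondecreasing in $s_2$. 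Therefore $F_2$ is strictly increasing in $s_2$ at every $h$.

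\textbf{Step 2 (equilibrium monotonicity and continuity).} In the interior case, define $h_2^*(s_2)$ by $F_2(h_2^*;s_2)=\lambda R\delta$. Since $\partial_h F_2<0$ and $\partial_{s}F_2>0$, the implicit function theorem gives $dh_2^*/ds_2=-\partial_sF_2/\partial_hF_2>0$. Here $F_2$ is $C^1$ because $B\in C^2$ and the arguments stay in the interior, by \cref{ass:skill_domain}.

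For the corners, $F_2(0;s_2)$ and $F_2(1;s_2)$ are each strictly increasing and continuous in $s_2$. Define $\hat s_2^D$ as the infimum of $\{s_2\in D_2: F_2(0;s_2)>\lambda R\delta\}$, with the convention $\hat s_2^D=\alpha_2$ if the set is empty and $\bar{\bar s}_2$ if it is all of $D_2$. Define $\overline{s}_2^D$ analogously via $F_2(1;s_2)\ge\lambda R\delta$. Because $F_2(1;\cdot)<F_2(0;\cdot)$, we get $\hat s_2^D\le\overline{s}_2^D$. Cases (i)--(iii) of \cref{prop:duopoly_terminal} then map to the three bands.

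Continuity of $h_2^*$ at the thresholds follows from a standard argument. At $\hat s_2^D$, $F_2(0;\hat s_2^D)=\lambda R\delta$, so the interior solution tends to $0$; the same reasoning at $\overline{s}_2^D$ gives a limit of $1$. Equivalently, $h_2^*(s_2)$ is the projection onto $[0,1]$ of the unique root of a continuous, strictly monotone family, so it is continuous.

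The main obstacle is the boundary bookkeeping in Step 2: handling cases where a threshold lies outside $D_2$, and the behavior of the inequality at the open endpoint $\alpha_2$. The sign argument in Step 1 is short once the hypothesis is used at $s_2=\alpha_2$, where the wage slope $bs_2^{b-1}$ is largest.
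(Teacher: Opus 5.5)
Your proposal is correct and follows the paper's proof essentially step for step. Both show $\partial_{s_2}F_2>0$ from $g_s>0$ (no-leapfrogging) and $M_s\ge\lambda R\pi_2-b\alpha_2^{b-1}\ge0$ together with positivity of $g$ and $M$. Both then apply the implicit function theorem on the interior band, define the thresholds as infima with the $\alpha_2$ convention, order them via $F_2(1;\cdot)<F_2(0;\cdot)$, and obtain continuity at the thresholds from the limit argument. One small caution: your alternative ``projection of the unique root'' phrasing tacitly assumes a root exists outside $[0,1]$, so keep instead the primary limit argument, which the paper makes rigorous by a short contradiction using strict decrease in $h$ and continuity in $s_2$.
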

The precise threshold definitions, and the continuity and differentiability of \(h_2^*(s_2)\), are given in the proof of \cref{cor:terminal_monotone}.

This is the engagement reversal. In the single-firm benchmark, engagement falls with skill because the firm is securing fallback capacity at the lowest cost (\cref{prop:fallback_targets_bottom}). Under mobility, engagement rises weakly with skill on \(D_2\) because the firm is competing through the skill trajectory the job offers. The same choice variable serves a different economic purpose.

\subsection{Worker Mobility Separates the Effects of AI Capability and Reliability}
\label{sec:ai_dimensions_duopoly}

Worker mobility also separates the effects of AI capability and AI reliability. Without mobility, the two dimensions act differently: capability does not affect engagement at all, because a single firm values a worker's skill only as a fallback for AI failures, although higher reliability reduces engagement monotonically (\cref{cor:reliability_vs_capability}). Mobility changes both. Because a firm now also engages to attract workers, capability matters: it raises the future skill a unit of engagement builds, which makes the job more attractive. Reliability, by contrast, moves two margins at once. It changes how much skill a unit of engagement builds, but it also changes the firm's current operating margin on a below-benchmark worker, and these two forces can point in opposite directions. The net effect on engagement can therefore be non-monotone. We state the comparative statics for the power wage \(B(s)=W(s)=s^b\) with \(b>1\) and a worker of skill \(s_2\in D_2\) at which the terminal-period equilibrium is interior, \(h_2^*(s_2)\in(0,1)\).

Capability and reliability diverge under mobility: a more capable AI moves engagement in one direction, although a more reliable AI can move it either way. The next result makes the split precise for the power wage. The proof of \cref{prop:alpha_effect_competition} in the online appendix defines threshold values \(\bar\pi^+\) and \(\bar\pi^-\) that separate the range in which a marginal increase in unreliability raises engagement from the range in which it lowers engagement.

\begin{proposition}
\label{prop:alpha_effect_competition}
Suppose the terminal single-crossing condition in
\cref{eq:monotone_condition} holds and
\(s_2\in D_2\) is such that the terminal-period equilibrium of
\cref{prop:duopoly_terminal} is interior, \(h_2^*(s_2)\in(0,1)\).
\begin{enumerate}
\item[\textup{(i)}] A higher AI capability raises engagement: for local
changes in \(\alpha_2\) that keep \(s_2\in D_2\) and keep the equilibrium
interior, \(\partial h_2^*(s_2)/\partial\alpha_2>0\).\footnote{Part~\textup{(i)} does not rely on the power wage; it holds for any \(B\) satisfying the terminal single-crossing condition.}

\item[\textup{(ii)}]
\label{prop:pi_comparative_statics}
Suppose additionally that \(B(s)=W(s)=s^b\) with \(b>1\). For local changes in \(\pi_2\) that keep \(s_2\in D_2\) and keep the
equilibrium interior, engagement rises with AI unreliability,
\(\partial h_2^{*}(s_2,\pi_2)/\partial\pi_2>0\), whenever
\(\pi_2<\bar\pi^+\) and \(\pi_2\le\tfrac12\), and falls,
\(\partial h_2^{*}(s_2,\pi_2)/\partial\pi_2<0\), whenever
\(\pi_2>\bar\pi^-\).
\end{enumerate}
\end{proposition}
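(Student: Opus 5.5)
The plan is to apply the implicit function theorem to the interior equilibrium condition of \cref{prop:duopoly_terminal}, $F_2(h_2^*;s_2,A_2)=\lambda R\delta$. Under the terminal single-crossing condition \cref{eq:monotone_condition}, \cref{prop:duopoly_terminal} gives $\partial F_2/\partial h<0$ at every $h$. Since the cost $\lambda R\delta$ depends on neither $\alpha_2$ nor $\pi_2$, for $x\in\{\alpha_2,\pi_2\}$ we have $\partial h_2^*/\partial x=-(\partial F_2/\partial x)/(\partial F_2/\partial h)$. Both parts therefore reduce to signing the partial derivative of $F_2$ at the equilibrium point, holding $h=h_2^*$ fixed. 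Interiority and $s_2\in D_2$, preserved under the local perturbation, keep the equilibrium on the smooth branch.

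\textbf{Part (i).} Write $F_2=c\,B'(g)\,M$ with $c=\beta(\phi\pi_2+\gamma)/(2\eta)$, which does not depend on $\alpha_2$. From the definitions,
\begin{align*}
\frac{\partial g}{\partial\alpha_2}=\Gamma(h;A_2),\qquad \frac{\partial M}{\partial\alpha_2}=\lambda R(1-\pi_2)(1-\delta h)>0,
\end{align*}
so
\begin{align*}
\frac{\partial F_2}{\partial\alpha_2}=c\bigl[B''(g)\,\Gamma\, M+B'(g)\,\lambda R(1-\pi_2)(1-\delta h)\bigr].
\end{align*}
If $\Gamma\ge0$, both terms are positive, since $B$ is convex and $M>0$ by \cref{ass:wages_revenue}. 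If $\Gamma<0$, I will use \cref{eq:monotone_condition} at the equilibrium $h$, which gives $B''M<B'\lambda R\delta/(\phi\pi_2+\gamma)$. Next, bound $|\Gamma|=(1-\pi_2)[\gamma-h(\phi\pi_2+\gamma)]\le(1-\pi_2)(1-h)(\phi\pi_2+\gamma)$. Together these give
\begin{align*}
B''|\Gamma|M<B'\lambda R\delta(1-\pi_2)(1-h)\le B'\lambda R(1-\pi_2)(1-\delta h),
\end{align*}
where the last step uses $\delta<1$. Hence $\partial F_2/\partial\alpha_2>0$. This argument uses only single-crossing and not the power form, as the footnote claims.

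\textbf{Part (ii).} Now $\pi_2$ also enters $c$, so I will sign the logarithmic derivative. With $B'(s)=bs^{b-1}$ we have $B''/B'=(b-1)/g$, and
\begin{align*}
\frac{\partial\log F_2}{\partial\pi_2}=\frac{\phi}{\phi\pi_2+\gamma}+\frac{(b-1)(\alpha_2-s_2)}{g}\,\frac{\partial\Gamma}{\partial\pi_2}-\frac{\lambda R(1-\delta h)(\alpha_2-s_2)}{M},
\end{align*}
where
\begin{align*}
\frac{\partial\Gamma}{\partial\pi_2}=\gamma(1-h)+h\phi(1-2\pi_2).
\end{align*}
The third term uses $\partial M/\partial\pi_2=-\lambda R(1-\delta h)(\alpha_2-s_2)<0$.

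For $\pi_2\le\tfrac12$, the first two terms (the reliability-dependent learning rate and the convex-value channel) are nonnegative, and the only negative force is the margin channel. I will define $\bar\pi^+$ as the value below which a uniform lower bound on the first two terms exceeds a uniform upper bound on the margin term. For that I take the lower bound $\phi/(\phi\pi_2+\gamma)$ on the positive part, and bound $M$ from below on $D_2$ using $M(s_2,h;A_2)\ge M(s_2,1;A_2)$ and \cref{ass:wages_revenue}. Symmetrically, I will define $\bar\pi^-$ as the value above which the margin term together with the now possibly negative $h\phi(1-2\pi_2)$ piece dominates $\phi/(\phi\pi_2+\gamma)+(b-1)(\alpha_2-s_2)\gamma/g$, with $g$ bounded below by $\underline{s}_2$.

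The main obstacle is making these bounds uniform in the equilibrium $h$ and in $s_2\in D_2$, so that $\bar\pi^{\pm}$ are genuine thresholds depending only on primitives. I would first try bounding each term separately at its worst $h\in[0,1]$, accepting conservative thresholds, since the statement only requires sufficiency on each side.
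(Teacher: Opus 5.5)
Your Part (i) is correct and is essentially the paper's proof. It uses the same implicit differentiation and the same identity $\partial g/\partial\alpha_2=\Gamma=(1-\pi_2)(\phi\pi_2+\gamma)(h-h_2^c)$. It also uses \cref{eq:monotone_condition} in the same way when $\Gamma<0$. Your estimate $|\Gamma|\le(1-\pi_2)(1-h)(\phi\pi_2+\gamma)$, together with $\delta<1$, replaces the paper's lower bound $\lambda R(1-\delta h_2^c)$ at no cost.

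Part (ii) follows the paper's route: sign the logarithmic derivative of $F_2$ in $\pi_2$ uniformly over $h\in[0,1]$. The plan, however, has concrete defects.
\begin{itemize}
\item The bound ``$g$ bounded below by $\underline{s}_2$'' is false. The valid $h$-uniform bound is $g(s_2,h;A_2)\ge g(s_2,0;A_2)=[1+\gamma(1-\pi_2)](s_2-\underline{s}_2)$, and this drops below $\underline{s}_2$ on part of $D_2$. For example, in the calibration of \cref{tab:d2_size} with $\alpha_2=0.9$, $g(\bar{\bar{s}}_2,0;A_2)\approx0.116<\underline{s}_2\approx0.147$.
\item The uniformity in $s_2\in D_2$ that you single out as the main obstacle is not needed. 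The proposition concerns a fixed worker $s_2$, and the paper's $\bar\pi^\pm$ depend on $s_2$.
\item What is needed, and absent from your plan, is monotonicity in $\pi_2$ of each certifying bound. Only then is the set of $\pi_2$ on which a bound certifies a sign a lower (respectively upper) interval, so that a single threshold exists. The paper certifies the signs via $L_+(\pi_2)>\lambda R(\alpha_2-s_2)$ and $L_-(\pi_2)<\lambda R(1-\delta)(\alpha_2-s_2)$. Here $L_+(\pi_2)=M(s_2,0;A_2)\,\phi/(\phi\pi_2+\gamma)$ and $L_-$ are both products of positive functions decreasing in $\pi_2$, because $M(s_2,h;A_2)$ falls and $g(s_2,0;A_2)$ rises with $\pi_2$.
\end{itemize}

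On the falling side, an $h$-uniform upper bound gains nothing from the piece $h\phi(1-2\pi_2)$, since it vanishes at $h=0$. For $\pi_2<\tfrac12$ that piece is positive and must itself be bounded, and the statement's region $\pi_2>\bar\pi^-$ is not confined to $\pi_2\ge\tfrac12$. The paper handles both signs at once by bounding the affine function $\gamma+h(\phi-\gamma-2\phi\pi_2)$ by its endpoint maximum $\max\{\gamma,\phi(1-2\pi_2)\}$.

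For the margin term, the key fact you do not use is that $\lambda R(\alpha_2-s_2)(1-\delta h)/M(s_2,h;A_2)$ is strictly decreasing in $h$. Its $h$-derivative has the sign of $\delta s_2(s_2^{b-1}-\lambda R)<0$ by \cref{ass:wages_revenue}. Hence the margin term is largest at $h=0$ (denominator $M(s_2,0;A_2)$, giving $L_+$) and smallest at $h=1$ (denominator $M(s_2,1;A_2)$, giving $L_-$).

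Bounding numerator and denominator separately, as you propose (numerator at $h=0$, denominator $M(s_2,1;A_2)$ on the rising side), is valid but strictly cruder. It certifies $\partial h_2^*/\partial\pi_2>0$ only below a cutoff smaller than the paper's $\bar\pi^+$. The analogous separate bounds on the falling side would certify the decline only above a cutoff larger than $\bar\pi^-$.
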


Capability moves engagement in one direction: under mobility, a more capable AI raises engagement, because it improves the skill trajectory the firm offers. Reliability moves engagement in two directions at once, and its net effect splits into three regimes. When AI is highly reliable, a marginal increase in unreliability raises engagement: it makes a unit of engagement build more skill, strengthening the sorting value of the skill the worker carries. When AI is already unreliable, further unreliability lowers engagement: frequent failures erode the current margin by more than the added skill is worth, especially for workers far from the benchmark. Between these regimes the two channels offset. Reliability can therefore move engagement non-monotonically, in contrast to the monotone decline without mobility, producing the hump-shaped pattern in \cref{fig:h_star_vs_pi} on the interior of the curve, on which \(s_2\in D_2\) and \(h_2^*(s_2)\in(0,1)\).
\begin{figure}[ht]
\centering
\includegraphics[width=0.50\textwidth]{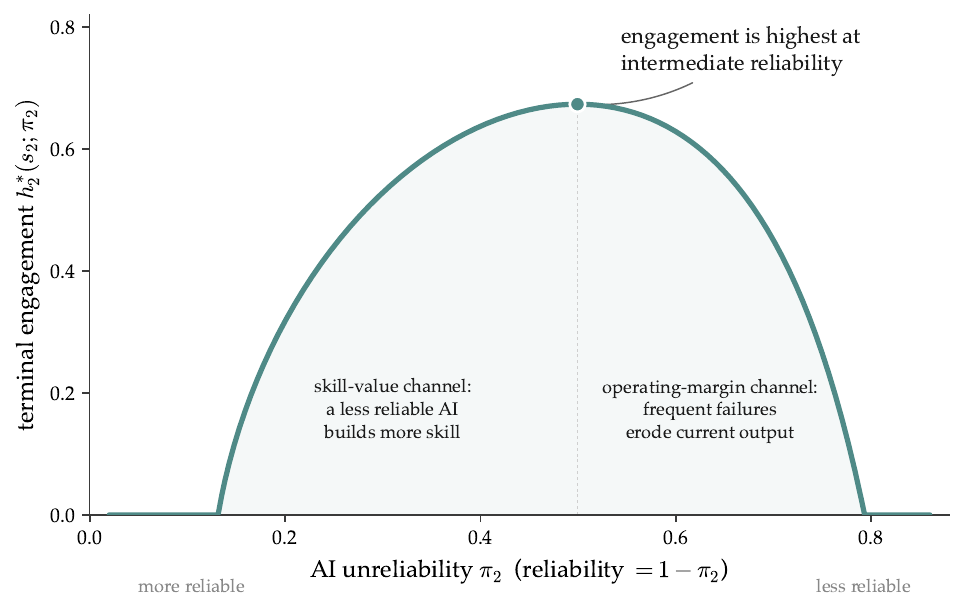}
\caption{Engagement can be highest at intermediate reliability. Equilibrium engagement \(h_2^*(s_2;\pi_2)\) under mobility, plotted against AI unreliability \(\pi_2\) (so reliability is \(1-\pi_2\)): as reliability falls from its highest level, a unit of engagement builds more skill and engagement rises, until frequent failures erode current output and engagement falls again, so it peaks near \(\pi_2\approx0.5\). Parameter values: \(B(s)=W(s)=s^{1.2}\), \(s_2=0.30\), \(\alpha_2=0.90\), \(\phi=0.5\), \(\gamma=0.3\), \(\delta=0.5\), \(\lambda R=3.0\), \(\beta=0.95\), \(\eta=0.21\).}
\label{fig:h_star_vs_pi}
\end{figure}

The hump shape relies on the nature of learning, not on a specific functional form. Building skill requires both working with a functioning AI and occasionally handling the fallback, so learning is strongest at intermediate reliability and weaker at either extreme. The hump follows from this two-sided pattern, not from the exact product \(\pi_t(1-\pi_t)\): \cref{app:learning_profile_robustness} reproduces it for a range of two-sided learning profiles and shows that it weakens or disappears when learning is one-sided, drawing on AI exposure or fallback practice alone.

\subsection{Period 1: Mobility and the Fallback Motive}
\label{sec:period1_duopoly}
Period~1 combines the two forces the earlier subsections separated. Engagement now lowers current output, builds fallback skill for the period-2 state in which AI fails, and shapes how workers sort across firms. Because that skill is portable, the other firm can draw on it after workers re-sort in period~2. Relative to the single firm, mobility therefore adds a sorting gain and an appropriability loss, and which force dominates depends on how readily workers respond to the skill trajectories firms offer.

For the main text it suffices to study initial skills \(s_1\in D_1\), the region for which every continuation state reachable under a feasible period-1 engagement lies in the terminal domain \(D_2=(\bar{\bar{s}}_2,\alpha_2)\), so that \cref{prop:duopoly_terminal} delivers a unique symmetric terminal equilibrium \(h_2^*(s_2)\) at every reachable state. The explicit cutoff \(\bar{\bar{s}}_1\) defining \(D_1\) is reported in \cref{app:period1_domain_size}, which also shows that this restriction is not narrow: across power-wage examples satisfying the model assumptions and terminal-domain conditions, \(D_1\) is nonempty in \(95.3\%\) of cases and covers a median \(81.0\%\) of the feasible period-1 skill interval \((\underline{s},\alpha_1)\).

A worker who joins firm $j$ enters period~2 with skill
$g(s_1,h_1^j;A_1)$ and is then engaged at the symmetric equilibrium level
$h_2^*\!\bigl(g(s_1,h_1^j;A_1)\bigr)$ of \cref{prop:duopoly_terminal}. The
firm's period-2 profit, half the margin at that equilibrium, is
\begin{align}\label{eq:Pi2_star_def}
\Pi_2^*\!\bigl(g(s_1,h_1^j;A_1),A_2\bigr)
\triangleq
\frac{1}{2}M\!\Bigl(g(s_1,h_1^j;A_1),\,
h_2^*\!\bigl(g(s_1,h_1^j;A_1)\bigr);A_2\Bigr),
\end{align}
and the worker's period-2 value, the log-sum
in~\cref{eq:worker_expected_value} at that equilibrium, is
\begin{align*}
\bar V_2\!\bigl(g(s_1,h_1^j;A_1),A_2\bigr)
=
W\!\bigl(g(s_1,h_1^j;A_1)\bigr)
+\beta B\!\bigl(\zeta(s_1,h_1^j;A_1,A_2)\bigr)
+\eta\log 2,
\end{align*}
up to the type-I extreme-value normalization constant, where
\begin{align*}
\zeta(s_1,h_1;A_1,A_2)
\triangleq
g\bigl(g(s_1,h_1;A_1),\,h_2^*(g(s_1,h_1;A_1));A_2\bigr)
\end{align*}
is the post-horizon skill the worker carries out of the model, as shaped
by period-1 engagement.

A worker choosing a firm in period~1 weighs the current wage against
the value expected from period~2 onward. The wage $W(s_1)$ is common
to both firms and cancels in the logit, exactly as in the terminal
period. The worker's continuation value differs across firms only through the skill carried into period~2. Because the worker re-sorts at the start of period~2 and the terminal equilibrium at any reachable skill is symmetric (\cref{prop:duopoly_terminal}), that continuation value is the log-sum of~\cref{eq:worker_expected_value} evaluated at the terminal equilibrium, namely $\bar V_2$ as defined above. The worker sorts in period~1 on these period-2 values, choosing firm $j$ with probability
\begin{align*}
\sigma_1^j(h_1^j,h_1^{-j};s_1,A_1,A_2)
=
\frac{1}{1+\exp\!\left(\beta\Bigl[
\bar V_2\!\bigl(g(s_1,h_1^{-j};A_1),A_2\bigr)
-\bar V_2\!\bigl(g(s_1,h_1^{j};A_1),A_2\bigr)\Bigr]/\eta\right)}.
\end{align*}
Regardless of which firm the worker chose in period~1, re-sorting at the start of period~2 means both firms engage the worker identically; each firm therefore earns the period-2 profit $\Pi_2^*$ of~\cref{eq:Pi2_star_def}, namely half the margin at the terminal equilibrium, irrespective of which firm built the skill the worker arrives with. Period-1
engagement thus determines who collects the current margin and what
skill the worker carries forward, but not how the period-2 profit on the worker
is divided. If the worker joins firm $j$, the firm earns the current
margin $M(s_1,h_1^j;A_1)$ and the discounted period-2 profit
$\beta\Pi_2^*\!\bigl(g(s_1,h_1^j;A_1),A_2\bigr)$; if the worker joins the
other firm, it earns no current margin but still collects
$\beta\Pi_2^*\!\bigl(g(s_1,h_1^{-j};A_1),A_2\bigr)$ on the skill the
other firm built. Firm $j$ therefore chooses $h_1^j$ to maximize
\begin{align}\label{eq:period1_objective}
\Phi_1^j(h_1^j,h_1^{-j};s_1,A_1,A_2)
\triangleq{}&
\sigma_1^j(h_1^j,h_1^{-j};s_1,A_1,A_2)
\Bigl[M(s_1,h_1^j;A_1)
+\beta\,\Pi_2^*\!\bigl(g(s_1,h_1^j;A_1),A_2\bigr)\Bigr] \notag\\
&+\bigl[1-\sigma_1^j(h_1^j,h_1^{-j};s_1,A_1,A_2)\bigr]\,
\beta\,\Pi_2^*\!\bigl(g(s_1,h_1^{-j};A_1),A_2\bigr).
\end{align}

Whereas the terminal period has a unique symmetric equilibrium (\cref{lem:terminal_symmetry}), period~1 admits both symmetric and asymmetric equilibria: because skill moves with the worker, a firm can let the other firm build it and draw on it after workers re-sort, which can pull the two firms' engagement apart. We treat the two cases in turn. This subsection characterizes the symmetric equilibrium and compares it to the single-firm benchmark; \cref{sec:asymmetry_duopoly} shows by example when ex ante identical firms instead specialize into a skill-builder and a free-rider.

Let \(F_1(h_1;s_1,A_1,A_2)\) denote the derivative of
\(\Phi_1^j(h_1^j,h_1^{-j};s_1,A_1,A_2)\) in its own action \(h_1^j\),
evaluated at the symmetric profile \(h_1^j=h_1^{-j}=h_1\). Unlike the
terminal-period gain \(F_2\), which was set against the separate
operating cost \(\lambda R\delta\), \(F_1\) is the entire marginal
payoff: it already embeds the operating cost of engaging a below-benchmark
worker, the sorting gain from a more attractive skill trajectory, and the
continuation-profit effect. Because these enter with different weights, no
common factor divides out to leave a constant cost on one side, as it did
in period~2. The interior condition is therefore \(F_1=0\), the period-1
counterpart of \(F_2=\lambda R\delta\), and a symmetric equilibrium must
leave neither firm a profitable deviation in its own engagement, which
characterizes the candidates by the sign of \(F_1\) at the boundaries and its roots
in the interior.

\begin{lemma}
\label{lem:period1_candidates}
Consider \(s_1\in D_1\), and let
\(s_2^*\equiv g(s_1,h_1^*;A_1)\). At any symmetric profile
\(h_1^j=h_1^{-j}=h_1^*\) at which \(\bar V_2(s_2,A_2)\) and
\(\Pi_2^*(s_2,A_2)\) are differentiable in \(s_2\) at \(s_2=s_2^*\), a symmetric equilibrium must satisfy \(F_1(h_1^*;s_1,A_1,A_2)(h_1-h_1^*)\le0\) for all
\(h_1\in[0,1]\); equivalently, \(h_1^*=0\) with
\(F_1(0;s_1,A_1,A_2)\le0\), or \(h_1^*=1\) with
\(F_1(1;s_1,A_1,A_2)\ge0\), or \(h_1^*\in(0,1)\) with
\(F_1(h_1^*;s_1,A_1,A_2)=0\).
\end{lemma}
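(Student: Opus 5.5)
This is a necessary first-order condition for a one-dimensional maximization over the compact interval $[0,1]$. The plan is to fix the opponent's action at $h_1^{-j}=h_1^*$ and study the map $h\mapsto\Phi_1^j(h,h_1^*;s_1,A_1,A_2)$. If the symmetric profile is an equilibrium, then $h_1^*$ maximizes this map over $[0,1]$. The condition $F_1(h_1^*)(h_1-h_1^*)\le 0$ for all $h_1\in[0,1]$ is the standard variational inequality for a maximizer of a function of one variable on an interval. So the work is in two places: establishing differentiability at $h_1^*$, and then carrying out the one-sided derivative argument.

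\emph{Step 1: differentiability.} I will show that $h\mapsto\Phi_1^j(h,h_1^*)$ is differentiable at $h=h_1^*$, one-sided at an endpoint. For $s_1\in D_1$ and every feasible $h$, the map $h\mapsto g(s_1,h;A_1)$ is affine. This holds because the argument of the $\max\{\cdot,0\}$ stays positive by \cref{ass:skill_domain}, and $s_1<\alpha_1$ means the smooth branch applies. Its derivative is $(1-\pi_1)(\phi\pi_1+\gamma)(\alpha_1-s_1)$, and $g(s_1,h_1^*;A_1)=s_2^*$. The margin $M(s_1,h;A_1)$ is affine in $h$. The maps $\bar V_2(\cdot,A_2)$ and $\Pi_2^*(\cdot,A_2)$ are differentiable at $s_2^*$ by hypothesis, so by the chain rule each composite with $g$ is differentiable at $h_1^*$. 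The logit share $\sigma_1^j$ is a smooth function of the difference $\bar V_2(g(s_1,h_1^*))-\bar V_2(g(s_1,h))$, and hence is also differentiable at $h_1^*$. The last term of \cref{eq:period1_objective} is $[1-\sigma_1^j]\beta\Pi_2^*(g(s_1,h_1^*))$. In it the $\Pi_2^*$ factor does not depend on $h$, so only $\sigma_1^j$ varies. By definition, the resulting derivative at the symmetric point is $F_1(h_1^*;s_1,A_1,A_2)$.

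\emph{Step 2: the variational inequality.} Suppose $h_1^*$ is a best response to itself. First take $h_1^*<1$ and suppose $F_1(h_1^*)>0$. By the definition of the derivative, $\Phi_1^j(h_1^*+\epsilon,h_1^*)>\Phi_1^j(h_1^*,h_1^*)$ for small $\epsilon>0$. This is a profitable deviation, a contradiction. The symmetric argument with $\epsilon<0$ shows that if $h_1^*>0$ then $F_1(h_1^*)\ge0$ is forced. Combining the two cases gives the three alternatives. At $h_1^*=0$, only upward deviations are feasible, so $F_1(0)\le0$. At $h_1^*=1$, only downward deviations are feasible, so $F_1(1)\ge0$. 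In the interior both directions are feasible, so $F_1(h_1^*)=0$. Each alternative is equivalent to $F_1(h_1^*)(h_1-h_1^*)\le0$ for all $h_1\in[0,1]$, because $h_1-h_1^*$ has a fixed sign, or takes both signs in the interior case.

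\emph{Main obstacle.} The delicate step is Step 1. The continuation objects $\bar V_2$ and $\Pi_2^*$ depend on $s_2$ through the terminal equilibrium $h_2^*(s_2)$. That function is only piecewise smooth: it has kinks where $h_2^*$ enters or leaves the interior regime of \cref{prop:duopoly_terminal}. This is why the lemma assumes differentiability only at $s_2^*$. So I will check carefully that nothing more than pointwise differentiability at $s_2^*$ is used. Two things need confirming. First, the opponent's term $\Pi_2^*(g(s_1,h_1^{-j}))$ is held fixed. Second, \cref{ass:skill} keeps $g(s_1,h;A_1)$ below $\alpha_1$, so no other kink in $g$ intervenes. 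Finally, I need $g(s_1,h;A_1)\in D_2$ for $h$ near $h_1^*$, which is guaranteed by $s_1\in D_1$. This ensures $\bar V_2$ and $\Pi_2^*$ are well defined along the deviation path.
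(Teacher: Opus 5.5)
Your proposal is correct and follows essentially the same route as the paper: fix the rival at $h_1^*$, note that $h_1^*$ maximizes $\Phi_1^j(\cdot,h_1^*;s_1,A_1,A_2)$ on $[0,1]$, identify the own-action derivative there with $F_1$, and read off the variational inequality from one-sided deviations. The paper also writes that derivative out explicitly: the continuation terms cancel, $\sigma_1^j=1/2$, and the logit slope is $(\beta/4\eta)\,d\bar V_2/dh_1$. You instead take $F_1$ by definition and spell out the chain-rule differentiability argument, which the paper leaves implicit.
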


The lemma gives conditions that any symmetric equilibrium must satisfy. These conditions are necessary but not sufficient: a profile meeting them need not be unique, and need not be an equilibrium. The proposition below
adds sufficient regularity conditions for both. First,
\(F_1(h_1;s_1,A_1,A_2)\) is strictly decreasing in \(h_1\), so the
candidate conditions in \cref{lem:period1_candidates} select at most one
symmetric candidate. Second, for each firm \(j\), the payoff
\(\Phi_1^j(h_1^j,h_1^{-j};s_1,A_1,A_2)\) is strictly concave in
\(h_1^j\) for every fixed \(h_1^{-j}\), so the first-order condition is
sufficient for a global best response. Together, these two properties
turn the necessary conditions in the lemma into a unique symmetric
equilibrium.

For tractability, we impose skill-domain restrictions that keep terminal
engagement fixed along every period-1 skill trajectory starting from
\(s_1\). The issue is that \(F_1\), \(\Pi_2^*\), and \(\bar V_2\) inherit
the shape of the terminal policy \(h_2^*(g(s_1,h_1;A_1))\). By \cref{cor:terminal_monotone}, terminal engagement is zero below a lower skill cutoff, interior between the two cutoffs, and full above an upper cutoff. If the reachable continuation states straddle one of these cutoffs, terminal engagement changes with \(h_1\), creating additional curvature and possible kinks. We therefore focus on values of \(s_1\) for which every
reachable continuation state lies entirely in a terminal corner region,
so \(h_2^*(g(s_1,h_1;A_1))\) is either always zero or always one,
independently of \(h_1\).

For the power-wage setting, let \(s_1^D\) be the period-1 threshold below which even full engagement keeps the worker below the terminal no-engagement cutoff, so terminal engagement is always zero, and let \(s_1^F\) be the threshold above which even zero engagement keeps the worker above the terminal full-engagement cutoff, so terminal engagement is always one. These thresholds pull the domain-restricted terminal cutoffs \(\hat{s}_2^D\) and \(\overline{s}_2^D\) of \cref{cor:terminal_monotone} back to period~1; their explicit formulas, and the requirement that each region be nonempty after intersection with \(D_1\), are given in \cref{app:period1_domain_size}, which also shows that the fixed-terminal regions \(s_1<s_1^D\) and \(s_1>s_1^F\) cover at least \(93.8\%\) of \(D_1\) on average, with median coverage \(100\%\), across power-wage examples satisfying the model assumptions and terminal-domain conditions.
\begin{proposition}
\label{prop:period1_regimes}
Suppose the conditions of \cref{cor:terminal_monotone} hold, \(s_1\in D_1\), and \(s_1<s_1^D\) or \(s_1>s_1^F\). There is a threshold \(\eta_1(s_1)>0\) such that, whenever \(\eta\ge\eta_1(s_1)\), the engagement gain \(F_1(h_1;s_1,A_1,A_2)\) is strictly decreasing in \(h_1\) and each firm's objective \(\Phi_1^j(h_1^j,h_1^{-j};s_1,A_1,A_2)\) is strictly concave in \(h_1^j\), so the period~1 game has a unique symmetric equilibrium \(h_1^*(s_1)\), found by comparing the engagement gain at the boundaries with zero:
\begin{enumerate}
  \item[\textup{(i)}] if \(F_1(1;s_1,A_1,A_2)\ge0\), both firms engage fully, \(h_1^*(s_1)=1\);
  \item[\textup{(ii)}] if \(F_1(1;s_1,A_1,A_2)<0<F_1(0;s_1,A_1,A_2)\), both engage at the interior level \(h_1^*(s_1)\in(0,1)\) solving \(F_1(h_1;s_1,A_1,A_2)=0\);
  \item[\textup{(iii)}] if \(F_1(0;s_1,A_1,A_2)\le0\), neither firm engages, \(h_1^*(s_1)=0\).
\end{enumerate}
Terminal engagement at the realized continuation state is then fixed: disengagement, \(h_2^*(g(s_1,h_1^*(s_1);A_1))=0\), when \(s_1<s_1^D\), and full engagement, \(h_2^*(g(s_1,h_1^*(s_1);A_1))=1\), when \(s_1>s_1^F\).
\end{proposition}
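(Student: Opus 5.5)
The plan is to exploit the fact that, on the restricted domain \(s_1<s_1^D\) or \(s_1>s_1^F\), terminal engagement \(h_2^*(g(s_1,h_1;A_1))\) is a constant \(\bar h\in\{0,1\}\) for every \(h_1\in[0,1]\). This makes all period-1 primitives smooth and explicit. The skill transition \(s_2(h_1)=g(s_1,h_1;A_1)\) is affine in \(h_1\) with slope \(c_1\triangleq(1-\pi_1)(\phi\pi_1+\gamma)(\alpha_1-s_1)>0\), using \cref{ass:skill_domain} to drop the \(\max\). Then \(\zeta=g(s_2,\bar h;A_2)\) is affine in \(s_2\) with slope \(1-\Gamma(\bar h;A_2)\). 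The profit \(\Pi_2^*(s_2)=\tfrac12 M(s_2,\bar h;A_2)\) and the worker value \(\bar V_2(s_2)=W(s_2)+\beta B(\zeta)+\eta\log 2\) are \(C^2\) in \(h_1\), and \(M(s_1,h_1;A_1)\) is affine in \(h_1\). Differentiability in \cref{lem:period1_candidates} therefore holds everywhere on the relevant range, and every first and second derivative below is bounded uniformly on \([0,1]\) by constants that do not depend on \(\eta\).

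I would split \(\Phi_1^j\) into two pieces. The first is the sorting term, \(\sigma_1^j\,[M+\beta\Pi_2^*(s_2^j)-\beta\Pi_2^*(s_2^{-j})]\). The second is the \(\sigma\)-free term \(\beta\Pi_2^*(s_2^{-j})\), which does not depend on \(h_1^j\). Writing \(x=\beta[\bar V_2(s_2^j)-\bar V_2(s_2^{-j})]/\eta\), we have \(\sigma_1^j=\Lambda(x)\) with \(\Lambda\) the logistic function, and
\[
\partial_{h_1^j}\sigma_1^j=\Lambda'(x)\,\beta\bar V_2'(s_2^j)c_1/\eta,
\]
\[
\partial^2_{h_1^j}\sigma_1^j=\Lambda''(x)\bigl(\beta\bar V_2' c_1/\eta\bigr)^2+\Lambda'(x)\,\beta\bar V_2''c_1^2/\eta .
\]
Both are \(O(1/\eta)\) uniformly, since \(|\Lambda'|,|\Lambda''|\le 1/4\) and \(\eta^{-2}\le\eta^{-1}\) for \(\eta\ge1\). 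The own second derivative is then
\[
\partial^2_{h_1^j}\Phi_1^j=\sigma_1^j\,\beta\,\partial^2_{h_1^j}\Pi_2^*\bigl(s_2^j\bigr)+O(1/\eta).
\]
Here \(\partial^2\Pi_2^*=\tfrac12 c_1^2\,\partial^2_{s}M(s,\bar h;A_2)=-\tfrac12 c_1^2W''(s_2)<0\), because \(M\) is affine in skill except for \(-W(s)\). Since \(W''\) is bounded below by a positive constant on the compact reachable skill set, and \(\sigma_1^j\ge\Lambda(-K/\eta)\) is bounded away from zero (it tends to \(1/2\) as \(\eta\to\infty\)), the leading term is strictly negative and bounded away from zero. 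Choosing \(\eta\) large enough makes the \(O(1/\eta)\) remainder smaller, which gives strict concavity of \(\Phi_1^j\) in \(h_1^j\) for every fixed \(h_1^{-j}\).

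I would handle \(F_1(h)\) the same way. At the symmetric profile \(x=0\) and \(\sigma=1/2\), so
\[
F_1(h)=\tfrac12\bigl[\partial_h M(s_1,h;A_1)+\beta\,\partial_h\Pi_2^*(s_2(h))\bigr]+\tfrac{\beta}{4\eta}\bar V_2'(s_2(h))\,c_1\,M(s_1,h;A_1).
\]
Its derivative in \(h\) is \(-\tfrac{\beta}{4}c_1^2W''(s_2(h))+O(1/\eta)\), since \(\partial_h M\) is constant. This is strictly negative for large \(\eta\), so \(F_1\) is strictly decreasing. I would define \(\eta_1(s_1)\) as the maximum of the two thresholds, taking \(\eta_1\ge1\), and compute the bounds explicitly from \(\sup W'',\sup B'',\sup B',\sup M\) over the compact reachable set.

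Existence and uniqueness then follow from standard arguments. Strict concavity means any profile satisfying the variational inequality of \cref{lem:period1_candidates} is a mutual best response, so it is an equilibrium. Strict monotonicity of \(F_1\) means exactly one of the cases (i)–(iii) holds and identifies a single \(h_1^*\), and continuity of \(F_1\) gives the root in case (ii). The last sentence of the statement is immediate from the definitions of \(s_1^D\) and \(s_1^F\), because the constancy of \(h_2^*\) holds at every reachable state, including the realized one.

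The main obstacle is the uniform bookkeeping. The \(O(1/\eta)\) bound must hold for all \(h_1^{-j}\), not only at the symmetric profile, and \(\sigma_1^j\) must stay bounded below. I would secure both by bounding \(|\bar V_2(s)-\bar V_2(s')|\) on the reachable compact set, which makes \(|x|\le K/\eta\).
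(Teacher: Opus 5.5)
Your proposal is correct and follows essentially the same route as the paper's proof. You fix the terminal corner so that $\Pi_2^*$ and $\bar V_2$ are smooth on the reachable set, write $F_1$ as an $O(1/\eta)$ sorting term plus a term with slope $-\tfrac{\beta}{4}c_1^2W''(s_2)<0$, show $\partial^2\Phi_1^j/\partial (h_1^j)^2=\sigma_1^j\beta\,\partial^2_{h}\Pi_2^*+O(1/\eta)$ uniformly in the rival's action, and close with the variational inequality of \cref{lem:period1_candidates}. The differences are cosmetic: the paper factors $\sigma_1^j$ out of every term of the second derivative, so $\sigma_1^j>0$ suffices, whereas you bound $\sigma_1^j$ away from zero via $|x|\le K/\eta$; and your use of $W''>0$ on the compact reachable set is justified because the hypotheses of \cref{cor:terminal_monotone} impose the power wage.
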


\cref{prop:period1_regimes} shows that, on the fixed-terminal regions, ex ante identical firms settle on a single common engagement level. Intuitively, a fixed terminal engagement removes the curvature that portable skill would otherwise inject, so the period-1 best responses are well behaved and the symmetric equilibrium is unique. The threshold \(\eta_1(s_1)\) is a regularity threshold for the
period-1 game, not a terminal-engagement condition. Terminal engagement
is fixed by the skill restrictions \(s_1<s_1^D\) and \(s_1>s_1^F\);
the role of \(\eta\ge\eta_1(s_1)\) is to make
\(F_1(h_1;s_1,A_1,A_2)\) strictly decreasing in \(h_1\) and
\(\Phi_1^j(h_1^j,h_1^{-j};s_1,A_1,A_2)\) strictly concave in \(h_1^j\).
If \(\eta<\eta_1(s_1)\), \cref{lem:period1_candidates} still gives
necessary conditions for symmetric equilibria, but uniqueness and
sufficiency are no longer guaranteed.

We now compare period-1 engagement under worker mobility with the single-firm benchmark of \cref{sec:monopoly}. We focus on the region $s_1<s_1^D$, in which the worker is disengaged in period~2 both under mobility and without it. The two period-1 problems then share the same continuation, so the comparison isolates the forces mobility introduces in period 1: the sorting gain and the skill spillover, rather than differences in anticipated period-2 engagement.
\begin{corollary}
\label{cor:mobility_vs_monopoly}
Suppose the conditions of \cref{prop:period1_regimes} hold with
\(s_1\in D_1\) and $s_1<s_1^D$, and let $h_1^{\mathrm{sf}}(s_1)$ denote the single firm's period-1
engagement (\cref{prop:monopoly_period1}). There exists a threshold
$\eta_2(s_1)\ge\eta_1(s_1)$ such that for all $\eta\ge\eta_2(s_1)$ the
unique symmetric equilibrium satisfies $h_1^*(s_1)\le h_1^{\mathrm{sf}}(s_1)$, with
strict inequality whenever $h_1^{\mathrm{sf}}(s_1)\in(0,1)$.\footnote{The restriction to $s_1<s_1^D$ does not tilt the comparison:
the single firm's policy bands do not depend on $\eta$, whereas $s_1^D$
rises with $\eta$, so at the threshold $\eta_2(s_1)$ the region
typically spans workers the single firm engages fully, partially, and
not at all. Its role is identification: it is the only region on which
the firm-under-mobility's terminal play matches the single firm's invariable
terminal disengagement, so the period-1 gap is attributable to
mobility alone.}
\end{corollary}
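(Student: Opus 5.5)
The plan is a direct comparison of first-order conditions. On the region \(s_1<s_1^D\), \cref{prop:period1_regimes} gives \(h_2^*(g(s_1,h_1;A_1))=0\) for every \(h_1\in[0,1]\). The continuation objects are therefore explicit:
\[
\Pi_2^*(s_2,A_2)=\tfrac12 M(s_2,0;A_2),\qquad
\bar V_2(s_2,A_2)=W(s_2)+\beta B\bigl(g(s_2,0;A_2)\bigr)+\eta\log 2 .
\]
Both are \(C^1\) in \(s_2\) on the reachable set, since that set is bounded away from \(0\) by \cref{ass:skill_domain}. The single firm faces the same continuation with weight one instead of one half. Write \(M_h\equiv -\lambda R\delta(1-\pi_1)(\alpha_1-s_1)<0\) and \(g_h\equiv(1-\pi_1)(\phi\pi_1+\gamma)(\alpha_1-s_1)>0\); both are constant in \(h_1\). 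The single firm's marginal payoff is then
\[
G(h_1)=M_h+\beta\,M_s\bigl(g(s_1,h_1;A_1),0;A_2\bigr)\,g_h,
\]
which is strictly decreasing in \(h_1\), as used in \cref{prop:monopoly_period1}.

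Next I compute \(F_1\) by differentiating \cref{eq:period1_objective} in \(h_1^j\) and evaluating at the symmetric profile. There \(\sigma_1^j=\tfrac12\), and \(\partial\sigma_1^j/\partial h_1^j=\tfrac{\beta}{4\eta}\,\bar V_2'(s_2)\,g_h\). The \(\sigma'\)-terms multiplying \(\beta\Pi_2^*\) come from the own-firm and rival brackets. They cancel at symmetry because both are evaluated at the same \(s_2\), leaving \(\sigma'\,M(s_1,h_1;A_1)\). This gives
\[
2F_1(h_1)=M_h+\tfrac{\beta}{2}M_s(s_2,0;A_2)\,g_h+\tfrac{\beta}{2\eta}\,\bar V_2'(s_2)\,g_h\,M(s_1,h_1;A_1),
\qquad s_2=g(s_1,h_1;A_1).
\]
The middle term is the halved fallback value, which is the appropriability loss. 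The last term is the sorting gain, and it is \(O(1/\eta)\).

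I then bound the sorting term uniformly. The quantity \(\bar V_2'(s_2)=W'(s_2)+\beta B'(g(s_2,0;A_2))\,[1+\gamma(1-\pi_2)]\) is continuous on the compact reachable interval, and \(M(s_1,\cdot;A_1)\) is bounded and positive by \cref{ass:wages_revenue}. Hence there is a constant \(K(s_1)\), independent of \(\eta\) and \(h_1\), with \(0\le\tfrac{\beta}{2}\bar V_2' g_h M\le K\). The argument then splits by the single-firm regime.

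\emph{Case \(h_1^{\mathrm{sf}}=1\).} The inequality \(h_1^*\le h_1^{\mathrm{sf}}\) is trivial.

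\emph{Case \(h_1^{\mathrm{sf}}\in(0,1)\).} Here \(G(h_1^{\mathrm{sf}})=0\), which forces \(\beta M_s g_h=-M_h>0\) at \(s_2=s^*\). Therefore \(2F_1(h_1^{\mathrm{sf}})\le M_h/2+K/\eta\), which is negative once \(\eta>-2K/M_h\).

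\emph{Case \(h_1^{\mathrm{sf}}=0\).} Here \(G(0)\le0\), so \(\beta M_s g_h\le -M_h\) at \(s_2=g(s_1,0;A_1)\), whether or not \(M_s\) is positive. This yields \(2F_1(0)\le \max\{M_h,\,M_h/2\}+K/\eta=M_h/2+K/\eta<0\) for the same \(\eta\).

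I set \(\eta_2(s_1)=\max\{\eta_1(s_1),\,-2K/M_h\}\). For \(\eta\ge\eta_2(s_1)\) (taking a strict margin if needed), \cref{prop:period1_regimes} makes \(F_1\) strictly decreasing and the symmetric equilibrium unique. With \(F_1(h_1^{\mathrm{sf}})<0\), the case list of that proposition then gives \(h_1^*<h_1^{\mathrm{sf}}\) in the interior case, and \(h_1^*=0\) when \(h_1^{\mathrm{sf}}=0\).

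The main obstacle is the uniform bound \(K\). It requires checking that \(\bar V_2'\) stays finite on the whole reachable continuation set; this uses that the set lies in \(D_2\), away from the zero-skill boundary where \(W'\) could misbehave. It also requires justifying the symmetric-profile cancellation of the \(\Pi_2^*\) terms carefully, including the rival-skill term, which does not depend on \(h_1^j\).
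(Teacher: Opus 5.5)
Your argument is correct and is essentially the paper's proof: the paper likewise writes \(F_1=\tfrac1\eta\bigl[\tfrac\beta4\bar V_2'g_hM\bigr]+\tfrac14F_1^{\mathrm{sf}}+\tfrac14\,\partial M/\partial h_1\), bounds the bracket by an \(\eta\)-independent constant on the compact reachable set, deduces \(F_1<\tfrac14F_1^{\mathrm{sf}}\) (using a factor-of-two safety margin, which is your ``strict margin''), and reads off the comparison from the KKT conditions at \(h_1^{\mathrm{sf}}(s_1)\). The one ingredient you omit is the term \(2\max_{s_2\in I}\bar\eta(s_2)\) in the paper's \(\eta_2(s_1)\), which guarantees terminal disengagement on the whole reachable set for every \(\eta\ge\eta_2(s_1)\); because \(s_1^D\) itself depends on \(\eta\), without that term you are implicitly assuming \(s_1<s_1^D\) separately at each \(\eta\) you consider.
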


\cref{cor:mobility_vs_monopoly} isolates the region in which mobility works in the classical Becker direction. When workers respond weakly to differences in skill trajectories, the sorting gain is small but the portability of skill remains. A firm with mobile workers therefore invests no more, and typically less, than a single firm that captures the full return on the skill it builds.

Mobility can also shift engagement across time. In the single-firm benchmark of \cref{sec:monopoly}, a below-benchmark worker is never engaged in the terminal period; any engagement occurs only in period~1, as an investment in future fallback skill. With mobility, by contrast, terminal engagement can arise even when firms do not engage the worker in period~1, because it attracts workers at once rather than serving as an investment whose return may spill over to the other firm.
\begin{example}
\label{ex:reversals}
For the parameterization reported in \cref{app:timing_reversal_learning}, the unique symmetric equilibrium satisfies \(h_1^*(s_1)=0\) while the induced continuation state satisfies \(h_2^*(g(s_1,0;A_1))=1\). Mobility can therefore reverse the timing of engagement: a worker ignored before re-sorting is fully engaged after re-sorting.
\end{example}

Because the example depends on the skill transition, \cref{app:timing_reversal_learning} tests robustness by replacing the baseline learning term $\phi h_t\pi_t(1-\pi_t)(\alpha_t-s_t)$ with alternative terms $\phi h_t\ell(\pi_t)(\alpha_t-s_t)$. The timing reversal persists for a wide range of exposure--practice profiles, such as $\ell(\pi)=\pi^a(1-\pi)^c$ and $\ell(\pi)=(\kappa+\pi)(1-\pi)$, that preserve sufficient terminal skill-building at $\pi_2=0.35$; it disappears when the alternative profile makes terminal skill-building too weak.
\subsection{Asymmetric Specialization}
\label{sec:asymmetry_duopoly}

Ex ante identical firms need not behave identically. The reason is that skill, once built, is portable. When a firm invests in a worker's skill in period 1, the worker can switch firms in period 2 and carry that skill elsewhere. The firm bears the cost of building skill but cannot fully capture its return. No such force operates in the terminal period, because there is no period after it for workers to move into.

Whether each firm's engagement incentive rises or falls when the other firm engages more is measured by the cross-partial of firm \(j\)'s period-1 objective, \(\partial^2 \Phi_1^j/\partial h_1^j\,\partial h_1^{-j}\). When the other firm invests more heavily in skill, two opposing forces act on firm \(j\). First, more continuation skill enters the shared labor pool once workers re-sort, which weakens firm \(j\)'s own incentive to build skill. Second, the other firm's engagement draws workers away in period 1, so firm \(j\)'s engagement now applies to fewer workers, lowering its marginal cost of engaging and thus strengthening its incentive to engage. Evaluating this cross-partial at a symmetric profile (the algebra is in \cref{app:proofs}), engagement choices are local strategic complements if and only if
\begin{align}\label{eq:complements}
2\beta\,
\Pi_2^{*\prime}\!\bigl(g(s_1,h_1;A_1),A_2\bigr)
(\phi\pi_1+\gamma)
<
\lambda R\delta.
\end{align}

The left side of \cref{eq:complements} is the pooled-skill externality; the right side is the current-period cost effect. When the externality dominates and the condition fails, incentives are local strategic substitutes and asymmetric equilibria become possible, though the condition does not guarantee they exist.

\cref{fig:asym_eq} illustrates such a case for an admissible power-wage example. Condition~\cref{eq:complements} fails at the symmetric profile, so engagement choices are local strategic substitutes; each firm's best response slopes downward, with \(BR(0)=1\) and \(BR(1)=0\), and the two cross at one symmetric and two asymmetric equilibria. At the asymmetric equilibria one firm engages fully and the other not at all, and the disengaged firm free-rides on the skill the other builds, out-earning it.

\begin{figure}[t]
\centering
\includegraphics[width=0.5\linewidth]{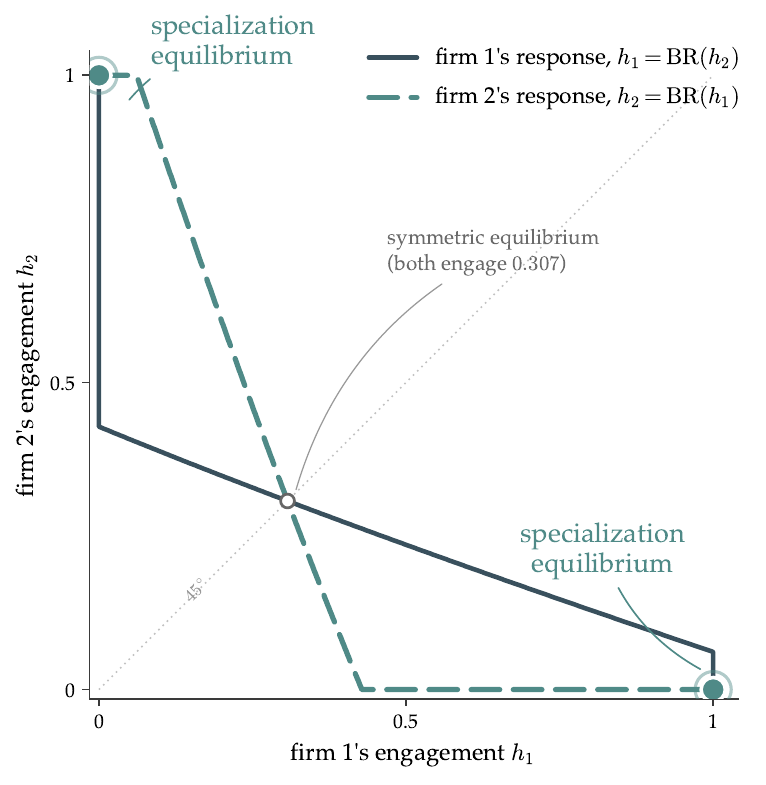}
\caption{Ex ante identical firms can specialize, and the free-rider out-earns the builder. Each firm's period-1 best response declines in the other firm's engagement, and the two cross at one symmetric equilibrium and a pair of asymmetric equilibria \((0,1)\) and \((1,0)\) in which one firm engages fully and the other not at all; at the asymmetric equilibria the disengaged firm earns \(7.732\) per worker against the engaged firm's \(7.465\), because it draws on the skill the other firm builds. Parameter values: \(B(s)=W(s)=s^{1.08}\), \(\alpha_1=0.76\), \(\alpha_2=0.82\), \(\pi_1=0.72\), \(\pi_2=0.48\), \(\phi=3.4\), \(\gamma=0.011\), \(\delta=0.29\), \(\lambda R=20\), \(\beta=0.5\), \(\eta=16\), \(s_1=0.36\).}
\label{fig:asym_eq}
\end{figure}

At the asymmetric profiles the firms specialize: one engages fully and builds skill, while the other disengages and free-rides on the portable skill that enters the shared pool after re-sorting. Mobility reverses the payoff ranking, so the free-rider out-earns the builder (\(7.732\) versus \(7.465\)) despite offering the weaker skill trajectory. Which firm plays which role depends on history or coordination, but in either case two ex ante identical firms split endogenously into a skill-builder and a free-rider.
\section{Concluding Remarks}\label{sec:discussion}

As AI handles more routine work, routing the rest to the machine is the obvious move. Yet keeping the worker engaged, though it lowers current output, preserves the fallback skill the firm may need when AI fails, and, once workers are mobile, shapes the skill trajectory that determines which firm they join.

The two motives point to different workers. The fallback motive is strongest at the bottom of the skill distribution, where engagement builds the most capability for future failure states; the sorting motive is strongest near the AI frontier, where engagement is least costly and the portable skill it builds is most valuable in the labor market. Worker mobility can therefore reverse who is engaged, depress investment by weakening the firm's hold on the skill it builds, or sustain asymmetric specialization in which one of two ex ante identical firms builds skill and the other draws on it. It also pulls the two dimensions of AI progress apart: under mobility a more capable AI raises engagement by strengthening the skill trajectory a job can offer, whereas a more reliable AI can raise or lower it, so engagement may be highest at intermediate reliability.

The framework extends in several directions. A longer horizon would let worker skill and AI capability co-evolve, so that engagement today shapes the whole skill path rather than a single fallback. Making AI progress itself respond to how intensively firms engage would close that loop. A richer labor market, in which the sorting motive operates through wages as well as job design, would let firms compete on pay and skill trajectory at once. Beyond these extensions, the central prediction is testable: because mobility can reverse engagement from the least-skilled to the most-skilled workers, it invites study wherever task-level human--AI allocation and worker movement can be observed together.

Human involvement in AI-assisted production is an investment decision about fallback capacity and portable human capital. Treating that decision as a cost to be minimized, routing each residual task to the machine, leaves a firm short of the fallback capacity it will need when the machine fails and short of the skill trajectory that keeps its strongest workers from leaving. A radiology group that sends every routine scan to its AI may find, the first time a hard case arrives or the system is unavailable, that its own unaided reading has eroded below what the case demands, and that the strong readers whose skills it chose not to build have already left for a group that invested in them.

\setlength{\bibsep}{0pt plus 0.3pt}
\bibliographystyle{informs2014-noperiods}
\bibliography{references}
\clearpage

\setcounter{page}{1}
\renewcommand{\thepage}{OA\arabic{page}}

\setcounter{equation}{0}
\renewcommand{\theequation}{OA\arabic{equation}}
\renewcommand{\theHequation}{OA.\arabic{equation}}
\setcounter{claim}{0}
\renewcommand{\theclaim}{OA\arabic{claim}}
\renewcommand{\thesection}{OA.\arabic{section}}
\renewcommand{\theHsection}{OA.\arabic{section}}
\setcounter{section}{0}
\setcounter{table}{0}
\renewcommand{\thetable}{OA\arabic{table}}
\renewcommand{\theHtable}{OA.\arabic{table}}
\setcounter{figure}{0}
\renewcommand{\thefigure}{OA\arabic{figure}}
\renewcommand{\theHfigure}{OA.\arabic{figure}}

\begingroup

\begin{center}
\textbf{\large Online Appendix to ``Managing the Human Fallback: Skill Investment Under Improving AI and Worker Mobility''}
\end{center}

\medskip

\noindent This online appendix supplements the main text. \cref{app:measurement} relates the model primitives to observable workflow data; \cref{app:proofs} proves all results in the order they appear in the paper; \cref{app:skill_transition_table} restates the single-firm engagement policy as a skill-transition table; and the remaining sections report the size of the terminal and period-1 skill domains (\cref{app:d2_size,app:period1_domain_size}), two robustness checks on the reliability pattern and the timing reversal (\cref{app:learning_profile_robustness,app:timing_reversal_learning}), and an extension that lets firms attract workers with a pay bonus (\cref{app:bonus}).

\smallskip

\begin{table}[H]
\caption{Summary of Notation}
\label{tab:notation}
\centering
\footnotesize
\renewcommand{\arraystretch}{1.0}
\begin{tabular}{@{}l l@{}}
\toprule
Symbol & Description \\
\midrule
\multicolumn{2}{@{}l}{\itshape Time and AI technology}\\
$t$ & Period index, $T=2$ \\
$A_t$ & AI technology level, period $t$ \\
$\alpha_t\triangleq\alpha(A_t)$ & AI capability \\
$\pi_t\triangleq\pi(A_t)$ & Failure probability ($1-\pi_t$ reliability) \\
\addlinespace[3pt]
\multicolumn{2}{@{}l}{\itshape Worker skill and engagement}\\
$s_t$ & Worker skill, $s_t\in[0,1]$ \\
$h_t$ & Engagement, $h_t\in[0,1]$ \\
$h^c(A_t)$ & Skill-preserving engagement \\
\addlinespace[3pt]
\multicolumn{2}{@{}l}{\itshape Skill dynamics}\\
$\delta$ & Effect of intervention on AI output \\
$\gamma$ & Skill-erosion rate, passive reliance \\
$\phi$ & Strength of learning from engagement \\
$\Gamma(h_t;A_t)$ & Net skill-adjustment coefficient \\
$g(s_t,h_t;A_t)$ & Skill transition function \\
\addlinespace[3pt]
\multicolumn{2}{@{}l}{\itshape Worker mobility}\\
$\eta$ & Sorting-friction parameter \\
$\sigma_t^j$ & Logit share of workers at firm $j$ \\
\addlinespace[3pt]
\multicolumn{2}{@{}l}{\itshape Production}\\
$q^{\mathrm{on}}(s_t,h_t;A_t)$ & Throughput, AI functioning \\
$q^{\mathrm{off}}(s_t)$ & Throughput, AI failed \\
$S(s_t,h_t;A_t)$ & Operational revenue per worker \\
\addlinespace[3pt]
\multicolumn{2}{@{}l}{\itshape Firm payoffs}\\
$M(s_t,h_t;A_t)$ & Per-worker margin, $S-W$ \\
$W(s)$ & Market wage schedule \\
$\lambda R$ & Task load $\lambda$ times revenue $R$ \\
$\beta$ & Discount factor \\
$B(s)$ & Post-horizon value of skill \\
\bottomrule
\end{tabular}
\end{table}

\medskip

\section{Notation and Empirical Counterparts}\label{app:notation}\label{app:measurement}

\cref{tab:notation} lists the notation used throughout the paper. Each primitive also has a direct empirical counterpart in task-level workflow data, as we describe below.

AI capability $\alpha_t$ can be measured from tasks completed by AI with little or no human involvement, conditional on AI functioning, whereas failure probability $\pi_t$ can be measured from cases in which AI cannot be used as the autonomous producer because it is unavailable, blocked, fails validation, or faces an edge case requiring human fallback. Worker skill $s_t$ can be measured from no-AI holdout tasks or fallback tasks performed independently by the worker. Variation in engagement across otherwise comparable AI-on tasks can be used to estimate $\delta$, by measuring how human involvement shifts realized throughput relative to the AI-alone and worker-alone benchmarks. Operational records provide $\lambda$ and $R$, whereas salary bands and external labor-market data can be used to calibrate $W(\cdot)$ and $\eta$.

The skill-dynamics parameters require repeated measures of worker performance. Declines in independent performance after periods of low engagement and high AI reliance can be used to estimate or calibrate the erosion rate $\gamma$, whereas improvements after hands-on engagement with AI-assisted tasks can be used to estimate or calibrate the learning parameter $\phi$. Existing empirical work provides guidance for these measurements: studies of AI-assisted work use variation in AI access and workflow design to estimate changes in productivity or output, which speaks to $\alpha_t$, $s_t$, and $\delta$ \citep{brynjolfsson2025genaiwork,dellacqua2023,ni2024ecommerce,Vaccaro2024combinations}; studies that measure later unaided performance after AI exposure speak to skill erosion and learning, and therefore to $\gamma$ and $\phi$ \citep{bastani2025genai,budzyn2025endoscopist,poulidis2025selfregulated,shen2026skillformation}.

\endgroup

\section{Proofs}\label{app:proofs}

We first record a fact used in several proofs below.
\begin{claim}\label{app:margin_positivity}
Under \cref{ass:wages_revenue}, for each $t \in \{1,2\}$,
all $s_t \in (0, \alpha_t)$, and all $h_t \in [0,1]$, \(M(s_t, h_t; A_t) > 0\).
\end{claim}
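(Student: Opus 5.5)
The plan is to exploit that $M(s_t,h_t;A_t)$ is affine in $h_t$, so for fixed $s_t$ it is minimized at an endpoint of $[0,1]$. For $s_t<\alpha_t$ the slope in $h_t$ is $\lambda R\delta(1-\pi_t)(s_t-\alpha_t)<0$, so the minimum is at $h_t=1$. It therefore suffices to show
\[
M(s_t,1;A_t)=\lambda R\bigl[(1-\pi_t)\bigl((1-\delta)\alpha_t+\delta s_t\bigr)+\pi_t s_t\bigr]-W(s_t)>0
\quad\text{for all } s_t\in(0,\alpha_t).
\]

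First, I lower-bound the revenue term. Since $\delta<1$ and $s_t<\alpha_t$, we have $(1-\delta)\alpha_t+\delta s_t> s_t$. Hence $S(s_t,1;A_t)>\lambda R\,[(1-\pi_t)s_t+\pi_t s_t]=\lambda R\, s_t$. More generally, $S(s_t,h;A_t)\ge \lambda R\, s_t$ for every $h$, because both the AI-on throughput (a convex combination of $\alpha_t$ and $s_t$ with weight $\delta h\le 1$ on $s_t$) and the AI-off throughput $s_t$ are at least $s_t$. It then suffices to show $W(s_t)<\lambda R\, s_t$ on $(0,\alpha_t)$.

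Second, I use convexity of $W$ with $W(0)=0$. The chord inequality gives $W(s)\le (s/\alpha_t)\,W(\alpha_t)$ for $s\in(0,\alpha_t)$, strictly by strict convexity. The revenue condition in \cref{ass:wages_revenue}, $\lambda R>\max_{t} W(\alpha_t)/\alpha_t$, then yields $W(s)<(s/\alpha_t)W(\alpha_t)<\lambda R\, s$. Combining the two steps gives $M(s_t,h_t;A_t)\ge \lambda R\, s_t-W(s_t)>0$.

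I expect no real obstacle. The only point needing care is the chord step: $W$ is only assumed $C^2$ on $(0,1]$, but it is convex on the closed interval $[0,1]$ with $W(0)=0$. That is enough to conclude that $W(s)/s$ is nondecreasing in $s$, strictly so under strict convexity, so no differentiability at $0$ is needed.
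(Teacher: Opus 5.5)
Your proof is correct and follows essentially the paper's argument. The paper notes that $s\mapsto S(s,h_t;A_t)-W(s)$ is strictly concave on $[0,\alpha_t]$ and positive at both endpoints, using $W(0)=0$ at $s=0$ and the revenue condition at $s=\alpha_t$; you first bound revenue below by $\lambda R\, s_t$ and then apply the same convexity-plus-endpoint reasoning as a chord inequality for $W$, which gives a lower bound $M(s_t,h_t;A_t)\ge \lambda R\, s_t-W(s_t)$ that does not depend on $h_t$ or $\pi_t$.
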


\medskip

\noindent \emph{Proof.}
Fix $t \in \{1,2\}$ and $h_t \in [0,1]$. Define
\begin{align*}
f(s) \triangleq M(s, h_t; A_t) = S(s, h_t; A_t) - W(s),
\qquad s \in [0, \alpha_t].
\end{align*}
For fixed $h_t$, $S(s, h_t; A_t)$ is affine in $s$, whereas $W$ is
strictly convex by \cref{ass:wages_revenue}. Hence $f$ is
strictly concave on $[0, \alpha_t]$. At the lower endpoint, using
$W(0) = 0$, \(f(0) = \lambda R(1-\pi_t)\alpha_t(1-\delta h_t) > 0\),
because $1-\pi_t > 0$, $\alpha_t > 0$, and $\delta h_t < 1$. At the
upper endpoint, \(f(\alpha_t) = \lambda R\, \alpha_t - W(\alpha_t) > 0\)
by revenue sufficiency in \cref{ass:wages_revenue}. Strict
concavity of $f$ on $[0, \alpha_t]$ with strictly positive endpoint
values implies $f(s) > 0$ for every $s \in (0, \alpha_t)$.
\hfill \emph{Q.E.D.}

\medskip

\noindent \emph{Proof of \cref{prop:monopoly_period1}.}\label{app:proof_monopoly_period1}
Fix $s_1 \in (\underline{s}, \alpha_1)$. Under
\cref{ass:skill_domain}, the zero floor in the skill
transition never binds for any $h_1 \in [0,1]$, so
\begin{align*}
g(s_1, h_1; A_1)
= s_1 + (1-\pi_1)\bigl(h_1(\phi\pi_1 + \gamma) - \gamma\bigr)(\alpha_1 - s_1).
\end{align*}
The period-1 single-firm objective is
\begin{align*}
V(h_1) = M(s_1, h_1; A_1) + \beta\, M\!\bigl(g(s_1, h_1; A_1),\, 0;\, A_2\bigr).
\end{align*}

The first term is affine in
$h_1$. The continuation term is the sum of an affine function of
$g(s_1, h_1; A_1)$ and $-W(g(s_1, h_1; A_1))$. Because $g(s_1, h_1; A_1)$
is affine in $h_1$ with strictly positive slope
\begin{align*}
\frac{\partial g(s_1, h_1; A_1)}{\partial h_1}
= (1-\pi_1)(\phi\pi_1 + \gamma)(\alpha_1 - s_1) > 0,
\end{align*}
and $W$ is strictly convex by \cref{ass:wages_revenue}, the
continuation term is strictly concave in $h_1$. Hence $V$ is strictly
concave and has a unique maximizer $h_1^{\mathrm{sf}}(s_1) \in [0,1]$.

Differentiating gives
\begin{align*}
V'(h_1)
&= \lambda R \delta(1-\pi_1)(s_1 - \alpha_1) \\
&\quad + \beta\bigl[\lambda R \pi_2 - W'(g(s_1, h_1; A_1))\bigr]
(1-\pi_1)(\phi\pi_1 + \gamma)(\alpha_1 - s_1) \\
&= (1-\pi_1)(\alpha_1 - s_1)
\left\{-\lambda R \delta + \beta(\phi\pi_1 + \gamma)
\bigl[\lambda R \pi_2 - W'(g(s_1, h_1; A_1))\bigr]\right\}.
\end{align*}
Strict concavity of $V$ implies $V'$ is strictly decreasing. The
optimum satisfies the standard Karush–Kuhn–Tucker conditions:
$h_1^{\mathrm{sf}}(s_1) = 0$ if and only if $V'(0) \le 0$; $h_1^{\mathrm{sf}}(s_1) = 1$ if
and only if $V'(1) \ge 0$; otherwise the unique interior optimum
solves $V'(h_1) = 0$, which is equivalent to
\begin{align*}
W'\!\bigl(g(s_1, h_1^{\mathrm{sf}}(s_1); A_1)\bigr)
= \lambda R \pi_2 - \frac{\lambda R \delta}{\beta(\phi\pi_1 + \gamma)}.
\end{align*}

Next suppose
\cref{eq:interior_target_condition} holds, and let $s^*$ be defined
by~\cref{eq:s_star}. Because $W$ is strictly convex,
$W'$ is strictly increasing, and the interior first-order condition is
equivalent to $g(s_1, h_1^{\mathrm{sf}}(s_1); A_1) = s^*$. The boundary conditions
$V'(0) \le 0$ and $V'(1) \ge 0$ translate into $g(s_1, 0; A_1) \ge s^*$
and $g(s_1, 1; A_1) \le s^*$, respectively. This gives the three cases
in the proposition.

Solving $g(s_1, h_1^{\mathrm{sf}}(s_1); A_1) = s^*$ for the interior optimizer yields
\begin{align*}
h_1^{\mathrm{sf}}(s_1)
= \frac{s^* - g(s_1, 0; A_1)}{(1-\pi_1)(\phi\pi_1 + \gamma)(\alpha_1 - s_1)},
\end{align*}
which is~\cref{eq:h_interior_closed}.

Finally, differentiating the interior
closed form gives
\begin{align*}
\frac{d h_1^{\mathrm{sf}}(s_1)}{d s_1}
= \frac{s^* - \alpha_1}{(1-\pi_1)(\phi\pi_1 + \gamma)(\alpha_1 - s_1)^2} < 0,
\end{align*}
because $s^* \in (0, \alpha_1)$.
\hfill \emph{Q.E.D.}

\medskip

\noindent \emph{Proof of \cref{cor:reliability_vs_capability}.}\label{app:proof_cor_reliability}
\textbf{Part (i).} Under~\cref{eq:interior_target_condition},
$s^*$ is defined implicitly by
\begin{align*}
W'(s^*) = \lambda R \pi_2 - \frac{\lambda R \delta}{\beta(\phi\pi_1 + \gamma)}.
\end{align*}
The right-hand side does not depend on $\alpha_2$, so
$\partial s^* / \partial \alpha_2 = 0$. Differentiating both sides in
$\pi_2$ and using $W''(s^*) > 0$ gives \(\frac{\partial s^*}{\partial \pi_2} = \frac{\lambda R}{W''(s^*)} > 0\).
The thresholds $s_L$ and $s_H$ defined in~\cref{eq:s_L_s_H} are
affine functions of $s^*$ with strictly positive slopes
\begin{align*}
\frac{1}{1 - \phi\pi_1(1-\pi_1)}
\quad \text{and} \quad
\frac{1}{1 + \gamma(1-\pi_1)},
\end{align*}
respectively. The first slope is positive by
\cref{ass:skill}. Hence both thresholds are strictly
increasing in $\pi_2$ and independent of $\alpha_2$.

\textbf{Part (ii).} Fix $s_1 \in (\underline{s}, \alpha_1)$. We show
that $h_1^{\mathrm{sf}}(s_1)$ is weakly increasing in $\pi_2$ on $(0, \pi_1]$.

\emph{Interior region.} On the open set where
$g(s_1, 0; A_1) < s^*(\pi_2) < g(s_1, 1; A_1)$, the closed
form~\cref{eq:h_interior_closed} gives
\begin{align*}
h_1^{\mathrm{sf}}(s_1)
= \frac{s^*(\pi_2) - g(s_1, 0; A_1)}
       {(1-\pi_1)(\phi\pi_1 + \gamma)(\alpha_1 - s_1)}.
\end{align*}
The denominator and $g(s_1, 0; A_1)$ do not depend on $\pi_2$.
Differentiating in $\pi_2$,
\begin{align*}
\frac{\partial h_1^{\mathrm{sf}}(s_1)}{\partial \pi_2}
= \frac{1}{(1-\pi_1)(\phi\pi_1 + \gamma)(\alpha_1 - s_1)} \cdot
\frac{\partial s^*}{\partial \pi_2} > 0
\end{align*}
by part (i).

\emph{Boundary regions.} The boundary $h_1^{\mathrm{sf}}(s_1) = 0$ holds when
$g(s_1, 0; A_1) \ge s^*(\pi_2)$, equivalently $s_1 \ge s_H(\pi_2)$;
the boundary $h_1^{\mathrm{sf}}(s_1) = 1$ holds when
$g(s_1, 1; A_1) \le s^*(\pi_2)$, equivalently $s_1 \le s_L(\pi_2)$.
Because $s^*(\pi_2)$, $s_L(\pi_2)$, and $s_H(\pi_2)$ are all strictly
increasing in $\pi_2$ by part (i), increasing $\pi_2$ can only move
the worker from the no-engagement region into the interior region
(once $s_H(\pi_2)$ rises past $s_1$), and from the interior region
into the full-engagement region (once $s_L(\pi_2)$ rises past $s_1$).
The reverse transitions are impossible. Hence $h_1^{\mathrm{sf}}(s_1)$ is weakly
increasing in $\pi_2$ on the feasible domain $(0, \pi_1]$, with
strict monotonicity in the interior region. Equivalently, as $\pi_2$
falls from $\pi_1$ toward zero, $h_1^{\mathrm{sf}}(s_1)$ either remains constant
or decreases.
\hfill \emph{Q.E.D.}

\medskip

\noindent \emph{Proof of \cref{prop:fallback_targets_bottom}.} From the
skill transition $g(s_1,h_1;A_1)=s_1+(1-\pi_1)\bigl(h_1(\phi\pi_1+\gamma)-\gamma\bigr)(\alpha_1-s_1)$,
\begin{align*}
\frac{\partial g}{\partial s_1}
= 1-(1-\pi_1)\bigl(h_1(\phi\pi_1+\gamma)-\gamma\bigr)
\ge 1-(1-\pi_1)\phi\pi_1 > 0,
\end{align*}
where the bound uses $h_1\le1$ and the no-leapfrogging \cref{ass:skill}.
Thus $g$ is strictly increasing in $s_1$ for every $h_1\in[0,1]$, so by
\cref{prop:monopoly_period1} the full-engagement set
$\{s_1:g(s_1,1;A_1)\le s^*\}$ is a lower interval, and the no-engagement
set $\{s_1:g(s_1,0;A_1)\ge s^*\}$ is an upper interval, with the interior
region between them. On the interior, \cref{prop:monopoly_period1} sets
$g(s_1,h_1^{\mathrm{sf}}(s_1);A_1)=s^*$; differentiating in $s_1$ and using
$\partial g/\partial h_1=(1-\pi_1)(\phi\pi_1+\gamma)(\alpha_1-s_1)>0$,
\begin{align*}
\frac{dh_1^{\mathrm{sf}}(s_1)}{ds_1}
= -\frac{\partial g/\partial s_1}{\partial g/\partial h_1} < 0.
\end{align*}
Hence $h_1^{\mathrm{sf}}$ equals $1$ on the lower interval, strictly decreases across
the interior region, and equals $0$ on the upper interval, so it is
nonincreasing in $s_1$.
\hfill \emph{Q.E.D.}

\medskip

\noindent \emph{Proof of \cref{lem:terminal_symmetry}.}\label{app:proof_terminal_symmetry}
Suppose, to the contrary, that $(h_2^{1*},h_2^{2*})$ is a
pure-strategy Nash equilibrium with $h_2^{1*}>h_2^{2*}$. For
$s_2<\alpha_2$, the transition $g(s_2,h_2;A_2)$ is strictly increasing
in $h_2$. Because $B$ is strictly increasing, the deterministic worker
value \(V_2^j(s_2,h_2^j)=W(s_2)+\beta B(g(s_2,h_2^j;A_2))\)
is also strictly increasing in $h_2^j$. Hence
$V_2^1(s_2,h_2^{1*})>V_2^2(s_2,h_2^{2*})$, and the logit sorting rule
implies \(\sigma_2^{1*}>1/2>\sigma_2^{2*}\).
Consider firm 1's option to deviate unilaterally to the other firm's
engagement level $h_2^{2*}$, while firm 2 continues to play
$h_2^{2*}$. The resulting action profile would be
$(h_2^{2*}, h_2^{2*})$, at which the logit sorting probability is
$\sigma_2^1 = 1/2$ by symmetry of the form, and firm 1's own margin
is $M(s_2, h_2^{2*}; A_2)$. The deviating payoff would therefore
equal $(1/2)\, M(s_2, h_2^{2*}; A_2)$, and Nash equilibrium requires
that firm 1's equilibrium payoff weakly exceeds this:
\begin{align*}
\sigma_2^{1*}\, M(s_2, h_2^{1*}; A_2)
\ge
\tfrac{1}{2}\, M(s_2, h_2^{2*}; A_2).
\end{align*}
Analogously, considering firm 2's option to deviate unilaterally to
$h_2^{1*}$ while firm 1 continues to play $h_2^{1*}$, Nash equilibrium
requires
\begin{align*}
\sigma_2^{2*}\, M(s_2, h_2^{2*}; A_2)
\ge
\tfrac{1}{2}\, M(s_2, h_2^{1*}; A_2).
\end{align*}
Both margins are strictly positive by revenue sufficiency in
\cref{ass:wages_revenue}. Multiplying the two no-deviation
inequalities and dividing through by the positive product
$M(s_2, h_2^{1*}; A_2)\, M(s_2, h_2^{2*}; A_2)$ gives \(\sigma_2^{1*}\, \sigma_2^{2*} \ge \tfrac{1}{4}\).
But $\sigma_2^{1*}, \sigma_2^{2*} > 0$ and $\sigma_2^{1*} + \sigma_2^{2*} = 1$
imply $\sigma_2^{1*}\, \sigma_2^{2*} \le 1/4$, with equality only at
$\sigma_2^{1*} = \sigma_2^{2*} = 1/2$. This contradicts
$\sigma_2^{1*} > 1/2 > \sigma_2^{2*}$, so no asymmetric pure-strategy
Nash equilibrium exists.
\hfill \emph{Q.E.D.}
\medskip

\noindent \emph{Proof of \cref{prop:duopoly_terminal}.}\label{app:proof_duopoly_terminal}
Fix \(s_2\in D_2\). Since \(D_2\subset(\underline{s}_2,\alpha_2)\), the zero floor
does not bind, so
\begin{align*}
g(s_2,h;A_2)
=
s_2+(1-\pi_2)\bigl[h(\phi\pi_2+\gamma)-\gamma\bigr](\alpha_2-s_2),
\end{align*}
and therefore
\begin{align*}
\frac{\partial g(s_2,h;A_2)}{\partial h}
=
(1-\pi_2)(\phi\pi_2+\gamma)(\alpha_2-s_2)>0.
\end{align*}
Moreover,
\begin{align*}
\frac{\partial M(s_2,h;A_2)}{\partial h}
=
-\lambda R\delta(1-\pi_2)(\alpha_2-s_2)<0,
\end{align*}
because \(s_2<\alpha_2\).

First consider the engagement-gain function
\begin{align*}
F_2(h;s_2,A_2)
=
\frac{\beta(\phi\pi_2+\gamma)}{2\eta}
B'\!\bigl(g(s_2,h;A_2)\bigr)
M(s_2,h;A_2).
\end{align*}
Differentiating with respect to $h$ gives
\begin{align*}
\frac{\partial F_2(h;s_2,A_2)}{\partial h}
&=
\frac{\beta(\phi\pi_2+\gamma)}{2\eta}
B'\!\bigl(g(s_2,h;A_2)\bigr)
(1-\pi_2)(\alpha_2-s_2)\\
&\quad\times\left[
(\phi\pi_2+\gamma)
\frac{B''(g(s_2,h;A_2))}{B'(g(s_2,h;A_2))}
M(s_2,h;A_2)
-\lambda R\delta
\right].
\end{align*}
All terms outside the square brackets are strictly positive. Therefore,
condition~\cref{eq:monotone_condition} implies the expression in
brackets is strictly negative for every $h\in[0,1]$. Hence
$F_2(h;s_2,A_2)$ is strictly decreasing on $[0,1]$.

Now consider firm $j$'s terminal-period payoff,
\begin{align*}
\Pi_2^j(h_2^j,h_2^{-j};s_2,A_2)
=
\sigma_2^j(h_2^j,h_2^{-j};s_2,A_2)
M(s_2,h_2^j;A_2).
\end{align*}
Differentiating with respect to $h_2^j$ and using the logit derivative,
we obtain
\begin{align*}
\frac{\partial \Pi_2^j}{\partial h_2^j}
=
\sigma_2^j(1-\pi_2)(\alpha_2-s_2)
\left[
(1-\sigma_2^j)
\frac{\beta}{\eta}
B'\!\bigl(g(s_2,h_2^j;A_2)\bigr)
(\phi\pi_2+\gamma)
M(s_2,h_2^j;A_2)
-\lambda R\delta
\right].
\end{align*}
Because the prefactor is strictly positive, the sign of
$\partial \Pi_2^j/\partial h_2^j$ is the sign of the bracketed term.

For any fixed action by the other firm $h_2^{-j}$, define
\begin{align*}
D(h_2^j;h_2^{-j})
\triangleq
(1-\sigma_2^j)
\frac{\beta}{\eta}
B'\!\bigl(g(s_2,h_2^j;A_2)\bigr)
(\phi\pi_2+\gamma)
M(s_2,h_2^j;A_2)
-\lambda R\delta.
\end{align*}
We next show $D(h_2^j;h_2^{-j})$ is strictly decreasing in
$h_2^j$. First, $\sigma_2^j$ is strictly increasing in $h_2^j$, because
$B$ and $g$ are strictly increasing in own engagement. Hence
$1-\sigma_2^j$ is strictly decreasing in $h_2^j$. Second,
\begin{align*}
\frac{\partial}{\partial h_2^j}
\left[
B'\!\bigl(g(s_2,h_2^j;A_2)\bigr)
M(s_2,h_2^j;A_2)
\right]
\end{align*}
equals
\begin{align*}
B'\!\bigl(g(s_2,h_2^j;A_2)\bigr)
(1-\pi_2)(\alpha_2-s_2)
\left[
(\phi\pi_2+\gamma)
\frac{B''(g(s_2,h_2^j;A_2))}
     {B'(g(s_2,h_2^j;A_2))}
M(s_2,h_2^j;A_2)
-\lambda R\delta
\right].
\end{align*}
Since \(s_2\in D_2\), condition~\cref{eq:monotone_condition} applies for
every own action \(h_2^j\in[0,1]\). Hence the expression in square
brackets is strictly negative for every own action $h_2^j\in[0,1]$,
regardless of the other firm's action. Therefore the derivative above is
strictly negative.

It follows that the product
\begin{align*}
(1-\sigma_2^j)
B'\!\bigl(g(s_2,h_2^j;A_2)\bigr)
M(s_2,h_2^j;A_2)
\end{align*}
is strictly decreasing in $h_2^j$, and so
$D(h_2^j;h_2^{-j})$ is strictly decreasing in firm $j$'s own engagement
for every fixed action by the other firm. Thus firm $j$'s payoff is strictly
quasiconcave in its own action.

At a symmetric profile $h_2^j=h_2^{-j}=h$, we have
$\sigma_2^j=1/2$. Hence the first-order condition for an interior
symmetric equilibrium reduces to \(F_2(h;s_2,A_2)=\lambda R\delta\).
Because $F_2$ is strictly decreasing, there can be at most one interior
symmetric equilibrium. The boundary cases follow from the same
strict-quasiconcavity argument applied to each firm's best response to
the other firm's boundary action. 

If \(F_2(0;s_2,A_2)\le \lambda R\delta\), then at the profile \((0,0)\)
the sign-determining term \(D(0;0)\) is nonpositive. Since
\(D(\cdot;0)\) is strictly decreasing, firm \(j\)'s marginal payoff is
negative for every \(h_2^j>0\). Hence \(0\) is the unique best response
to \(0\), so \(h_2^*(s_2)=0\) is the unique symmetric equilibrium.

If \(F_2(1;s_2,A_2)\ge \lambda R\delta\), then at the profile \((1,1)\)
the sign-determining term \(D(1;1)\) is nonnegative. Since
\(D(\cdot;1)\) is strictly decreasing, \(D(h_2^j;1)>D(1;1)\ge0\) for
every \(h_2^j<1\). Hence firm \(j\)'s payoff is increasing up to
\(h_2^j=1\), so \(1\) is the unique best response to \(1\), and
\(h_2^*(s_2)=1\) is the unique symmetric equilibrium.

Finally, \cref{lem:terminal_symmetry} rules out asymmetric
pure-strategy equilibria. Therefore the unique symmetric equilibrium
identified above is the unique pure-strategy Nash equilibrium of the
terminal-period two-firm game.
\hfill \emph{Q.E.D.}

\medskip

\begin{claim}\label{app:power_single_crossing_cutoff}
Suppose \(B(s)=W(s)=s^b\) with \(b>1\). For \(s_2\in
(\underline{s}_2,\alpha_2)\), the left-hand side of the terminal
single-crossing condition is maximized at \(h=0\) and equals
\begin{align*}
\Psi(s_2)
\equiv
(b-1)
\frac{
\lambda R\big[(1-\pi_2)\alpha_2+\pi_2s_2\big]-s_2^b
}{
[1+\gamma(1-\pi_2)]s_2-\gamma(1-\pi_2)\alpha_2
}.
\end{align*}
Moreover, under \cref{ass:wages_revenue}, \(\Psi\) is strictly decreasing,
\begin{align*}
\lim_{s_2\downarrow\underline{s}_2}\Psi(s_2)=+\infty,
\qquad
\lim_{s_2\uparrow\alpha_2}\Psi(s_2)
=
(b-1)(\lambda R-\alpha_2^{b-1}).
\end{align*}
Hence the terminal single-crossing condition holds on a nonempty
upper-tail domain if and only if
\begin{align}\label{eq:upper_tail_curvature_condition}
(b-1)(\lambda R-\alpha_2^{b-1})
<
\frac{\lambda R\delta}{\phi\pi_2+\gamma}.
\end{align}
When this condition holds, there is a unique cutoff
\(\bar{\bar{s}}_2\in(\underline{s}_2,\alpha_2)\) solving
\begin{align}\label{eq:monotone_condition_power}
\Psi(\bar{\bar{s}}_2)
=
\frac{\lambda R\delta}{\phi\pi_2+\gamma},
\end{align}
and the terminal single-crossing condition holds exactly for
\(s_2\in(\bar{\bar{s}}_2,\alpha_2)\).
\end{claim}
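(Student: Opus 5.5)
The plan is to exploit the power form directly. Then $B'(s)=bs^{b-1}$ and $B''(s)=b(b-1)s^{b-2}$, so $B''(g)/B'(g)=(b-1)/g$. The left-hand side of \cref{eq:monotone_condition} therefore becomes
\begin{align*}
\max_{h\in[0,1]}\,(b-1)\,\frac{M(s_2,h;A_2)}{g(s_2,h;A_2)}.
\end{align*}

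\textbf{Step 1: the maximum is at $h=0$.} For $s_2\in(\underline{s}_2,\alpha_2)$ the zero floor does not bind, so $g(s_2,h;A_2)>0$. The margin satisfies $M(s_2,h;A_2)>0$ by \cref{app:margin_positivity}. Moreover, $M$ is strictly decreasing in $h$ (slope $-\lambda R\delta(1-\pi_2)(\alpha_2-s_2)$), while $g$ is strictly increasing in $h$. A positive decreasing numerator over a positive increasing denominator is decreasing in $h$, so the maximum is attained at $h=0$.

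Next substitute $h=0$. The numerator is $M(s_2,0;A_2)=\lambda R[(1-\pi_2)\alpha_2+\pi_2 s_2]-s_2^b$. The denominator is $g(s_2,0;A_2)=[1+\gamma(1-\pi_2)]s_2-\gamma(1-\pi_2)\alpha_2$. This yields $\Psi$. Write $N(s)$ for the numerator, $D(s)=cs-\gamma(1-\pi_2)\alpha_2$ for the denominator, and $c\equiv1+\gamma(1-\pi_2)$.

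\textbf{Step 2: $\Psi$ is strictly decreasing.} This is the only step with any content. The sign of $\Psi'$ equals the sign of $N'D-cN$, where $N'(s)=\lambda R\pi_2-bs^{b-1}$. Recall $N>0$ and $D>0$ on the domain.

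If $N'(s)\le0$, then $N'D-cN<0$ immediately.

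If $N'(s)>0$, use $D(s)<cs$ to obtain $N'D<cN's$. Then
\begin{align*}
N'D-cN<c\,(sN'-N)=c\bigl[-(b-1)s^b-\lambda R(1-\pi_2)\alpha_2\bigr]<0.
\end{align*}
In both cases $\Psi'<0$ on $(\underline{s}_2,\alpha_2)$.

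\textbf{Step 3: the limits.} As $s_2\downarrow\underline{s}_2$, we have $D\downarrow0^+$. Meanwhile $N$ stays bounded away from zero, since $N(\underline{s}_2)>0$ by \cref{app:margin_positivity} and continuity. Hence $\Psi\to+\infty$.

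As $s_2\uparrow\alpha_2$, we have $N\to\lambda R\alpha_2-\alpha_2^b$ and $D\to\alpha_2$. Hence $\Psi\to(b-1)(\lambda R-\alpha_2^{b-1})$.

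\textbf{Step 4: the cutoff.} $\Psi$ is continuous and strictly decreasing, and its range is the open interval $\bigl((b-1)(\lambda R-\alpha_2^{b-1}),\,+\infty\bigr)$. The set where $\Psi(s_2)<\lambda R\delta/(\phi\pi_2+\gamma)$ is therefore nonempty if and only if \cref{eq:upper_tail_curvature_condition} holds.

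When it holds, the intermediate value theorem gives a unique $\bar{\bar{s}}_2$ solving \cref{eq:monotone_condition_power}. Strict monotonicity of $\Psi$ then shows that the strict condition holds exactly on $(\bar{\bar{s}}_2,\alpha_2)$.
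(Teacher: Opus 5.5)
Your proof is correct. Steps 1, 3 and 4 follow the paper's argument essentially verbatim. The only real difference is how you show that $\Psi$ is strictly decreasing.

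\textbf{The paper's route.} It expands the numerator of $\Psi'$ into the closed form $N'D-ND'=-\lambda R(1-\pi_2)\alpha_2(1+\gamma)+s_2^{b-1}\bigl[b\gamma(1-\pi_2)\alpha_2-c(b-1)s_2\bigr]$. It then bounds the second piece by maximizing $x^{b-1}\bigl[b\gamma(1-\pi_2)-c(b-1)x\bigr]$ over $x=s_2/\alpha_2$, with the peak at $x=\gamma(1-\pi_2)/c$. Finally it invokes $\lambda R>\alpha_2^{b-1}$ from \cref{ass:wages_revenue} to conclude.

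\textbf{Your route.} You split on the sign of $N'$. For $N'\le0$ you need only $N>0$. For $N'>0$ you combine $D<cs$ with the identity $sN'-N=-(b-1)s^b-\lambda R(1-\pi_2)\alpha_2$.

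\textbf{What each buys.} Your argument avoids the auxiliary one-variable optimization and shows why monotonicity holds. For a general wage the identity reads $sN'-N=W(s)-sW'(s)-\lambda R(1-\pi_2)\alpha_2$, which is negative because convexity and $W(0)=0$ give $W(s)\le sW'(s)$. So the same two-case argument proves that $M(s_2,0;A_2)/g(s_2,0;A_2)$ is strictly decreasing for any wage satisfying \cref{ass:wages_revenue}. The power form is needed only to reduce $B''/B'$ to $(b-1)/g$. The paper's argument, in exchange, gives an explicit expression for the numerator that determines the sign of $\Psi'$. It also uses the revenue condition directly rather than through margin positivity. Both arguments ultimately rest on \cref{ass:wages_revenue}.
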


\noindent \emph{Proof.}
For the power wage,
\begin{align*}
\frac{B''(g(s_2,h;A_2))}{B'(g(s_2,h;A_2))}
=
\frac{b-1}{g(s_2,h;A_2)}.
\end{align*}
Thus the left-hand side of the terminal single-crossing condition is
\begin{align*}
(b-1)\max_{h\in[0,1]}
\left\{
\frac{M(s_2,h;A_2)}{g(s_2,h;A_2)}
\right\}.
\end{align*}
On \(s_2\in(\underline{s}_2,\alpha_2)\), we have
\(g(s_2,h;A_2)>0\) and \(M(s_2,h;A_2)>0\). Also,
\begin{align*}
\frac{\partial M(s_2,h;A_2)}{\partial h}
=
-\lambda R\delta(1-\pi_2)(\alpha_2-s_2)<0
\end{align*}
and
\begin{align*}
\frac{\partial g(s_2,h;A_2)}{\partial h}
=
(1-\pi_2)(\phi\pi_2+\gamma)(\alpha_2-s_2)>0.
\end{align*}
Therefore
\begin{align*}
\frac{\partial}{\partial h}
\left[
\frac{M(s_2,h;A_2)}{g(s_2,h;A_2)}
\right]
=
\frac{M_h g-Mg_h}{g^2}<0.
\end{align*}
Hence the maximum is attained at \(h=0\). Since
\begin{align*}
g(s_2,0;A_2)
=
[1+\gamma(1-\pi_2)]s_2-\gamma(1-\pi_2)\alpha_2
\end{align*}
and
\begin{align*}
M(s_2,0;A_2)
=
\lambda R\big[(1-\pi_2)\alpha_2+\pi_2s_2\big]-s_2^b,
\end{align*}
the left-hand side equals \(\Psi(s_2)\).

It remains to show \(\Psi\) is strictly decreasing. Let
\(q\equiv1-\pi_2\), \(a\equiv1+\gamma q\), and
\begin{align*}
N(s_2)\equiv \lambda R(q\alpha_2+\pi_2s_2)-s_2^b,
\qquad
D(s_2)\equiv as_2-\gamma q\alpha_2.
\end{align*}
Then \(\Psi(s_2)=(b-1)N(s_2)/D(s_2)\), and \(D(s_2)>0\) on
\((\underline{s}_2,\alpha_2)\). Thus the sign of \(\Psi'(s_2)\) is the
sign of \(N'(s_2)D(s_2)-N(s_2)D'(s_2)\).
A direct calculation gives
\begin{align*}
N'(s_2)D(s_2)-N(s_2)D'(s_2)
=
-\lambda R q\alpha_2(1+\gamma)
+
s_2^{b-1}
\left[
b\gamma q\alpha_2-a(b-1)s_2
\right].
\end{align*}
Write \(x=s_2/\alpha_2\in(0,1)\). Then
\begin{align*}
s_2^{b-1}
\left[
b\gamma q\alpha_2-a(b-1)s_2
\right]
=
\alpha_2^b
x^{b-1}
\left[
b\gamma q-a(b-1)x
\right].
\end{align*}
The function \(x^{b-1}\left[b\gamma q-a(b-1)x\right]\)
is either nonpositive or attains its maximum at
\(x=\gamma q/a\), where its value is
\begin{align*}
\gamma q\left(\frac{\gamma q}{a}\right)^{b-1}
<
q(1+\gamma).
\end{align*}
Using \(\lambda R>\alpha_2^{b-1}\) from
\cref{ass:wages_revenue} in the power-wage case, we therefore have \(N'(s_2)D(s_2)-N(s_2)D'(s_2)<0\).
Hence \(\Psi'(s_2)<0\) on
\((\underline{s}_2,\alpha_2)\).

Finally, as \(s_2\downarrow\underline{s}_2\), the denominator
\(D(s_2)=g(s_2,0;A_2)\) converges to zero while the numerator remains
positive by margin positivity. Hence \(\Psi(s_2)\to+\infty\). At the
upper endpoint,
\begin{align*}
D(\alpha_2)=\alpha_2,
\qquad
N(\alpha_2)=\lambda R\alpha_2-\alpha_2^b,
\end{align*}
so
\begin{align*}
\lim_{s_2\uparrow\alpha_2}\Psi(s_2)
=
(b-1)(\lambda R-\alpha_2^{b-1}).
\end{align*}
Strict monotonicity then gives the claimed existence and uniqueness of
\(\bar{\bar{s}}_2\).
\hfill \emph{Q.E.D.}

\medskip

\noindent \emph{Proof of \cref{cor:terminal_cutoff}.}
\label{app:proof_terminal_cutoff}
Fix \(s_2\in D_2\). Since \(D_2\subset(\underline{s}_2,\alpha_2)\), by the
definition of \(\underline{s}_2\),
\begin{align*}
g(s_2,0;A_2)
=
[1+\gamma(1-\pi_2)]s_2-\gamma(1-\pi_2)\alpha_2
>0.
\end{align*}
Since
\begin{align*}
\frac{\partial g(s_2,h;A_2)}{\partial h}
=
(1-\pi_2)(\phi\pi_2+\gamma)(\alpha_2-s_2)>0,
\end{align*}
we have \(g(s_2,h;A_2)>0\) for every \(h\in[0,1]\). Thus the zero floor in
the skill transition does not bind.

At zero engagement, under \(B(s)=W(s)=s^b\),
\begin{align*}
F_2(0;s_2,A_2)
=
\frac{\beta b(\phi\pi_2+\gamma)}{2\eta}
\bigl(g(s_2,0;A_2)\bigr)^{b-1}
M(s_2,0;A_2).
\end{align*}
Therefore, by the definition of \(\bar\eta(s_2)\) in
\cref{eq:eta_cutoff_terminal},
\begin{align*}
F_2(0;s_2,A_2)
=
\lambda R\delta\cdot \frac{\bar\eta(s_2)}{\eta}.
\end{align*}
By \cref{prop:duopoly_terminal}, the terminal-period equilibrium satisfies
\(h_2^*(s_2)=0\) if and only if \(F_2(0;s_2,A_2)\le \lambda R\delta\).
Hence \(h_2^*(s_2)>0\) if and only if \(F_2(0;s_2,A_2)>\lambda R\delta\).
Using the display above and \(\eta>0\), this is equivalent to \(\frac{\bar\eta(s_2)}{\eta}>1\),
or \(\eta<\bar\eta(s_2)\).
\hfill \emph{Q.E.D.}

\medskip

\emph{Shape of the terminal cutoff.} Although the equilibrium characterization is restricted to \(D_2\), the cutoff \(\bar\eta(s_2)\) is defined on the full smooth interval \((\underline{s}_2,\alpha_2)\). The transition \(g(s_2,0;A_2)\propto s_2-\underline{s}_2\) rises from \(0\) at the floor to \(\alpha_2\) at the benchmark, so with \(b>1\) the cutoff vanishes as \(s_2\downarrow\underline{s}_2\). The margin \(M(s_2,0;A_2)\) has slope \(\lambda R\pi_2-b\,s_2^{b-1}\), which decreases in \(s_2\): it rises far below the benchmark and turns down nearer it once the marginal wage \(b\,s_2^{b-1}\) overtakes the marginal current revenue a more skilled worker brings when AI fails, \(\lambda R\pi_2\), which may or may not happen before \(\alpha_2\). How the cutoff varies over \((\underline{s}_2,\alpha_2)\) is governed by the sign of the scalar \(\Delta_2\) defined in \cref{eq:delta2_def}. When \(\Delta_2\ge0\), \(\bar\eta(s_2)\) rises monotonically across \((\underline{s}_2,\alpha_2)\). When \(\Delta_2<0\), it is single-peaked, strictly increasing to a unique interior maximum at \(s_2^\dagger\) and strictly decreasing beyond it. Write \(\eta^U\triangleq\lim_{s_2\to\alpha_2^-}\bar\eta(s_2)\) for the cutoff at the benchmark and, in the single-peaked case, \(\eta^\dagger\triangleq\bar\eta(s_2^\dagger)\) for its peak, with \(0<\eta^U<\eta^\dagger\).

Because the equilibrium characterization applies on \(D_2\), the relevant object is the restriction of \(\bar\eta\) to \(D_2\). Define \(\eta^D\equiv\lim_{s_2\downarrow\bar{\bar{s}}_2}\bar\eta(s_2)\). When a full-domain threshold lies below \(\bar{\bar{s}}_2\), we truncate it at the admissible-domain boundary: write \(\hat{s}_2^D\equiv\max\{\hat{s}_2,\bar{\bar{s}}_2\}\) and \(\hat{s}_2^{L,D}\equiv\max\{\hat{s}_2^L,\bar{\bar{s}}_2\}\).

\begin{claim}\label{app:terminal_targeting_detail}
Suppose \(B(s)=W(s)=s^b\) with \(b>1\), and fix \(\eta\). On the admissible domain \(D_2\), the engaged set \(E_2(\eta)\equiv\{s_2\in D_2:\eta<\bar\eta(s_2)\}\) is an interval. If \(\Delta_2\ge0\), then \(E_2(\eta)\) is the high-skill band \((\hat{s}_2^D,\alpha_2)\) when \(\eta<\eta^U\), and is empty otherwise. If \(\Delta_2<0\) and \(s_2^\dagger>\bar{\bar{s}}_2\), then the full-domain peak lies inside \(D_2\), and \(E_2(\eta)\) is the high-skill band \((\hat{s}_2^D,\alpha_2)\) when \(\eta\le\eta^U\), a bounded band \((\hat{s}_2^{L,D},\hat{s}_2^H)\) containing \(s_2^\dagger\) when \(\eta^U<\eta<\eta^\dagger\), and is empty when \(\eta\ge\eta^\dagger\). If \(\Delta_2<0\) and \(s_2^\dagger\le\bar{\bar{s}}_2\), then the full-domain peak lies weakly below \(D_2\), so \(\bar\eta\) is strictly decreasing on \(D_2\), and \(E_2(\eta)=D_2\) when \(\eta\le\eta^U\), \(E_2(\eta)=(\bar{\bar{s}}_2,\tilde{s}_2)\) when \(\eta^U<\eta<\eta^D\), where \(\tilde{s}_2\in D_2\) uniquely solves \(\bar\eta(\tilde{s}_2)=\eta\), and \(E_2(\eta)\) is empty when \(\eta\ge\eta^D\).
\end{claim}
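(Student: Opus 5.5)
The claim is a superlevel-set statement: $E_2(\eta)=\{s_2\in D_2:\bar\eta(s_2)>\eta\}$. Its shape follows from the shape of $\bar\eta$ on $(\underline{s}_2,\alpha_2)$, described just before the claim, restricted to the upper-tail interval $D_2=(\bar{\bar{s}}_2,\alpha_2)$ of \cref{app:power_single_crossing_cutoff}. The plan has two parts. First, establish that shape. Second, read off the superlevel sets case by case, using continuity of $\bar\eta$ and the endpoint values $\eta^U$, $\eta^D$, and $\eta^\dagger$.

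\textbf{Step 1 (shape of $\bar\eta$).} Write $\bar\eta(s)=c\,G(s)^{b-1}N(s)$. Here $c>0$ is a constant, $G(s)=a s-\gamma q\alpha_2$, and $N(s)=\lambda R(q\alpha_2+\pi_2 s)-s^b$, with $q=1-\pi_2$ and $a=1+\gamma q$, as in the proof of \cref{app:power_single_crossing_cutoff}.

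Take the logarithmic derivative:
\begin{align*}
\frac{\bar\eta'(s)}{\bar\eta(s)}=\frac{(b-1)a}{G(s)}+\frac{N'(s)}{N(s)}.
\end{align*}
Its sign is the sign of $\Lambda(s)\equiv(b-1)aN(s)+G(s)N'(s)$. Near $\underline{s}_2$ we have $G\downarrow0$ and $N>0$, so $\Lambda>0$, and $\bar\eta$ rises from $0$.

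Next compute $\Lambda'(s)=(b-1)aN'+aN'+GN''=baN'(s)+G(s)N''(s)$. Since $N''<0$ and $N'$ is decreasing, this expression is decreasing in $s$, so $\Lambda$ is concave. Once $\Lambda'$ becomes negative it stays negative. Together with $\Lambda(\underline{s}_2)>0$, this means $\Lambda$ changes sign at most once on the interval.

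At $s=\alpha_2$ we have $G=\alpha_2$. Substituting gives
\begin{align*}
\Lambda(\alpha_2)=(b-1)a(\lambda R\alpha_2-\alpha_2^b)+\alpha_2(\lambda R\pi_2-b\alpha_2^{b-1})=\Delta_2 .
\end{align*}
Therefore $\Delta_2\ge0$ forces $\Lambda>0$ on the open interval. Positivity at the left end, concavity, and a nonnegative right endpoint give this; any zero would have to be at the endpoint. Hence $\bar\eta$ is strictly increasing. If instead $\Delta_2<0$, there is a unique root $s_2^\dagger$ with $\Lambda>0$ before it and $\Lambda<0$ after it, so $\bar\eta$ is single-peaked. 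This gives $0<\eta^U<\eta^\dagger$.

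I expect the concavity/single-crossing argument for $\Lambda$ to be the main obstacle. If concavity turns out not to be enough by itself, I would show directly that $\Lambda'<0$ at any zero of $\Lambda$, substituting $N=-GN'/((b-1)a)$ there.

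\textbf{Step 2 (superlevel sets).} Consider each case in turn.

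\emph{Case $\Delta_2\ge0$.} Here $\bar\eta$ is increasing on $D_2$, with supremum $\eta^U$ attained as $s\to\alpha_2$. So $E_2(\eta)$ is empty when $\eta\ge\eta^U$. Otherwise it equals $(\hat s_2,\alpha_2)\cap D_2=(\hat s_2^D,\alpha_2)$, where $\hat s_2$ solves $\bar\eta=\eta$ on the full domain.

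\emph{Case $\Delta_2<0$ with $s_2^\dagger>\bar{\bar{s}}_2$.} For $\eta\le\eta^U$, the right branch stays above $\eta$ all the way to $\alpha_2$, because it decreases strictly toward $\eta^U$. The set is then $(\hat s_2^D,\alpha_2)$. For $\eta^U<\eta<\eta^\dagger$, the intermediate value theorem on each monotone branch gives two crossings $\hat s_2^L<s_2^\dagger<\hat s_2^H$. Truncating the left one at $\bar{\bar{s}}_2$ gives $(\hat s_2^{L,D},\hat s_2^H)$. For $\eta\ge\eta^\dagger$ the set is empty.

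\emph{Case $\Delta_2<0$ with $s_2^\dagger\le\bar{\bar{s}}_2$.} Here $\bar\eta$ is strictly decreasing on $D_2$ from $\eta^D$ down to $\eta^U$. The three subcases follow the same way, with $\tilde s_2$ given by the intermediate value theorem.

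In every case the result is an interval. Lowering $\eta$ enlarges superlevel sets, which gives the widening statement of \cref{prop:terminal_targeting}.
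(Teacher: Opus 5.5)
Your route is the paper's. Your $\Lambda=(b-1)aN+GN'$ is exactly the paper's sign function $H$, with the same endpoint values $\Lambda(\underline{s}_2^+)>0$ and $\Lambda(\alpha_2)=\Delta_2$, and your Step~2 reads off the superlevel sets case by case as the paper does.

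The one step that fails as written is the concavity of $\Lambda$. From $\Lambda'=baN'+GN''$ you infer that $\Lambda'$ is decreasing because $N''<0$. That handles the term $baN'$, but not $GN''$: its derivative is $(GN'')'=aN''+GN'''$, with $N'''=-b(b-1)(b-2)s^{b-3}>0$ when $1<b<2$. In that range $G$ grows while $|N''|=b(b-1)s^{b-2}$ shrinks, so the monotonicity of $GN''$ is not automatic. Your argument therefore covers only $b\ge2$, whereas every calibration in the paper uses $b\in(1,2)$. Concavity is nevertheless true. The paper computes $\Lambda''=-b(b-1)s^{b-3}\bigl[a(b+1)s+(b-2)G\bigr]$ and, for $1<b<2$, uses $G<as$ to bound the bracket below by $a(2b-1)s>0$.

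You also misplaced the risk. Since $\Lambda>0$ near $\underline{s}_2$, concavity is enough; what is missing is its proof. Your fallback closes the gap on its own and is shorter than the paper's fix:
\begin{itemize}
\item At any zero $s_0\in(\underline{s}_2,\alpha_2]$ of $\Lambda$, we get $N'(s_0)=-(b-1)aN(s_0)/G(s_0)<0$. This holds because $G>0$ and $N=M(\cdot,0;A_2)>0$, by margin positivity and the revenue condition at $\alpha_2$.
\item Hence $\Lambda'(s_0)=baN'(s_0)+G(s_0)N''(s_0)<0$, so every zero is a strict downcrossing and $\Lambda$ has at most one zero.
\item When $\Delta_2\ge0$, $\Lambda$ has no zero in the open interval. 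After an interior zero $\Lambda$ turns negative, and it would then have to reach the next zero (possibly $\alpha_2$ itself when $\Delta_2=0$) from below, contradicting $\Lambda'<0$ there.
\end{itemize}
This argument uses only $N''<0$, needs no third derivative and no case split on $b$, and gives exactly the single crossing you need. Make it the main argument, and the rest of the proposal goes through.
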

As \(\eta\) falls, a firm first engages the admissible workers with the highest cutoff and then widens the band. When \(\Delta_2\ge0\), these are the highest-skill workers in \(D_2\). When \(\Delta_2<0\) and \(s_2^\dagger\in D_2\), they are the workers around the peak \(s_2^\dagger\). When \(\Delta_2<0\) and \(s_2^\dagger\le\bar{\bar{s}}_2\), the peak is truncated away, so engagement starts just above the lower boundary \(\bar{\bar{s}}_2\). \Cref{prop:terminal_targeting} is the qualitative summary of this characterization.

\noindent \emph{Proof of \cref{prop:terminal_targeting} and \Cref{app:terminal_targeting_detail}.}\label{app:proof_terminal_targeting}
From \cref{cor:terminal_cutoff}, positive terminal-period
engagement is characterized by the cutoff $\bar\eta(s_2)$. It remains
to characterize the shape of this cutoff as a function of $s_2$.

Because the multiplicative constant in~\cref{eq:eta_cutoff_terminal} is
positive, the sign of $\bar\eta'(s_2)$ is the sign of the derivative of \(\bigl(g(s_2,0;A_2)\bigr)^{b-1}M(s_2,0;A_2)\).
Define
\begin{align*}
z_0(s_2)
\triangleq
g(s_2,0;A_2)
=
[1+\gamma(1-\pi_2)]s_2-\gamma(1-\pi_2)\alpha_2
\end{align*}
and
\begin{align*}
r_0(s_2)
\triangleq
M(s_2,0;A_2)
=
\lambda R[(1-\pi_2)\alpha_2+\pi_2s_2]-s_2^b.
\end{align*}
Then $\bar\eta(s_2)$ is proportional to
$z_0(s_2)^{b-1}r_0(s_2)$. Differentiating,
\begin{align*}
\frac{d}{ds_2}
\left[z_0(s_2)^{b-1}r_0(s_2)\right]
=
z_0(s_2)^{b-2}
\left[
(b-1)z_0'(s_2)r_0(s_2)+z_0(s_2)r_0'(s_2)
\right].
\end{align*}
Because $z_0(s_2)>0$, the sign of $\bar\eta'(s_2)$ is the sign of
\begin{align*}
H(s_2)
\triangleq
(b-1)[1+\gamma(1-\pi_2)]r_0(s_2)
+
z_0(s_2)(\lambda R\pi_2-bs_2^{b-1}).
\end{align*}
Let $a\triangleq1+\gamma(1-\pi_2)$. Because
\begin{align*}
r_0'(s_2)=\lambda R\pi_2-bs_2^{b-1},
\qquad
r_0''(s_2)=-b(b-1)s_2^{b-2},
\end{align*}
we have \(H(s_2)=(b-1)a r_0(s_2)+z_0(s_2)r_0'(s_2)\),
and hence
\begin{align*}
H'(s_2)
=
b\,a\,r_0'(s_2)+z_0(s_2)r_0''(s_2).
\end{align*}
Differentiating once more gives
\begin{align*}
H''(s_2)
=
-b(b-1)s_2^{b-3}
\left[
a(b+1)s_2+(b-2)z_0(s_2)
\right].
\end{align*}
The bracketed term is strictly positive for every
$s_2\in(\underline{s}_2,\alpha_2)$ and every $b>1$. If $b\ge2$, this is
immediate because $a>0$, $s_2>0$, and $z_0(s_2)>0$. If $1<b<2$, then
$z_0(s_2)<as_2$ and $b-2<0$, so \((b-2)z_0(s_2)>(b-2)as_2\).
Hence
\begin{align*}
a(b+1)s_2+(b-2)z_0(s_2)
>
a(b+1)s_2+(b-2)as_2
=
a(2b-1)s_2>0.
\end{align*}
Therefore $H''(s_2)<0$ on $(\underline{s}_2,\alpha_2)$, so $H$ is
strictly concave.

Moreover, as $s_2\downarrow\underline{s}_2$, we have
$z_0(s_2)\downarrow0$, and hence
\begin{align*}
\lim_{s_2\downarrow\underline{s}_2}H(s_2)
=
(b-1)[1+\gamma(1-\pi_2)]r_0(\underline{s}_2)>0,
\end{align*}
in which the inequality follows from margin positivity under
\cref{ass:wages_revenue}. At the upper endpoint,
$z_0(\alpha_2)=\alpha_2$ and
$r_0(\alpha_2)=\lambda R\alpha_2-\alpha_2^b$, so
\begin{align*}
H(\alpha_2)
=
(b-1)[1+\gamma(1-\pi_2)](\lambda R\alpha_2-\alpha_2^b)
+
\alpha_2(\lambda R\pi_2-b\alpha_2^{b-1})
=
\Delta_2.
\end{align*}

If $\Delta_2\ge0$, then strict concavity of $H$, together with the
positive lower-endpoint value, implies $H(s_2)>0$ throughout
$(\underline{s}_2,\alpha_2)$. Therefore $\bar\eta'(s_2)>0$ throughout
the domain, and $\bar\eta$ is strictly increasing.

Let \(\eta^U\triangleq \lim_{s_2\to\alpha_2^-}\bar\eta(s_2)\).
Because $z_0(s_2)\downarrow0$ as
$s_2\downarrow\underline{s}_2$ and $b>1$, whereas $r_0(s_2)$ remains
finite and positive, we have \(\lim_{s_2\downarrow\underline{s}_2}\bar\eta(s_2)=0\).
Thus, when $\Delta_2\ge0$, strict monotonicity implies if
$\eta\ge\eta^U$, then $\eta<\bar\eta(s_2)$ never holds on
$(\underline{s}_2,\alpha_2)$, so $h_2^*(s_2)=0$ for all
$s_2\in(\underline{s}_2,\alpha_2)$. If $\eta<\eta^U$, strict
monotonicity and the lower endpoint limit imply there exists a
unique threshold $\hat{s}_2\in(\underline{s}_2,\alpha_2)$ such that
$\eta<\bar\eta(s_2)$ if and only if
$s_2\in(\hat{s}_2,\alpha_2)$. By \cref{cor:terminal_cutoff},
this is equivalent to
\begin{align*}
h_2^*(s_2)>0
\quad\Longleftrightarrow\quad
s_2\in(\hat{s}_2,\alpha_2).
\end{align*}

Now suppose $\Delta_2<0$. Then continuity and the positive
lower-endpoint value of $H$ imply $H$ is positive near
$\underline{s}_2$ and negative at $\alpha_2$. Because $H$ is strictly
concave, its upper contour set \(\{s_2\in(\underline{s}_2,\alpha_2): H(s_2)>0\}\)
is an interval. Because this interval contains points arbitrarily close
to $\underline{s}_2$ but excludes points sufficiently close to
$\alpha_2$, there exists a unique
$s_2^\dagger\in(\underline{s}_2,\alpha_2)$ such that
$H(s_2^\dagger)=0$, with $H(s_2)>0$ for
$s_2<s_2^\dagger$ and $H(s_2)<0$ for
$s_2>s_2^\dagger$. Therefore $\bar\eta$ is strictly increasing on
$(\underline{s}_2,s_2^\dagger)$ and strictly decreasing on
$(s_2^\dagger,\alpha_2)$.

Let
\begin{align*}
\eta^\dagger\triangleq \bar\eta(s_2^\dagger),
\qquad
\eta^U\triangleq \lim_{s_2\to\alpha_2^-}\bar\eta(s_2).
\end{align*}
Because $\bar\eta$ is strictly decreasing on
$(s_2^\dagger,\alpha_2)$, we have $\eta^U<\eta^\dagger$. Moreover,
$\eta^U>0$ by margin positivity at the upper endpoint and
$z_0(\alpha_2)=\alpha_2>0$. Hence $0<\eta^U<\eta^\dagger$.

If $\eta\ge\eta^\dagger$, then $\eta<\bar\eta(s_2)$ never holds, so
$h_2^*(s_2)=0$ for all $s_2\in(\underline{s}_2,\alpha_2)$. If
$0<\eta\le\eta^U$, then the lower endpoint limit
$\lim_{s_2\downarrow\underline{s}_2}\bar\eta(s_2)=0$, strict increase
on $(\underline{s}_2,s_2^\dagger)$, and strict decrease to
$\eta^U$ on $(s_2^\dagger,\alpha_2)$ imply there exists a unique
threshold $\hat{s}_2\in(\underline{s}_2,s_2^\dagger)$ such that
$\eta<\bar\eta(s_2)$ if and only if
$s_2\in(\hat{s}_2,\alpha_2)$. If
$\eta^U<\eta<\eta^\dagger$, then the cutoff crosses the level $\eta$
once on each side of $s_2^\dagger$, so there exist unique thresholds
$\hat{s}_2^L\in(\underline{s}_2,s_2^\dagger)$ and
$\hat{s}_2^H\in(s_2^\dagger,\alpha_2)$ such that
$\eta<\bar\eta(s_2)$ if and only if
$s_2\in(\hat{s}_2^L,\hat{s}_2^H)$. Applying
\cref{cor:terminal_cutoff} gives the stated positive-engagement
regions.

It remains to restrict these full-domain regions to the admissible domain
\(D_2=(\bar{\bar{s}}_2,\alpha_2)\). If \(\Delta_2\ge0\), then
\(\bar\eta\) is strictly increasing on \((\underline{s}_2,\alpha_2)\) and
has upper-end limit \(\eta^U\). Hence, if \(\eta\ge\eta^U\), the
inequality \(\eta<\bar\eta(s_2)\) fails for every \(s_2\in D_2\), so
\(E_2(\eta)=\emptyset\). If \(\eta<\eta^U\), the full-domain
positive-engagement region is \((\hat{s}_2,\alpha_2)\). Intersecting
with \(D_2\) gives
\begin{align*}
E_2(\eta)
=
(\hat{s}_2,\alpha_2)\cap(\bar{\bar{s}}_2,\alpha_2)
=
(\hat{s}_2^D,\alpha_2),
\end{align*}
where \(\hat{s}_2^D=\max\{\hat{s}_2,\bar{\bar{s}}_2\}\).

Now suppose \(\Delta_2<0\). If \(s_2^\dagger>\bar{\bar{s}}_2\), the
full-domain peak lies inside \(D_2\). When \(\eta\ge\eta^\dagger\), the
inequality \(\eta<\bar\eta(s_2)\) fails everywhere, so
\(E_2(\eta)=\emptyset\). When \(\eta\le\eta^U\), the full-domain
positive-engagement region is \((\hat{s}_2,\alpha_2)\), and intersecting
with \(D_2\) gives \(E_2(\eta)=(\hat{s}_2^D,\alpha_2)\).
When \(\eta^U<\eta<\eta^\dagger\), the full-domain positive-engagement
region is the bounded band
\((\hat{s}_2^L,\hat{s}_2^H)\), with
\(\hat{s}_2^L<s_2^\dagger<\hat{s}_2^H<\alpha_2\). Since
\(s_2^\dagger>\bar{\bar{s}}_2\), the upper cutoff \(\hat{s}_2^H\) lies
inside \(D_2\), whereas the lower cutoff may lie below the admissible
boundary. Therefore
\begin{align*}
E_2(\eta)
=
(\hat{s}_2^L,\hat{s}_2^H)\cap(\bar{\bar{s}}_2,\alpha_2)
=
(\hat{s}_2^{L,D},\hat{s}_2^H),
\end{align*}
where
\(\hat{s}_2^{L,D}=\max\{\hat{s}_2^L,\bar{\bar{s}}_2\}\).

Finally, suppose \(\Delta_2<0\) and
\(s_2^\dagger\le\bar{\bar{s}}_2\). Then the peak is weakly below the
admissible domain, so \(\bar\eta\) is strictly decreasing on \(D_2\).
Its upper limit on \(D_2\) is \(\eta^D=\lim_{s_2\downarrow\bar{\bar{s}}_2}\bar\eta(s_2)\),
and its lower limit at the AI benchmark is \(\eta^U\). Hence
\(\eta^D>\eta^U\). If \(\eta\le\eta^U\), then
\(\eta<\bar\eta(s_2)\) for every \(s_2\in D_2\), so
\(E_2(\eta)=D_2\). If \(\eta^U<\eta<\eta^D\), strict monotonicity gives a
unique \(\tilde{s}_2\in D_2\) satisfying
\(\bar\eta(\tilde{s}_2)=\eta\), and \(E_2(\eta)=(\bar{\bar{s}}_2,\tilde{s}_2)\).
If \(\eta\ge\eta^D\), the inequality \(\eta<\bar\eta(s_2)\) fails
throughout \(D_2\), so \(E_2(\eta)=\emptyset\). These are exactly the
three cases stated in the proposition.
\hfill \emph{Q.E.D.}

\medskip

\noindent \emph{Proof of \cref{cor:terminal_monotone}.}
On \(D_2\subset(\underline{s}_2,\alpha_2)\), the zero floor does not bind,
and \(g(s_2,h;A_2)>0\) and \(M(s_2,h;A_2)>0\) for every
\(h\in[0,1]\). Under the power wage,
\begin{align*}
F_2(h;s_2,A_2)
=
C\,g(s_2,h;A_2)^{b-1}M(s_2,h;A_2),
\qquad
C=\frac{\beta b(\phi\pi_2+\gamma)}{2\eta}>0.
\end{align*}
We first show \(F_2\) is strictly increasing in \(s_2\) for every
fixed \(h\in[0,1]\). Differentiating,
\begin{align*}
\frac{\partial F_2(h;s_2,A_2)}{\partial s_2}
=
C\,g(s_2,h;A_2)^{b-2}
\Bigl[
(b-1)g_sM(s_2,h;A_2)+g(s_2,h;A_2)M_s
\Bigr],
\end{align*}
where
\begin{align*}
g_s
&\equiv
\frac{\partial g(s_2,h;A_2)}{\partial s_2}
=
1-(1-\pi_2)\bigl(h(\phi\pi_2+\gamma)-\gamma\bigr),\\
M_s
&\equiv
\frac{\partial M(s_2,h;A_2)}{\partial s_2}
=
\lambda R\bigl(\pi_2+\delta h(1-\pi_2)\bigr)-b s_2^{b-1}.
\end{align*}
The function \(h\mapsto g_s\) is affine and decreasing, so its minimum
on \([0,1]\) is attained at \(h=1\), where \(g_s=1-\phi\pi_2(1-\pi_2)>0\)
by \cref{ass:skill}. Hence \(g_s>0\) for all \(h\in[0,1]\). The function
\(h\mapsto M_s\) is nondecreasing, so its minimum is attained at \(h=0\),
where
\begin{align*}
M_s
=
\lambda R\pi_2-b s_2^{b-1}
>
\lambda R\pi_2-b\alpha_2^{b-1}
\ge 0,
\end{align*}
using \(s_2<\alpha_2\), \(b>1\), and
\(\lambda R\pi_2\ge b\alpha_2^{b-1}\). Thus \(M_s>0\) for all
\(h\in[0,1]\). Since \(g>0\) and \(M>0\), every term in the bracket is
positive, so
\begin{align*}
\frac{\partial F_2(h;s_2,A_2)}{\partial s_2}>0
\qquad\text{for all }h\in[0,1]\text{ and }s_2\in D_2.
\end{align*}

By the terminal single-crossing condition,
\(\partial F_2(h;s_2,A_2)/\partial h<0\) on \(D_2\). Hence
\cref{prop:duopoly_terminal} applies on \(D_2\): the equilibrium is
\(h_2^*(s_2)=0\) when \(F_2(0;s_2,A_2)\le\lambda R\delta\), is
\(h_2^*(s_2)=1\) when \(F_2(1;s_2,A_2)\ge\lambda R\delta\), and otherwise
is the unique interior root of \(F_2(h;s_2,A_2)=\lambda R\delta\).
On the interior region, implicit differentiation gives
\begin{align*}
h_2^{*\prime}(s_2)
=
-\frac{\partial F_2(h_2^*(s_2);s_2,A_2)/\partial s_2}
       {\partial F_2(h_2^*(s_2);s_2,A_2)/\partial h}
>0.
\end{align*}
On the two corner regions, \(h_2^*(s_2)\) is constant. Therefore
\(h_2^*(s_2)\) is nondecreasing on \(D_2\), and strictly increasing on
the interior region.

It remains to locate the regimes. Since
\(s_2\mapsto F_2(0;s_2,A_2)\) and
\(s_2\mapsto F_2(1;s_2,A_2)\) are strictly increasing on \(D_2\), each
crosses \(\lambda R\delta\) at most once. Let
\begin{align*}
\hat{s}_2^D
\equiv
\inf\Bigl(
\{s_2\in D_2:F_2(0;s_2,A_2)>\lambda R\delta\}
\cup\{\alpha_2\}
\Bigr),
\end{align*}
and
\begin{align*}
\overline{s}_2^D
\equiv
\inf\Bigl(
\{s_2\in D_2:F_2(1;s_2,A_2)\ge\lambda R\delta\}
\cup\{\alpha_2\}
\Bigr).
\end{align*}
These definitions give the endpoint conventions \(\bar{\bar{s}}_2\le\hat{s}_2^D\le\overline{s}_2^D\le\alpha_2\).
The ordering follows because \(F_2(1;s_2,A_2)<F_2(0;s_2,A_2)\) for every
\(s_2\in D_2\). Thus the set where full engagement is optimal lies weakly
above the set where engagement begins. By
\cref{prop:duopoly_terminal}, the three regimes are
\begin{align*}
h_2^*(s_2)=0
\quad\text{on }(\bar{\bar{s}}_2,\hat{s}_2^D],
\end{align*}
\begin{align*}
h_2^*(s_2)\in(0,1)
\quad\text{on }(\hat{s}_2^D,\overline{s}_2^D),
\end{align*}
and
\begin{align*}
h_2^*(s_2)=1
\quad\text{on }[\overline{s}_2^D,\alpha_2),
\end{align*}
with all intervals understood relative to \(D_2\). If
\(\hat{s}_2^D\in D_2\), then
\(F_2(0;\hat{s}_2^D,A_2)=\lambda R\delta\); if
\(\overline{s}_2^D\in D_2\), then
\(F_2(1;\overline{s}_2^D,A_2)=\lambda R\delta\).

Finally, consider regularity. On the two corner regions, \(h_2^*\) is
constant and hence continuously differentiable with derivative zero. On
the interior region, \((h,s_2)\mapsto F_2(h;s_2,A_2)\) is continuously
differentiable and \(\partial F_2/\partial h<0\), so the implicit
function theorem implies \(h_2^*\) is continuously differentiable
there, with \(h_2^{*\prime}(s_2)>0\).

If \(\hat{s}_2^D\in D_2\), continuity at \(\hat{s}_2^D\) follows by
the argument below. Suppose, toward a contradiction, that
\(h_2^*(s_n)\ge\varepsilon>0\) along a sequence
\(s_n\downarrow\hat{s}_2^D\) in the interior region. Then
\begin{align*}
\lambda R\delta
=
F_2(h_2^*(s_n);s_n,A_2)
\le
F_2(\varepsilon;s_n,A_2)
\to
F_2(\varepsilon;\hat{s}_2^D,A_2)
<
F_2(0;\hat{s}_2^D,A_2)
=
\lambda R\delta,
\end{align*}
a contradiction. Hence \(h_2^*(s_2)\to0\) as
\(s_2\downarrow\hat{s}_2^D\). Similarly, if
\(\overline{s}_2^D\in D_2\), suppose
\(h_2^*(s_n)\le1-\varepsilon\) along a sequence
\(s_n\uparrow\overline{s}_2^D\) in the interior region. Then
\begin{align*}
\lambda R\delta
=
F_2(h_2^*(s_n);s_n,A_2)
\ge
F_2(1-\varepsilon;s_n,A_2)
\to
F_2(1-\varepsilon;\overline{s}_2^D,A_2)
>
F_2(1;\overline{s}_2^D,A_2)
=
\lambda R\delta,
\end{align*}
again a contradiction. Hence \(h_2^*(s_2)\to1\) as
\(s_2\uparrow\overline{s}_2^D\). Therefore \(h_2^*\) is continuous on
\(D_2\) and continuously differentiable on each nonempty region.
\hfill \emph{Q.E.D.}
\medskip

\noindent \emph{Proof of \cref{prop:alpha_effect_competition}.}\label{app:proof_alpha_effect_competition}
\textbf{Part (i).}
Fix \(s_2\in D_2\) and suppose the terminal-period equilibrium is
interior, \(h_2^*(s_2)\in(0,1)\). Then \(F_2(h_2^*(s_2);s_2,A_2)=\lambda R\delta\).
For local changes in \(\alpha_2\) that keep \(s_2\) in the admissible
domain \(D_2\) and keep the equilibrium interior, implicit
differentiation gives
\begin{align*}
\frac{\partial h_2^*(s_2)}{\partial \alpha_2}
=
-\frac{\partial F_2/\partial \alpha_2}{\partial F_2/\partial h}.
\end{align*}
By \cref{prop:duopoly_terminal},
$\partial F_2/\partial h<0$. It therefore suffices to show
$\partial F_2/\partial \alpha_2>0$ at
$h=h_2^*(s_2)$.

Recall that
\begin{align*}
F_2(h;s_2,A_2)
=
\frac{\beta(\phi\pi_2+\gamma)}{2\eta}
B'\!\bigl(g(s_2,h;A_2)\bigr)
M(s_2,h;A_2).
\end{align*}
The multiplicative constant is positive, so the sign of
$\partial F_2/\partial \alpha_2$ is the sign of
\begin{align*}
B''\!\bigl(g(s_2,h;A_2)\bigr)
M(s_2,h;A_2)
\frac{\partial g(s_2,h;A_2)}{\partial \alpha_2}
+
B'\!\bigl(g(s_2,h;A_2)\bigr)
\frac{\partial M(s_2,h;A_2)}{\partial \alpha_2}.
\end{align*}
Now
\begin{align*}
\frac{\partial g(s_2,h;A_2)}{\partial \alpha_2}
=
(1-\pi_2)(\phi\pi_2+\gamma)(h-h_2^c),
\qquad
h_2^c=\frac{\gamma}{\phi\pi_2+\gamma},
\end{align*}
and
\begin{align*}
\frac{\partial M(s_2,h;A_2)}{\partial \alpha_2}
=
\lambda R(1-\pi_2)(1-\delta h).
\end{align*}
Thus, after factoring out the positive term
$(1-\pi_2)B'\!\bigl(g(s_2,h;A_2)\bigr)$, the sign of
$\partial F_2/\partial \alpha_2$ is the sign of
\begin{align*}
\frac{B''(g(s_2,h;A_2))}{B'(g(s_2,h;A_2))}
M(s_2,h;A_2)
(\phi\pi_2+\gamma)(h-h_2^c)
+
\lambda R(1-\delta h).
\end{align*}

If $h\ge h_2^c$, this expression is strictly positive because
$B'>0$, $B''>0$, $M>0$, and $1-\delta h>0$. Now consider
$h<h_2^c$. By condition~\cref{eq:monotone_condition},
\begin{align*}
\frac{B''(g(s_2,h;A_2))}{B'(g(s_2,h;A_2))}
M(s_2,h;A_2)
<
\frac{\lambda R\delta}{\phi\pi_2+\gamma}.
\end{align*}
Because $h-h_2^c<0$, multiplying both sides by
$(\phi\pi_2+\gamma)(h-h_2^c)$ reverses the inequality and gives
\begin{align*}
\frac{B''(g(s_2,h;A_2))}{B'(g(s_2,h;A_2))}
M(s_2,h;A_2)
(\phi\pi_2+\gamma)(h-h_2^c)
>
\lambda R\delta(h-h_2^c).
\end{align*}
Therefore,
\begin{align*}
\begin{aligned}
&\frac{B''(g(s_2,h;A_2))}{B'(g(s_2,h;A_2))}
M(s_2,h;A_2)
(\phi\pi_2+\gamma)(h-h_2^c)
+
\lambda R(1-\delta h) \\
&\qquad >
\lambda R\delta(h-h_2^c)+\lambda R(1-\delta h) \\
&\qquad =
\lambda R(1-\delta h_2^c)>0,
\end{aligned}
\end{align*}
because $\delta\in(0,1)$ and $h_2^c\in(0,1)$.

Hence $\partial F_2/\partial \alpha_2>0$ at
$h=h_2^*(s_2)$. Because $\partial F_2/\partial h<0$, implicit
differentiation implies \(\frac{\partial h_2^*(s_2)}{\partial \alpha_2}>0\).
\medskip
\noindent For Part~(ii), which establishes the comparative static in \(\pi_2\) under the power wage \(B(s)=W(s)=s^b\) with \(b>1\), we first define the threshold values used in the statement of \cref{prop:alpha_effect_competition}. Fix \(s_2\in D_2\), and define two functions of
\(\pi_2\),
\begin{align*}
L_+(\pi_2)
&\triangleq
\bigl[\lambda R\alpha_2-s_2^b-\lambda R\pi_2(\alpha_2-s_2)\bigr]
\,\frac{\phi}{\phi\pi_2+\gamma},\\
L_-(\pi_2)
&\triangleq
\bigl[\lambda R\alpha_2-\lambda R\delta(\alpha_2-s_2)-s_2^b
-\lambda R\pi_2(1-\delta)(\alpha_2-s_2)\bigr]\\
&\quad\times
\left[
\frac{\phi}{\phi\pi_2+\gamma}
+
(b-1)\,\frac{\max\{\gamma,\,\phi(1-2\pi_2)\}\,(\alpha_2-s_2)}
{s_2-\gamma(1-\pi_2)(\alpha_2-s_2)}
\right],
\end{align*}
both strictly decreasing in \(\pi_2\). Each function tracks the net effect of reliability on the engagement gain through its two channels: a higher \(\pi_2\) lets a unit of engagement build more skill, strengthening the skill-value channel, but it also makes AI fail more often, eroding the current margin on a below-benchmark worker (the operating-margin channel). \(L_+\) bounds this net effect from below and \(L_-\) from above, so a large \(L_+\) certifies that the skill-value channel dominates, whereas a small \(L_-\) certifies that the operating-margin channel dominates. Both fall in \(\pi_2\) because more frequent failure steadily tilts the balance from skill building toward margin erosion. Let \(\bar\pi^+\) and \(\bar\pi^-\), when they exist, denote the unique thresholds defined by \(L_+(\bar\pi^+)=\lambda R(\alpha_2-s_2)\) and \(L_-(\bar\pi^-)=\lambda R(1-\delta)(\alpha_2-s_2)\). If one equation has no solution in the relevant range, the corresponding sign region is empty.

\noindent\textbf{Part (ii).}
\label{app:proof_pi_comparative_statics}
Fix a local range of \(\pi_2\) values over which \(s_2\) remains in
\(D_2\), the terminal single-crossing condition holds, and the equilibrium
remains interior. Hold \(s_2\) and \(\alpha_2\) fixed. Under $B(s)=W(s)=s^b$,
\begin{align*}
F_2(h;s_2,A_2)
=
\frac{\beta b}{2\eta}(\phi\pi_2+\gamma)
\bigl(g(s_2,h;A_2)\bigr)^{b-1}M(s_2,h;A_2).
\end{align*}
Because the equilibrium is interior,
$F_2(h_2^*(s_2,\pi_2);s_2,A_2)=\lambda R\delta$, and implicit
differentiation gives
\begin{align*}
\frac{\partial h_2^*(s_2,\pi_2)}{\partial \pi_2}
=
-\frac{\partial F_2/\partial \pi_2}{\partial F_2/\partial h}.
\end{align*}
Because \(s_2\in D_2\) throughout this local range, the terminal
single-crossing condition in \cref{eq:monotone_condition} applies. Hence,
by \cref{prop:duopoly_terminal}, 
\(\partial F_2(h;s_2,A_2)/\partial h<0\). Therefore
\begin{align*}
\operatorname{sign}\!\left(
\frac{\partial h_2^*(s_2,\pi_2)}{\partial \pi_2}
\right)
=
\operatorname{sign}\!\left(
\frac{\partial F_2(h;s_2,A_2)}{\partial \pi_2}
\Big|_{h=h_2^*(s_2,\pi_2)}
\right).
\end{align*}

For fixed $h$, the derivatives of the skill transition and margin with
respect to $\pi_2$ are
\begin{align*}
\frac{\partial g(s_2,h;A_2)}{\partial \pi_2}
=
(\alpha_2-s_2)\{\gamma+h(\phi-\gamma-2\phi\pi_2)\},
\qquad
\frac{\partial M(s_2,h;A_2)}{\partial \pi_2}
=
-\lambda R(\alpha_2-s_2)(1-\delta h).
\end{align*}
Taking the logarithmic derivative of $F_2$ with respect to $\pi_2$
therefore gives
\begin{align*}
\mathcal A(s_2,\pi_2,h)
&\triangleq
\frac{1}{F_2(h;s_2,A_2)}
\frac{\partial F_2(h;s_2,A_2)}{\partial \pi_2} \\
&=
\frac{\phi}{\phi\pi_2+\gamma}
+
(b-1)
\frac{(\alpha_2-s_2)\{\gamma+h(\phi-\gamma-2\phi\pi_2)\}}
     {g(s_2,h;A_2)}
-
\frac{\lambda R(\alpha_2-s_2)(1-\delta h)}
     {M(s_2,h;A_2)}.
\end{align*}
Because $F_2(h;s_2,A_2)>0$, the sign of $\mathcal A(s_2,\pi_2,h)$ is the
sign of $\partial F_2/\partial\pi_2$ at that same $h$. Combined with the
display above, $\partial h_2^*(s_2,\pi_2)/\partial\pi_2$ has the sign of
$\mathcal A$ at the equilibrium point $h=h_2^*(s_2,\pi_2)$. Because
$h_2^*(s_2,\pi_2)$ has no closed form, we instead establish a sign for
$\mathcal A(s_2,\pi_2,h)$ that holds uniformly over all $h\in[0,1]$; such
a sign holds in particular at $h=h_2^*(s_2,\pi_2)$, and the conclusion
follows.

We first show $\partial h_2^*(s_2,\pi_2)/\partial \pi_2>0$ when
$\pi_2<\bar\pi^+$. Suppose $\pi_2\le1/2$. Then
\begin{align*}
\gamma+h(\phi-\gamma-2\phi\pi_2)\ge0
\qquad\text{for all } h\in[0,1].
\end{align*}
The middle term in $\mathcal A$ is therefore nonnegative, and the margin ratio strictly decreases in $h$, because
\begin{align*}
\frac{\partial}{\partial h}
\left[
\frac{\lambda R(\alpha_2-s_2)(1-\delta h)}
     {M(s_2,h;A_2)}
\right]
=
\lambda R(\alpha_2-s_2)
\frac{\delta s_2(s_2^{b-1}-\lambda R)}
     {M(s_2,h;A_2)^2}<0,
\end{align*}
in which the inequality follows from \cref{ass:wages_revenue},
which implies $\lambda R>\alpha_2^{b-1}>s_2^{b-1}$. Therefore this ratio is maximized
at $h=0$, and for every $h\in[0,1]$,
\begin{align*}
\mathcal A(s_2,\pi_2,h)
\ge
\frac{\phi}{\phi\pi_2+\gamma}
-
\frac{\lambda R(\alpha_2-s_2)}
     {M(s_2,0;A_2)}
=
\frac{L_+(\pi_2)-\lambda R(\alpha_2-s_2)}{M(s_2,0;A_2)}.
\end{align*}
Because $M(s_2,0;A_2)>0$ and $L_+$ is strictly decreasing in $\pi_2$, the
right-hand side is strictly positive precisely when $\pi_2<\bar\pi^+$.
Thus $\mathcal A(s_2,\pi_2,h)>0$ for all $h\in[0,1]$, and in particular
at $h=h_2^*(s_2,\pi_2)$, so \(\frac{\partial h_2^*(s_2,\pi_2)}{\partial \pi_2}>0\).
We next show $\partial h_2^*(s_2,\pi_2)/\partial \pi_2<0$ when
$\pi_2>\bar\pi^-$, by bounding $\mathcal A$ from above. First,
\begin{align*}
\gamma+h(\phi-\gamma-2\phi\pi_2)
\le
\max\{\gamma,\,\phi(1-2\pi_2)\}
\qquad\text{for all } h\in[0,1],
\end{align*}
and $g(s_2,h;A_2)\ge g(s_2,0;A_2)$, so the middle term in $\mathcal A$ is
bounded above by
$(b-1)(\alpha_2-s_2)\max\{\gamma,\phi(1-2\pi_2)\}/g(s_2,0;A_2)$. Second,
the margin ratio is decreasing in $h$, so it is minimized at $h=1$, and
the negative margin term is bounded above by
$-\lambda R(\alpha_2-s_2)(1-\delta)/M(s_2,1;A_2)$. Combining these bounds,
for every $h\in[0,1]$,
\begin{align*}
\mathcal A(s_2,\pi_2,h)
&\le
\frac{\phi}{\phi\pi_2+\gamma}
+
(b-1)
\frac{(\alpha_2-s_2)\max\{\gamma,\,\phi(1-2\pi_2)\}}
     {g(s_2,0;A_2)}
-
\frac{\lambda R(\alpha_2-s_2)(1-\delta)}
     {M(s_2,1;A_2)}
\\
&=
\frac{L_-(\pi_2)-\lambda R(1-\delta)(\alpha_2-s_2)}{M(s_2,1;A_2)}.
\end{align*}
Because $M(s_2,1;A_2)>0$ and $L_-$ is strictly decreasing in $\pi_2$, the
right-hand side is strictly negative precisely when $\pi_2>\bar\pi^-$.
Thus $\mathcal A(s_2,\pi_2,h)<0$ for all $h\in[0,1]$, and in particular
at $h=h_2^*(s_2,\pi_2)$, so \(\frac{\partial h_2^*(s_2,\pi_2)}{\partial \pi_2}<0\).
\hfill \emph{Q.E.D.}

\medskip

\noindent \emph{Proof of \cref{lem:period1_candidates}.}\label{app:proof_period1_candidates}
Fix $s_1$ as stated and suppose $(h_1^*,h_1^*)$ is a symmetric
equilibrium. For brevity, write
\[
\sigma_1^j=\sigma_1^j(h_1^j,h_1^{-j};s_1,A_1,A_2).
\]
Then $x\mapsto\Phi_1^j(x,h_1^*;s_1,A_1,A_2)$ attains its
maximum over $[0,1]$ at $x=h_1^*$. Differentiating
\cref{eq:period1_objective} in $h_1^j$,
\begin{align*}
\frac{\partial\Phi_1^j}{\partial h_1^j}
=
\frac{\partial\sigma_1^j}{\partial h_1^j}
\Bigl[M(s_1,h_1^j;A_1)
+\beta\Pi_2^*\!\bigl(g(s_1,h_1^j;A_1),A_2\bigr)
-\beta\Pi_2^*\!\bigl(g(s_1,h_1^{-j};A_1),A_2\bigr)\Bigr]\\
+\sigma_1^j\!\left[
\frac{\partial M(s_1,h_1^j;A_1)}{\partial h_1^j}
+\beta\,\frac{d\,\Pi_2^*\!\bigl(g(s_1,h_1^j;A_1),A_2\bigr)}{dh_1}
\right].
\end{align*}
At the symmetric point $h_1^j=h_1^{-j}=h_1^*$, the two continuation
terms in the first bracket cancel, $\sigma_1^j=1/2$, and the logit
derivative is
$\partial\sigma_1^j/\partial h_1^j\big|_{\mathrm{sym}}
=(\beta/4\eta)\,d\bar V_2\!\bigl(g(s_1,h_1^*;A_1),A_2\bigr)/dh_1$,
so the own-action derivative at the symmetric profile equals
$F_1(h_1^*;s_1,A_1,A_2)$. (By the differentiability assumption in the lemma, \(F_1\) is well defined at \(h_1^*\).)
Optimality of $h_1^*$ on $[0,1]$ requires the directional derivative
toward any feasible $h_1\in[0,1]$ to be nonpositive:
$F_1(h_1^*;s_1,A_1,A_2)(h_1-h_1^*)\le0$ for all $h_1\in[0,1]$. If
$h_1^*\in(0,1)$, both directions are feasible, forcing
$F_1(h_1^*;s_1,A_1,A_2)=0$; if $h_1^*=0$, only $h_1>0$ is feasible,
giving $F_1(0;s_1,A_1,A_2)\le0$; if $h_1^*=1$, only $h_1<1$ is
feasible, giving $F_1(1;s_1,A_1,A_2)\ge0$.
\hfill \emph{Q.E.D.}

\medskip

\noindent \emph{Proof of \cref{prop:period1_regimes}.}
\label{app:proof_period1_regimes}
Fix \(s_1\in D_1\) satisfying either \(s_1<s_1^D\) or
\(s_1>s_1^F\). By \cref{ass:skill},
\begin{align*}
g(s_1,1;A_1)
=
\bigl[1-\phi\pi_1(1-\pi_1)\bigr]s_1
+\phi\pi_1(1-\pi_1)\alpha_1
\end{align*}
and
\begin{align*}
g(s_1,0;A_1)
=
\bigl[1+\gamma(1-\pi_1)\bigr]s_1
-\gamma(1-\pi_1)\alpha_1
\end{align*}
are strictly increasing in \(s_1\). By the definitions in
\cref{eq:s1_regime_thresholds},
\begin{align*}
g(s_1^D,1;A_1)=\hat{s}_2^D,
\qquad
g(s_1^F,0;A_1)=\overline{s}_2^D.
\end{align*}
Thus \(s_1<s_1^D\) implies \(g(s_1,1;A_1)<\hat{s}_2^D\),
and \(s_1>s_1^F\) implies \(g(s_1,0;A_1)>\overline{s}_2^D\).
Because \(g(s_1,\cdot\,;A_1)\) is increasing, the reachable continuation
set \(I\triangleq[g(s_1,0;A_1),g(s_1,1;A_1)]\)
is a compact interval. Since \(s_1\in D_1\), we have \(I\subset D_2\).
If \(s_1<s_1^D\), then \(I\subset(\bar{\bar{s}}_2,\hat{s}_2^D)\),
so \cref{cor:terminal_monotone} gives \(h_2^*(s_2)=0\) for all
\(s_2\in I\). If \(s_1>s_1^F\), then \(I\subset(\overline{s}_2^D,\alpha_2)\),
so \cref{cor:terminal_monotone} gives \(h_2^*(s_2)=1\) for all
\(s_2\in I\). In either case, the inequalities are strict and \(I\) is
compact, so \(I\) is bounded away from the relevant terminal cutoff.
Hence the terminal policy is constant on an open interval containing
\(I\).

In the no-terminal-engagement case \(s_1<s_1^D\), on this open interval,
\begin{align*}
\Pi_2^*(s_2,A_2)=\frac12 M(s_2,0;A_2),
\qquad
\bar V_2(s_2,A_2)
=
s_2^b+\beta\bigl(g(s_2,0;A_2)\bigr)^b+\eta\log2.
\end{align*}
In the full-terminal-engagement case \(s_1>s_1^F\), on this open
interval,
\begin{align*}
\Pi_2^*(s_2,A_2)=\frac12 M(s_2,1;A_2),
\qquad
\bar V_2(s_2,A_2)
=
s_2^b+\beta\bigl(g(s_2,1;A_2)\bigr)^b+\eta\log2.
\end{align*}
In both cases, \(\Pi_2^*(\cdot,A_2)\) and \(\bar V_2(\cdot,A_2)\) are
\(C^2\) on the open interval containing \(I\), because
\(g(s_2,0;A_2)>0\) for \(s_2>\underline{s}_2\) and
\(g(s_2,1;A_2)>0\) throughout.

In the no-terminal-engagement case,
\begin{align*}
\Pi_2^{*\prime}(s_2,A_2)
=
\frac12\bigl[\lambda R\pi_2-b\,s_2^{b-1}\bigr],
\qquad
\Pi_2^{*\prime\prime}(s_2,A_2)
=
-\frac{b(b-1)}{2}s_2^{b-2}<0,
\end{align*}
and
\begin{align*}
\bar V_2'(s_2,A_2)
=
b\,s_2^{b-1}
+
\beta b\bigl[1+\gamma(1-\pi_2)\bigr]
\bigl(g(s_2,0;A_2)\bigr)^{b-1}>0.
\end{align*}
In the full-terminal-engagement case,
\begin{align*}
\Pi_2^{*\prime}(s_2,A_2)
=
\frac12\bigl[
\lambda R\pi_2+\lambda R\delta(1-\pi_2)-b\,s_2^{b-1}
\bigr],
\qquad
\Pi_2^{*\prime\prime}(s_2,A_2)
=
-\frac{b(b-1)}{2}s_2^{b-2}<0,
\end{align*}
and
\begin{align*}
\bar V_2'(s_2,A_2)
=
b\,s_2^{b-1}
+
\beta b\bigl[1-\phi\pi_2(1-\pi_2)\bigr]
\bigl(g(s_2,1;A_2)\bigr)^{b-1}>0.
\end{align*}

Now write
\begin{align*}
g_1\triangleq g(s_1,h_1;A_1),
\qquad
g_h\triangleq
\frac{\partial g(s_1,h_1;A_1)}{\partial h_1}
=
(1-\pi_1)(\phi\pi_1+\gamma)(\alpha_1-s_1)>0.
\end{align*}
The derivative \(g_h\) is constant in \(h_1\). Therefore, on the
terminal-corner region under consideration, the marginal gain can be
written as
\begin{align*}
F_1(h_1;s_1,A_1,A_2)
=
\frac{1}{\eta}A(h_1)+C(h_1),
\end{align*}
where
\begin{align*}
A(h_1)
\triangleq
\frac{\beta}{4}\,
\bar V_2'(g_1,A_2)\,g_h\,M(s_1,h_1;A_1),
\end{align*}
and
\begin{align*}
C(h_1)
\triangleq
-\frac12\lambda R\delta(1-\pi_1)(\alpha_1-s_1)
+
\frac{\beta}{2}\Pi_2^{*\prime}(g_1,A_2)g_h.
\end{align*}
Both \(A\) and \(C\) are \(C^1\) on \([0,1]\). Moreover,
\begin{align*}
C'(h_1)
=
\frac{\beta}{2}\Pi_2^{*\prime\prime}(g_1,A_2)g_h^2
\le -c<0,
\end{align*}
where
\begin{align*}
c
\triangleq
\frac{\beta b(b-1)}{4}g_h^2
\min_{s_2\in I}s_2^{b-2}>0.
\end{align*}
Let \(K\triangleq\max_{h_1\in[0,1]}|A'(h_1)|<\infty\).
Then, for every \(\eta\ge 2K/c\),
\begin{align*}
F_1'(h_1;s_1,A_1,A_2)
=
\frac{1}{\eta}A'(h_1)+C'(h_1)
\le
\frac{K}{\eta}-c
\le
-\frac{c}{2}<0
\end{align*}
for every \(h_1\in[0,1]\). Hence \(F_1\) is strictly decreasing on
\([0,1]\).
Strict monotonicity of \(F_1\) gives uniqueness of the symmetric
candidate selected by the variational inequality. To show this
candidate is an equilibrium, however, the first-order condition must be
sufficient for a global best response. This is why we next establish
strict concavity of \(x\mapsto\Phi_1^j(x,y;s_1,A_1,A_2)\) for each fixed
other-firm action \(y\). Under this concavity, for any candidate \(h\) and any
deviation \(z\in[0,1]\),
\begin{align*}
\Phi_1^j(z,h)-\Phi_1^j(h,h)
\le
\frac{\partial \Phi_1^j}{\partial x}(h,h)(z-h)
=
F_1(h;s_1,A_1,A_2)(z-h).
\end{align*}
Thus any \(h\) satisfying the variational inequality is a best response
to itself.
It remains to verify, for sufficiently large \(\eta\), each firm's
objective is strictly concave in its own action. Fix any other-firm action
\(y\in[0,1]\), write \(x\) for the firm's own action, and define \(G(x)\triangleq g(s_1,x;A_1)\).
Let \(\sigma=\sigma_1^j(x,y;s_1,A_1,A_2)\)
and
\begin{align*}
\Delta(x,y)
\triangleq
M(s_1,x;A_1)
+\beta\Pi_2^*(G(x),A_2)
-\beta\Pi_2^*(G(y),A_2).
\end{align*}
Then
\begin{align*}
\Phi_1^j(x,y;s_1,A_1,A_2)
=
\sigma\,\Delta(x,y)
+
\beta\Pi_2^*(G(y),A_2).
\end{align*}
Differentiating the logit gives
\begin{align*}
\sigma_x
=
\sigma(1-\sigma)\frac{\beta}{\eta}
\bar V_2'(G(x),A_2)g_h
\end{align*}
and
\begin{align*}
\sigma_{xx}
=
\sigma(1-\sigma)(1-2\sigma)
\left(\frac{\beta}{\eta}\right)^2
\bigl(\bar V_2'(G(x),A_2)g_h\bigr)^2
+
\sigma(1-\sigma)\frac{\beta}{\eta}
\bar V_2''(G(x),A_2)g_h^2.
\end{align*}
Since
\begin{align*}
\Delta_x(x,y)
=
\frac{\partial M(s_1,x;A_1)}{\partial h_1}
+
\beta\Pi_2^{*\prime}(G(x),A_2)g_h
\end{align*}
and
\begin{align*}
\Delta_{xx}(x,y)
=
\beta\Pi_2^{*\prime\prime}(G(x),A_2)g_h^2,
\end{align*}
we obtain
\begin{align*}
\frac{\partial^2\Phi_1^j}{\partial x^2}
=
\sigma\Biggl\{&
(1-\sigma)
\left[
(1-2\sigma)\frac{\beta^2}{\eta^2}
\bigl(\bar V_2'(G(x),A_2)g_h\bigr)^2
+
\frac{\beta}{\eta}\bar V_2''(G(x),A_2)g_h^2
\right]\Delta(x,y)\\
&+
2(1-\sigma)\frac{\beta}{\eta}
\bar V_2'(G(x),A_2)g_h
\left[
\frac{\partial M(s_1,x;A_1)}{\partial h_1}
+
\beta\Pi_2^{*\prime}(G(x),A_2)g_h
\right]\\
&+
\beta\Pi_2^{*\prime\prime}(G(x),A_2)g_h^2
\Biggr\}.
\end{align*}
For \(x,y\in[0,1]\), we have \(G(x),G(y)\in I\). Since
\(\bar V_2\) and \(\Pi_2^*\) are \(C^2\) on a neighborhood of
\(I\), the quantities
\begin{align*}
\bar V_2'(G(x),A_2),\quad
|\bar V_2''(G(x),A_2)|,\quad
|\Delta(x,y)|,
\quad\text{and}\quad
\left|
\frac{\partial M(s_1,x;A_1)}{\partial h_1}
+
\beta\Pi_2^{*\prime}(G(x),A_2)g_h
\right|
\end{align*}
are bounded uniformly on \([0,1]^2\). Also, \((1-\sigma)\le1\), \(|1-2\sigma|\le1\), and the final term inside braces satisfies \(\beta\Pi_2^{*\prime\prime}(G(x),A_2)g_h^2\le -2c\).
Hence there exist finite constants \(K_1\) and \(K_2\), independent of
\(\eta\), such that the expression inside braces is at most \(\frac{K_1}{\eta}+\frac{K_2}{\eta^2}-2c\).
For \(\eta\ge \max\left\{1,\frac{K_1+K_2}{c}\right\}\),
this upper bound is at most \(-c<0\). Since \(\sigma>0\), it follows that \(\frac{\partial^2\Phi_1^j}{\partial x^2}<0\)
for every \(x,y\in[0,1]\). Thus each firm's objective is strictly
concave in its own action, uniformly in the other firm's action.

The bounds above depend only on the primitives, \(s_1\), \(A_1\), \(A_2\),
and the terminal corner under consideration, but not on \(\eta\). If both
terminal-corner cases are possible for different values of the cutoffs,
take \(\eta_1(s_1)\) to be the maximum of the corresponding bounds across
the no-terminal-engagement and full-terminal-engagement cases. Thus, for
every \(\eta\ge\eta_1(s_1)\), whenever \(s_1<s_1^D\) or \(s_1>s_1^F\),
the function \(F_1\) is strictly decreasing on \([0,1]\), and each firm's
objective is strictly concave in its own action.

It remains to characterize the symmetric equilibrium. Fix
\(\eta\ge\eta_1(s_1)\). By strict concavity, a firm has no profitable
deviation from own action \(h\) against the other firm's action \(h\) if and only if
the directional derivative toward every feasible action is nonpositive.
By the computation in the proof of \cref{lem:period1_candidates}, the
own-action derivative at the symmetric profile \((h,h)\) equals
\(F_1(h;s_1,A_1,A_2)\). Hence \((h,h)\) is a symmetric equilibrium if and
only if
\begin{align*}
F_1(h;s_1,A_1,A_2)(z-h)\le0
\qquad
\forall z\in[0,1].
\end{align*}
Because \(F_1\) is strictly decreasing, exactly one \(h\) satisfies this
condition. If \(F_1(0;s_1,A_1,A_2)\le0\),
then \(h=0\) satisfies the variational inequality, and every \(h>0\) has
\(F_1(h;s_1,A_1,A_2)<0\), so no interior point or upper corner can
satisfy it. If \(F_1(1;s_1,A_1,A_2)\ge0\),
then \(h=1\) satisfies the variational inequality, and every \(h<1\) has
\(F_1(h;s_1,A_1,A_2)>0\), so no other point can satisfy it. Otherwise, \(F_1(0;s_1,A_1,A_2)>0>F_1(1;s_1,A_1,A_2)\),
and continuity plus strict monotonicity gives a unique interior root of \(F_1(h;s_1,A_1,A_2)=0\).
This root satisfies the variational inequality, while both corners fail.
This proves the stated characterization of the unique symmetric
equilibrium \(h_1^*(s_1)\).

Finally, the realized continuation state
\(g(s_1,h_1^*(s_1);A_1)\) lies in \(I\). Therefore, by the
terminal-corner argument above, \(h_2^*(g(s_1,h_1^*(s_1);A_1))=0\)
when \(s_1<s_1^D\), and \(h_2^*(g(s_1,h_1^*(s_1);A_1))=1\)
when \(s_1>s_1^F\).
\hfill \emph{Q.E.D.}
\medskip

\noindent \emph{Proof of \cref{cor:mobility_vs_monopoly}.}
\label{app:proof_mobility_vs_monopoly}
Fix \(s_1\in D_1\), and let
\(I=[g(s_1,0;A_1),g(s_1,1;A_1)]\) be the set of period-2 skills reachable
from \(s_1\). Since \(s_1\in D_1\), we have \(I\subset D_2\). On the
region \(s_1<s_1^D\), \cref{prop:period1_regimes} implies
\(h_2^*(g(s_1,h_1;A_1))=0\) for all \(h_1\in[0,1]\). Equivalently, on
\(I\),
\(\Pi_2^*(s_2,A_2)=\frac12 M(s_2,0;A_2)\), and hence
\begin{align*}
\Pi_2^{*\prime}(s_2,A_2)
=
\frac12\bigl[\lambda R\pi_2-b\,s_2^{b-1}\bigr].
\end{align*}
Moreover, on this terminal corner,
\(\bar V_2(s_2,A_2)=s_2^b+\beta(g(s_2,0;A_2))^b+\eta\log2\), so
\(\bar V_2'(s_2,A_2)\) does not depend on \(\eta\).

Write \(g_1\equiv g(s_1,h_1;A_1)\) and
\(g_h\equiv(1-\pi_1)(\phi\pi_1+\gamma)(\alpha_1-s_1)>0\). The single
firm's period-1 marginal payoff is
\begin{align*}
F_1^{\mathrm{sf}}(h_1)
=
\frac{\partial M(s_1,h_1;A_1)}{\partial h_1}
+
\beta\bigl[\lambda R\pi_2-b\,g_1^{b-1}\bigr]g_h.
\end{align*}
By \cref{prop:monopoly_period1}, \(h_1^{\mathrm{sf}}(s_1)\) is characterized by the
Karush--Kuhn--Tucker conditions for this strictly concave single-firm
problem.

Using the display for \(\Pi_2^{*\prime}\) above in the definition of
\(F_1(h_1;s_1,A_1,A_2)\), we obtain
\begin{align*}
F_1(h_1;s_1,A_1,A_2)
=
\frac{1}{\eta}
\left[
\frac{\beta}{4}\bar V_2'(g_1,A_2)g_h M(s_1,h_1;A_1)
\right]
+\frac14 F_1^{\mathrm{sf}}(h_1)
+\frac14\frac{\partial M(s_1,h_1;A_1)}{\partial h_1}.
\end{align*}
The term in square brackets is continuous on \([0,1]\), nonnegative, and
independent of \(\eta\). Let
\begin{align*}
K_R
\triangleq
\max_{h_1\in[0,1]}
\frac{\beta}{4}\bar V_2'(g(s_1,h_1;A_1),A_2)g_hM(s_1,h_1;A_1)
<\infty.
\end{align*}
Because \(I\subset D_2\) is compact and \(\bar\eta(s_2)\) is continuous
on \(D_2\), \(\max_{s_2\in I}\bar\eta(s_2)<\infty\). Define
\begin{align*}
\eta_2(s_1)
\triangleq
\max\left\{
\eta_1(s_1),\;
2\max_{s_2\in I}\bar\eta(s_2),\;
\frac{8K_R}{\lambda R\delta(1-\pi_1)(\alpha_1-s_1)}
\right\}.
\end{align*}
Then \(\eta_2(s_1)\ge\eta_1(s_1)\). For every
\(\eta\ge\eta_2(s_1)\), the inequality
\(\eta>\bar\eta(s_2)\) holds for every \(s_2\in I\), so terminal
engagement is zero throughout the reachable continuation set. Thus the
no-terminal-engagement case of \cref{prop:period1_regimes} applies.

For such \(\eta\), since
\begin{align*}
\frac{\partial M(s_1,h_1;A_1)}{\partial h_1}
=
-\lambda R\delta(1-\pi_1)(\alpha_1-s_1),
\end{align*}
we have, for every \(h_1\in[0,1]\),
\begin{align*}
\frac{K_R}{\eta}
+
\frac14\frac{\partial M(s_1,h_1;A_1)}{\partial h_1}
\le
\frac{\lambda R\delta(1-\pi_1)(\alpha_1-s_1)}{8}
-
\frac{\lambda R\delta(1-\pi_1)(\alpha_1-s_1)}{4}
<0.
\end{align*}
Therefore
\begin{align*}
F_1(h_1;s_1,A_1,A_2)
<
\frac14 F_1^{\mathrm{sf}}(h_1)
\qquad \forall h_1\in[0,1].
\end{align*}

We now compare the equilibrium conditions. If
\(h_1^{\mathrm{sf}}(s_1)\in(0,1)\), then \(F_1^{\mathrm{sf}}(h_1^{\mathrm{sf}}(s_1))=0\), so
\(F_1(h_1^{\mathrm{sf}}(s_1);s_1,A_1,A_2)<0\). Since
\cref{prop:period1_regimes} gives that \(F_1(h_1;s_1,A_1,A_2)\) is
strictly decreasing in \(h_1\), the symmetric equilibrium under mobility
must lie strictly below \(h_1^{\mathrm{sf}}(s_1)\). Indeed, \(h_1^*(s_1)=1\) is
impossible because \(F_1(1;s_1,A_1,A_2)<0\), and any interior root of
\(F_1(h_1;s_1,A_1,A_2)=0\) must occur to the left of \(h_1^{\mathrm{sf}}(s_1)\).

If \(h_1^{\mathrm{sf}}(s_1)=0\), the single firm's KKT condition gives
\(F_1^{\mathrm{sf}}(0)\le0\). Hence \(F_1(0;s_1,A_1,A_2)<0\), and
\cref{prop:period1_regimes} implies \(h_1^*(s_1)=0=h_1^{\mathrm{sf}}(s_1)\). If
\(h_1^{\mathrm{sf}}(s_1)=1\), then \(h_1^*(s_1)\le1=h_1^{\mathrm{sf}}(s_1)\) trivially. Thus,
for all \(\eta\ge\eta_2(s_1)\),
\(h_1^*(s_1)\le h_1^{\mathrm{sf}}(s_1)\), with strict inequality whenever
\(h_1^{\mathrm{sf}}(s_1)\in(0,1)\).
\hfill \emph{Q.E.D.}

\medskip

\noindent \emph{Derivation of the cross-partial in
\cref{sec:asymmetry_duopoly}.}
Write $x$ and $y$ for the own and the other firm's actions,
$P(x)\triangleq\Pi_2^*(g(s_1,x;A_1),A_2)$,
$\Delta(x,y)\triangleq M(s_1,x;A_1)+\beta P(x)-\beta P(y)$, and
$u(x,y)\triangleq\beta[\bar V_2(g(s_1,y;A_1),A_2)
-\bar V_2(g(s_1,x;A_1),A_2)]/\eta$, so that
$\sigma_1^j=(1+e^u)^{-1}$ and
$\Phi_1^j=\sigma_1^j\Delta+\beta P(y)$. Then
$\partial\Phi_1^j/\partial x
=\sigma_x\Delta+\sigma_1^j[\partial M/\partial h_1+\beta P'(x)]$, and
\begin{align*}
\frac{\partial^2\Phi_1^j}{\partial x\,\partial y}
=
\sigma_{xy}\,\Delta
-\sigma_x\,\beta P'(y)
+\sigma_y\,\Bigl[\frac{\partial M(s_1,x;A_1)}{\partial h_1}
+\beta P'(x)\Bigr].
\end{align*}
Because $u$ is additively separable in $(x,y)$, $u_{xy}=0$, and at a
symmetric profile $u=0$, where the logistic satisfies
$\sigma_1^j=\tfrac12$ and $d^2\sigma/du^2=0$; hence $\sigma_{xy}=0$
there. Moreover,
$\sigma_x=-\sigma_y
=(\beta/4\eta)\bar V_2'\!\bigl(g(s_1,h_1;A_1)\bigr)g_h$ at the
symmetric profile, with
$g_h=(1-\pi_1)(\phi\pi_1+\gamma)(\alpha_1-s_1)$, and
$P'=\Pi_2^{*\prime}\!\bigl(g(s_1,h_1;A_1),A_2\bigr)g_h$. Substituting,
\begin{align*}
\frac{\partial^2\Phi_1^j}{\partial x\,\partial y}
\bigg|_{x=y=h_1}
=
-\frac{\beta}{4\eta}\bar V_2'\!\bigl(g(s_1,h_1;A_1)\bigr)g_h
\left[\frac{\partial M(s_1,h_1;A_1)}{\partial h_1}
+2\beta\,\Pi_2^{*\prime}\!\bigl(g(s_1,h_1;A_1),A_2\bigr)g_h\right],
\end{align*}
and inserting
$\partial M/\partial h_1=-\lambda R\delta(1-\pi_1)(\alpha_1-s_1)$ and
the value of $g_h$ yields the expression displayed in
\cref{sec:asymmetry_duopoly}.
\hfill \emph{Q.E.D.}

\medskip

\section{Skill-Transition Policy Without Worker Mobility}\label{app:skill_transition_table}

The single firm's optimal period-1 engagement policy, characterized in \cref{prop:monopoly_period1}, takes the firm's target period-2 skill $s^*$ and engages each worker just enough to reach it. \cref{tab:skill_transition} restates that policy as a partition of the initial-skill range by the thresholds $s_L$ and $s_H$ of \cref{eq:s_L_s_H}: workers below $s_L$ are engaged fully, workers above $s_H$ are left alone, and workers in between are held exactly at $s^*$.

\begin{table}[ht]
\centering
\caption{Skill transition under optimal period-1 engagement without worker mobility}
\label{tab:skill_transition}
\begin{threeparttable}
\begin{tabular}{@{}lccc@{}}
\toprule
Targeting region & Initial skill $s_1$ & Engagement $h_1^{\mathrm{sf}}(s_1)$ & Period-2 skill $s_2$ \\
\midrule
Full engagement    & $s_1 \le s_L$     & $1$         & $g(s_1,1;A_1) \le s^*$ \\
Partial engagement & $s_L < s_1 < s_H$ & $\in (0,1)$ & $s^*$ \\
No engagement      & $s_1 \ge s_H$     & $0$         & $g(s_1,0;A_1) \ge s^*$ \\
\bottomrule
\end{tabular}
\begin{tablenotes}
\footnotesize
\item \textit{Notes.} $s^{*}$ is the target period-2 skill; $s_L \le s_H$ are the initial-skill thresholds defined in \cref{eq:s_L_s_H}; and $g(s_1,h_1;A_1)$ is the skill-transition map at period-1 AI state~$A_1$. Equality in the last column holds at the thresholds $s_1=s_L$ and $s_1=s_H$.
\end{tablenotes}
\end{threeparttable}
\end{table}

\section{Coverage of the Terminal Single-Crossing Domain}
\label{app:d2_size}

The power-wage analysis in the main text shows that the terminal
single-crossing condition holds on an upper-tail domain
$D_2=(\bar{\bar{s}}_2,\alpha_2)$. To gauge the size of this restriction,
we report two measures. The first is the share of the full below-benchmark
interval covered by $D_2$,
\[
\frac{\alpha_2-\bar{\bar{s}}_2}{\alpha_2}.
\]
The second is the share of the smooth terminal domain covered by $D_2$,
\[
\frac{\alpha_2-\bar{\bar{s}}_2}{\alpha_2-\underline{s}_2},
\qquad
\underline{s}_2=
\frac{\gamma(1-\pi_2)\alpha_2}{1+\gamma(1-\pi_2)}.
\]
The first measure is the intuitive coverage of below-benchmark workers;
the second isolates the additional restriction imposed by the
single-crossing condition, after the positivity requirement has already
removed the interval below $\underline{s}_2$.

\begin{table}[ht]
\centering
\caption{Size of the terminal domain $D_2$}
\label{tab:d2_size}
\begin{tabular}{cccc}
\toprule
AI capability $\alpha_2$
& Cutoff $\bar{\bar{s}}_2$
& Share of $(0,\alpha_2)$
& Share of $(\underline{s}_2,\alpha_2)$ \\
\midrule
$0.50$ & $0.136$ & $72.8\%$ & $87.0\%$ \\
$0.60$ & $0.163$ & $72.8\%$ & $87.0\%$ \\
$0.70$ & $0.190$ & $72.9\%$ & $87.1\%$ \\
$0.80$ & $0.217$ & $72.9\%$ & $87.1\%$ \\
$0.90$ & $0.244$ & $72.9\%$ & $87.1\%$ \\
$0.95$ & $0.257$ & $72.9\%$ & $87.1\%$ \\
\bottomrule
\end{tabular}
\begin{flushleft}
\footnotesize
\textit{Notes.} The table uses $B(s)=W(s)=s^{1.2}$,
$\pi_2=0.35$, $\phi=0.5$, $\gamma=0.3$, $\delta=0.5$, and
$\lambda R=3.0$. For each value of $\alpha_2$, the cutoff
$\bar{\bar{s}}_2$ solves \cref{eq:monotone_condition_power} at equality.
\end{flushleft}
\end{table}

Across these capability levels, the admissible terminal domain covers
about three-quarters of the full below-benchmark interval and about
seven-eighths of the smooth terminal domain. Thus the single-crossing
condition does not restrict attention to a narrow slice near the AI
frontier; it removes the lowest-skill region where the curvature term is
largest, while leaving a broad upper band of below-benchmark workers.

\medskip

\section{Coverage of the Period-1 Skill Domains}
\label{app:period1_domain_size}

This appendix quantifies the period-1 skill restrictions used in
\cref{sec:period1_duopoly}. The main text restricts attention to
initial skills \(s_1\in D_1\) so that every continuation state reachable
under feasible period-1 engagement lies in the terminal domain \(D_2\).
It then further studies the regions \(s_1<s_1^D\) and \(s_1>s_1^F\),
where terminal engagement is fixed along the whole period-1 skill
trajectory. The purpose of these restrictions is to avoid terminal-policy
kinks in the period-1 objective, not to generate the mobility mechanism.

We first record the explicit cutoffs. The terminal-period equilibrium
characterization applies on \(D_2=(\bar{\bar{s}}_2,\alpha_2)\). Because
\(g(s_1,h_1;A_1)\) is increasing in \(h_1\), and because
\cref{ass:skill} implies \(g(s_1,h_1;A_1)<\alpha_1\le\alpha_2\) for every
\(h_1\in[0,1]\), every reachable continuation state lies in \(D_2\) as
soon as \(g(s_1,0;A_1)>\bar{\bar{s}}_2\). Define
\begin{align*}
\bar{\bar{s}}_1
\triangleq
\frac{\bar{\bar{s}}_2+\gamma(1-\pi_1)\alpha_1}
{1+\gamma(1-\pi_1)}
\qquad\text{and}\qquad
D_1
\triangleq
\bigl(\max\{\underline{s},\bar{\bar{s}}_1\},\alpha_1\bigr).
\end{align*}
For \(s_1\in D_1\) we then have \(g(s_1,h_1;A_1)\in D_2\) for every
\(h_1\in[0,1]\); only the reachable subset of \(D_2\) is used, because
\cref{ass:skill} implies \(g(s_1,h_1;A_1)<\alpha_1\le\alpha_2\), so the
portion of the terminal domain above \(\alpha_1\) need not be reachable
from period~1.

The two fixed-terminal thresholds pull the domain-restricted terminal
cutoffs \(\hat{s}_2^D\) and \(\overline{s}_2^D\) of
\cref{cor:terminal_monotone} back to period~1:
\begin{align}\label{eq:s1_regime_thresholds}
s_1^D
\triangleq
\frac{\hat{s}_2^D-\phi\pi_1(1-\pi_1)\alpha_1}
{1-\phi\pi_1(1-\pi_1)},
\qquad
s_1^F
\triangleq
\frac{\overline{s}_2^D+\gamma(1-\pi_1)\alpha_1}
{1+\gamma(1-\pi_1)}.
\end{align}
By construction, \(g(s_1^D,1;A_1)=\hat{s}_2^D\) and
\(g(s_1^F,0;A_1)=\overline{s}_2^D\). Hence, if \(s_1<s_1^D\), then even
full period-1 engagement leaves the worker below the terminal
no-engagement cutoff: \(g(s_1,h_1;A_1)\le g(s_1,1;A_1)<\hat{s}_2^D\) for
all \(h_1\in[0,1]\), so \(h_2^*(g(s_1,h_1;A_1))=0\) for all
\(h_1\in[0,1]\). Similarly, if \(s_1>s_1^F\), then even zero period-1
engagement puts the worker above the terminal full-engagement cutoff:
\(g(s_1,h_1;A_1)\ge g(s_1,0;A_1)>\overline{s}_2^D\) for all
\(h_1\in[0,1]\), so \(h_2^*(g(s_1,h_1;A_1))=1\) for all
\(h_1\in[0,1]\). After intersection with \(D_1\), the
no-terminal-engagement region is nonempty only if
\(s_1^D>\max\{\underline{s},\bar{\bar{s}}_1\}\) and the
full-terminal-engagement region only if \(s_1^F<\alpha_1\); if either
fails, the corresponding region is vacuous rather than inconsistent.

We measure the size of the period-1 terminal-domain restriction by
\[
\rho_1
\equiv
\frac{|D_1|}{\alpha_1-\underline{s}}
=
\frac{\alpha_1-\max\{\underline{s},\bar{\bar{s}}_1\}}
{\alpha_1-\underline{s}},
\]
whenever \(D_1\) is nonempty. Thus \(\rho_1\) is the share of the feasible
period-1 skill interval \((\underline{s},\alpha_1)\) covered by \(D_1\).

We also measure the size of the two fixed-terminal regions within \(D_1\):
\[
\rho_1^D
\equiv
\frac{|D_1\cap(-\infty,s_1^D)|}{|D_1|},
\qquad
\rho_1^F
\equiv
\frac{|D_1\cap(s_1^F,\infty)|}{|D_1|}.
\]
The first is the share of \(D_1\) for which every reachable terminal state
falls in the no-terminal-engagement region; the second is the share for
which every reachable terminal state falls in the full-terminal-engagement
region. The remaining share,
\(1-\rho_1^D-\rho_1^F\), is the part of \(D_1\) where reachable continuation
states may cross the terminal interior-engagement band.

We compute these shares for the power-wage case \(B(s)=W(s)=s^b\). The
grid is
\[
b\in\{1.03,1.05,1.08,1.12\},\quad
\alpha_1\in\{0.55,0.65,0.75\},\quad
\alpha_2\in\{0.70,0.82,0.90\},
\]
\[
\pi_2\in\{0.25,0.35,0.45,0.55\},\quad
\pi_1\in\{0.45,0.55,0.65,0.75\},
\]
\[
\phi\in\{0.5,1.2,2.0\},\quad
\gamma\in\{0.1,0.2,0.3\},\quad
\delta\in\{0.05,0.10,0.20,0.50\},
\]
\[
\lambda R\in\{3,5,10,20\},\quad
\eta\in\{0.2,0.5,1,2,5,10\},
\qquad \beta=0.95.
\]
We retain parameter vectors satisfying the model assumptions, the AI path
restrictions \(\alpha_2\ge\alpha_1\) and \(\pi_2\le\pi_1\), the terminal
upper-tail condition in \cref{eq:upper_tail_curvature_condition}, and the
monotone-terminal-policy condition \(\lambda R\pi_2\ge b\alpha_2^{b-1}\).

\begin{table}[ht]
\centering
\caption{Size of the period-1 domain \(D_1\)}
\label{tab:period1_D1_size}
\begin{tabular}{lccccc}
\toprule
Grid & Nonempty \(D_1\) & Mean \(\rho_1\) & 10th pct. & Median & 90th pct. \\
\midrule
Power-wage grid & \(95.3\%\) & \(72.9\%\) & \(36.0\%\) & \(81.0\%\) & \(95.5\%\) \\
\bottomrule
\end{tabular}
\begin{flushleft}
\footnotesize
\textit{Notes.} \(\rho_1=|D_1|/(\alpha_1-\underline{s})\) is the share of
the feasible period-1 skill interval covered by \(D_1\). The table reports
statistics across admissible grid points.
\end{flushleft}
\end{table}

\begin{table}[ht]
\centering
\caption{Coverage of the fixed-terminal regions within \(D_1\)}
\label{tab:period1_corner_regions}
\begin{tabular}{lcccc}
\toprule
Sorting friction \(\eta\)
& Mean \(\rho_1^D\)
& Mean \(\rho_1^F\)
& Mean \(\rho_1^D+\rho_1^F\)
& Median \(\rho_1^D+\rho_1^F\) \\
\midrule
\(0.2\)  & \(3.6\%\)  & \(92.1\%\) & \(95.7\%\) & \(100.0\%\) \\
\(0.5\)  & \(23.9\%\) & \(69.9\%\) & \(93.8\%\) & \(100.0\%\) \\
\(1.0\)  & \(46.9\%\) & \(50.2\%\) & \(97.1\%\) & \(100.0\%\) \\
\(2.0\)  & \(73.8\%\) & \(24.6\%\) & \(98.4\%\) & \(100.0\%\) \\
\(5.0\)  & \(97.0\%\) & \(2.8\%\)  & \(99.8\%\) & \(100.0\%\) \\
\(10.0\) & \(100.0\%\) & \(0.0\%\) & \(100.0\%\) & \(100.0\%\) \\
\bottomrule
\end{tabular}
\begin{flushleft}
\footnotesize
\textit{Notes.} The entries are average shares within \(D_1\), averaged
over admissible grid points with \(\eta\) held at the row value. The no-terminal
region is \(D_1\cap(-\infty,s_1^D)\); the full-terminal region is
\(D_1\cap(s_1^F,\infty)\). The fixed-terminal coverage is the share of
\(D_1\) on which the terminal policy is constant along every feasible
period-1 skill trajectory.
\end{flushleft}
\end{table}

The period-1 restrictions leave a sizable skill
domain. In the grid, \(D_1\) is nonempty in nearly all admissible
calibrations and covers a median of \(81.0\%\) of the feasible period-1
skill range. Within \(D_1\), the two fixed-terminal regions together cover
at least \(93.8\%\) of the domain on average for every value of \(\eta\)
reported in the table. Thus the restrictions mainly exclude the small set
of initial skills whose reachable continuation states cross a terminal
engagement cutoff; they do not confine the analysis to a narrow slice of
workers.

\medskip

\section{Robustness of the Reliability Pattern to Alternative Learning Profiles}
\label{app:learning_profile_robustness}

The reliability comparative static in \cref{sec:ai_dimensions_duopoly}
is the result most directly tied to the learning profile in the skill
transition. The baseline model assumes that learning from engagement is
proportional to $\pi(1-\pi)$. This appendix repeats the numerical
exercise in \cref{fig:h_star_vs_pi} under alternative learning profiles.

We replace the learning component in the skill transition with
$\phi h_t\ell(\pi_t)(\alpha_t-s_t)$, so that for $s_t<\alpha_t$,
\[
g_\ell(s_t,h_t;A_t)
=
s_t+
\left[
\phi h_t\ell(\pi_t)
-
\gamma(1-h_t)(1-\pi_t)
\right](\alpha_t-s_t).
\]
The terminal-period marginal engagement gain at a symmetric profile is
then governed by
\[
F_{2,\ell}(h;s_2,\pi_2)
=
\frac{\beta}{2\eta}
\left[
\gamma+\frac{\phi\ell(\pi_2)}{1-\pi_2}
\right]
B'\!\left(g_\ell(s_2,h;A_2)\right)
M(s_2,h;A_2),
\]
which reduces to the baseline expression when
$\ell(\pi)=\pi(1-\pi)$. For each learning profile and each value of
$\pi_2$, we solve the terminal two-firm game directly. When the
equilibrium is interior, it solves
\[
F_{2,\ell}(h;s_2,\pi_2)=\lambda R\delta.
\]

The numerical exercise uses the same parameters as \cref{fig:h_star_vs_pi}:
\[
B(s)=W(s)=s^{1.2},\quad s_2=0.30,\quad \alpha_2=0.90,\quad
\phi=0.5,\quad \gamma=0.3,\quad \delta=0.5,
\]
\[
\lambda R=3.0,\quad \beta=0.95,\quad \eta=0.21.
\]
To isolate the shape of the learning profile from the scale of learning,
each profile below is normalized so that
\[
\max_{\pi\in[0,1]}\ell(\pi)=\frac14,
\]
the maximum of the baseline product $\pi(1-\pi)$. Hence all profiles
satisfy the analogous no-leapfrogging restriction
$\phi\ell(\pi)<1$.

\begin{table}[ht]
\centering
\caption{Reliability pattern under alternative learning profiles}
\label{tab:learning_profile_robustness}
\begin{tabular}{lccc}
\toprule
Learning profile $\ell(\pi)$
& Peak $\pi_2$
& Peak $h_2^*$
& Positive-engagement range \\
\midrule
$\pi(1-\pi)$
& $0.500$ & $0.674$ & $(0.132,0.793)$ \\
$\pi^{1/2}(1-\pi)$
& $0.253$ & $0.766$ & $(0.020,0.634)$ \\
$\pi^{3/2}(1-\pi)$
& $0.753$ & $0.832$ & $(0.289,0.935)$ \\
$\pi(1-\pi)^2$
& $0.296$ & $0.753$ & $(0.069,0.551)$ \\
$\pi^2(1-\pi)^2$
& $0.500$ & $0.674$ & $(0.250,0.685)$ \\
$(0.05+\pi)(1-\pi)$
& $0.450$ & $0.678$ & $(0.069,0.770)$ \\
\midrule
$\pi^2(1-\pi)$
& $0.714$ & $1.000$ & $(0.419,0.950)$ \\
$\pi$
& $0.802$ & $1.000$ & $(0.725,0.950)$ \\
$1-\pi$
& $0.001$ & $0.972$ & $(0.001,0.355)$ \\
$0.05+\pi(1-\pi)$
& $0.917$ & $1.000$ & $(0.080,0.950)$ \\
\bottomrule
\end{tabular}
\begin{flushleft}
\footnotesize
\textit{Notes.} Each profile is normalized to have maximum value $1/4$.
The table reports the peak of the equilibrium engagement curve
$h_2^*(s_2;\pi_2)$ and the range of $\pi_2$ values over which engagement is
positive, computed on a grid $\pi_2\in[0.001,0.950]$. The first six rows
are exposure--practice profiles that keep learning limited when AI failures
are either very rare or very frequent; all produce a hump-shaped engagement
curve. The last four rows are extreme profiles. When learning remains strong at
high failure rates, as in $\pi^2(1-\pi)$, $\pi$, or $0.05+\pi(1-\pi)$,
the high-unreliability decline can disappear. When learning is driven only by
AI exposure, as in $1-\pi$, engagement is highest at very high reliability
rather than at intermediate reliability.
\end{flushleft}
\end{table}

The hump-shaped reliability pattern does not depend on
the literal product $\pi(1-\pi)$. It persists across a range of
exposure--practice learning profiles, including asymmetric profiles that
shift the peak toward higher or lower unreliability. What matters is the
economic structure: a small increase in unreliability must initially raise the
skill built by engagement, while very frequent failures must eventually make
the current operating margin too weak to justify engagement.

The extreme profiles also clarify the boundary of the result. If learning is
purely failure-practice based, or if learning remains available even when AI
almost never functions, then the skill-value channel can remain strong at high
$\pi_2$, and the high-unreliability decline may disappear. If learning is
purely AI-exposure based, engagement is strongest when AI is most reliable,
and the intermediate-reliability hump is lost. Thus the nonmonotone
reliability result is robust to the exact functional form of learning, but it
does rely on the exposure--practice complementarity that motivates the skill
transition.

\medskip

\section{Robustness of the Timing Reversal to Alternative Learning Profiles}
\label{app:timing_reversal_learning}

\cref{ex:reversals} shows that mobility can reverse the timing of
engagement: the worker receives no engagement in period~1 but full
engagement in period~2. We show that this timing reversal is robust to the choice of learning profile, and does not hinge on the exact product $\pi(1-\pi)$.

We replace the learning component in the skill transition with
$\phi h_t\ell(\pi_t)(\alpha_t-s_t)$, so that for $s_t<\alpha_t$,
\[
g_\ell(s_t,h_t;A_t)
=
s_t+
\left[
\phi h_t\ell(\pi_t)
-
\gamma(1-h_t)(1-\pi_t)
\right](\alpha_t-s_t).
\]
The erosion term is unchanged. To isolate the shape of the learning
profile from the scale of learning, each alternative profile is normalized
so that $\max_{\pi\in[0,1]}\ell(\pi)=1/4$, the maximum of the baseline
profile $\pi(1-\pi)$. With the parameters in \cref{ex:reversals},
this also preserves no-leapfrogging, since $\phi\max_\pi\ell(\pi)=0.30<1$.

For each profile, we directly solve the two-period mobility game using the
same primitive parameters as in \cref{ex:reversals}:
\[
B(s)=W(s)=s^{1.05},\quad
\alpha_1=0.65,\quad
\alpha_2=0.82,\quad
\pi_1=0.45,\quad
\pi_2=0.35,
\]
\[
\phi=1.20,\quad
\gamma=0.20,\quad
\beta=0.95,\quad
\delta=0.05,\quad
\lambda R=1.5,\quad
\eta=2.0,\quad
s_1=0.371.
\]
The table reports the symmetric period-1 equilibrium and the terminal
equilibrium reached from the resulting continuation skill.

\begin{table}[ht]
\centering
\caption{Timing reversal under alternative learning profiles}
\label{tab:timing_reversal_learning}
\begin{tabular}{lccccc}
\toprule
Learning profile $\ell(\pi)$
& $\ell(\pi_1)$
& $\ell(\pi_2)$
& $h_1^*$
& $h_2^*$
& Timing reversal \\
\midrule
$\pi(1-\pi)$
& $0.248$ & $0.227$ & $0$ & $1$ & Yes \\
$\pi^{1/2}(1-\pi)$
& $0.240$ & $0.250$ & $0$ & $1$ & Yes \\
$\pi^{3/2}(1-\pi)$
& $0.223$ & $0.181$ & $0$ & $1$ & Yes \\
$\pi(1-\pi)^2$
& $0.230$ & $0.250$ & $0$ & $1$ & Yes \\
$\pi^2(1-\pi)^2$
& $0.245$ & $0.207$ & $0$ & $1$ & Yes \\
$(0.05+\pi)(1-\pi)$
& $0.249$ & $0.236$ & $0$ & $1$ & Yes \\
$0.05+\pi(1-\pi)$
& $0.248$ & $0.231$ & $0$ & $1$ & Yes \\
\midrule
$\pi^2(1-\pi)$
& $0.188$ & $0.134$ & $0$ & $0$ & No \\
$\pi$
& $0.113$ & $0.088$ & $0$ & $0$ & No \\
$1-\pi$
& $0.138$ & $0.163$ & $0$ & $1$ & Yes \\
\bottomrule
\end{tabular}
\begin{flushleft}
\footnotesize
\textit{Notes.} Each nonbaseline profile is normalized so that
$\max_{\pi\in[0,1]}\ell(\pi)=1/4$. In every row, the symmetric period-1
equilibrium is $h_1^*=0$, so the worker enters period~2 with
$s_2=g_\ell(s_1,0;A_1)=0.340$. The terminal equilibrium $h_2^*$ is then
computed at this continuation skill. A timing reversal occurs when
$h_1^*=0$ and $h_2^*=1$.
\end{flushleft}
\end{table}

The timing reversal is not tied to the literal
product $\pi(1-\pi)$. The period-1 part of the reversal is especially
robust: across all profiles in the table, the symmetric period-1
equilibrium remains $h_1^*=0$. The terminal part depends on whether
engagement builds enough skill at the terminal failure probability
$\pi_2=0.35$. Balanced exposure--practice profiles preserve full terminal
engagement and hence preserve the timing reversal. The reversal disappears
in extreme profiles, such as normalized $\ell(\pi)=\pi$ or
$\ell(\pi)=\pi^2(1-\pi)$, that assign too little learning to the terminal
reliability level. Thus the example relies on terminal skill-building being
strong enough to attract workers, but not on the exact baseline product
form.

\medskip

\section{Robustness to a Pay Bonus}
\label{app:bonus}

In the baseline model, firms attract workers only through the skill trajectory their jobs build, because the wage schedule $W(\cdot)$ is set by the market and is the same at both firms. Firms might also compete on pay. We show that adding a simple cash instrument leaves our main conclusions intact. In each period $t\in\{1,2\}$, each firm $j$ now makes one extra choice alongside its engagement $h_t^j\in[0,1]$: a binary \emph{bonus} decision $z_t^j\in\{0,1\}$: whether to pay a bonus $\omega\ge 0$ on top of the market wage. The bonus is pure cash and does not affect how skill evolves, so the skill dynamics of \cref{sec:model} are unchanged. It matters in only two ways: it makes the firm more attractive to workers, and it lowers the firm's per-worker margin by $\omega$. Setting $\omega=0$ returns the model of \cref{sec:competition}.

In the terminal period, a worker's value from joining firm $j$ becomes
\begin{equation*}
  V_2^j\!\left(s_2,h_2^j,z_2^j\right)
  = W(s_2) + \omega\,z_2^j + \beta\,B\!\left(g(s_2,h_2^j;A_2)\right),
\end{equation*}
and, because the common wage cancels when the worker compares the two firms, firm $j$'s share is
\begin{equation*}
  \sigma_2^j
  = \left[\,1+\exp\!\Big(\big\{[\omega z_2^{-j}+\beta B(g(s_2,h_2^{-j};A_2))]
     -[\omega z_2^{j}+\beta B(g(s_2,h_2^{j};A_2))]\big\}/\eta\Big)\right]^{-1}.
\end{equation*}
Paying the bonus lowers the margin to $M(s_2,h_2^j;A_2)-\omega z_2^j$, so the firm's terminal payoff is $\Pi_2^j=\sigma_2^j\,[\,M(s_2,h_2^j;A_2)-\omega z_2^j\,]$. In the first period, let \(\bar V_2(\cdot,A_2)\) and \(\bar\Pi_2(\cdot,A_2)\) denote the worker value and common per-firm profit in the selected terminal-period equilibrium, which already reflect the terminal bonus decision. The worker's first-period value is $V_1^j
=
W(s_1)+\omega z_1^j
+\beta\,\bar V_2\!\left(g(s_1,h_1^j;A_1),A_2\right)$, the shares $\sigma_1^j$ follow the same logit, and firm $j$ chooses $(h_1^j,z_1^j)$ to maximize
\begin{equation*}
  \Phi_1^j
  = \sigma_1^j\Big[\,M(s_1,h_1^j;A_1)-\omega z_1^j
      + \beta\,\bar\Pi_2\!\left(g(s_1,h_1^j;A_1),A_2\right)\Big]
  + \big(1-\sigma_1^j\big)\,\beta\,\bar\Pi_2\!\left(g(s_1,h_1^{-j};A_1),A_2\right).
\end{equation*}
The last term is the free-riding channel from
\cref{sec:period1_duopoly,sec:asymmetry_duopoly}: if the worker joins the
other firm, firm \(j\) earns nothing today but must still attract the worker
next period, against the skill the other firm built.

The numerical extension preserves the main qualitative patterns. The
period-1 comparison with the single-firm benchmark is unchanged in the
calibration in \cref{fig:bonus-robust}: first-period engagement as a
function of \(\eta\) is identical for \(\omega\in\{0,0.05,0.10\}\), so
the crossing point \(\eta^\ast\) is unaffected. The terminal comparative
statics in \cref{sec:ai_dimensions_duopoly} are also robust in the
numerical exercises: greater AI capability still raises engagement, and
the hump-shaped response to reliability remains for moderate bonuses.
The free-riding result in \cref{sec:asymmetry_duopoly} also persists:
the asymmetric equilibrium in \cref{fig:asym_eq}, in which one firm
builds skill and the other free-rides, remains an equilibrium even for a
bonus worth about \(44\%\) of the relevant per-worker margin; in that
equilibrium neither firm pays the bonus.

\begin{figure}[t]
  \centering
  \includegraphics[width=0.62\textwidth]{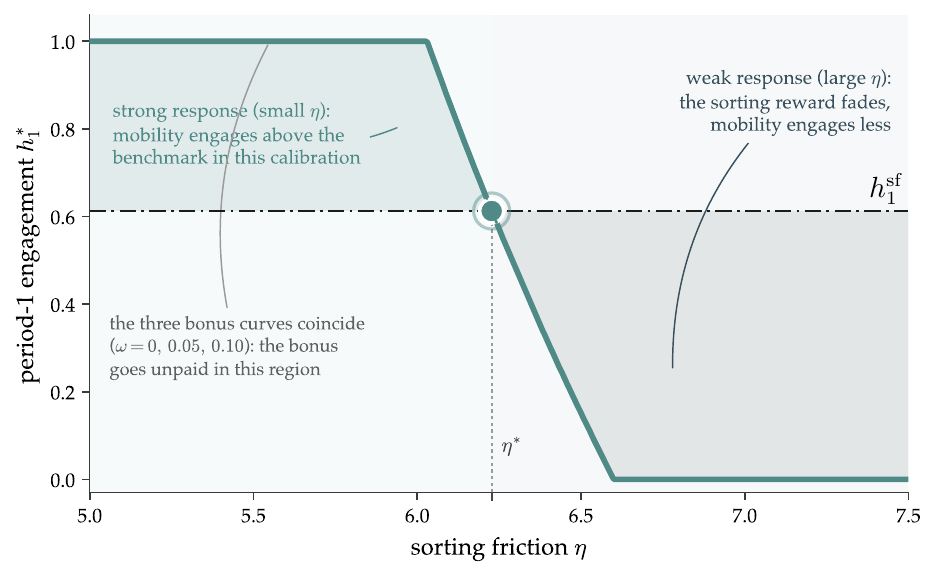}
\caption{The bonus leaves the engagement margin unchanged. First-period
mobility engagement \(h_1^\ast\) as a function of the worker-response
friction \(\eta\), for bonus levels \(\omega\in\{0,0.05,0.10\}\); the
three curves coincide because the equilibrium bonus is not paid in this
region. Engagement crosses the single-firm benchmark \(h_1^{\mathrm{sf}}\) (constant
in \(\eta\)) at \(\eta^\ast\): when workers respond weakly (large
\(\eta\)) mobility engages below the benchmark, as
\cref{cor:mobility_vs_monopoly} establishes; when they respond strongly
(small \(\eta\)) it engages above the benchmark in this calibration, a
numerical instance of the direction the analysis does not prove. The
crossing is unaffected by the bonus. Parameter values:
\(B(s)=W(s)=s^{1.2}\), \(\alpha_1=0.5\), \(\alpha_2=0.6\),
\(\pi_1=0.45\), \(\pi_2=0.35\), \(\phi=0.5\), \(\gamma=0.3\),
\(\delta=0.05\), \(\lambda R=3.65\), \(\beta=0.95\), \(s_1=0.25\), giving
\(h_1^{\mathrm{sf}}\approx0.61\) and \(\eta^\ast\approx6.23\).}
  \label{fig:bonus-robust}
\end{figure}

Whether the bonus is worth paying depends on a single tension. A firm that offers the bonus must pay it to every worker it employs, yet it only attracts the few extra workers it tips its way. When workers respond weakly to what firms offer (large $\eta$), too few are tipped to justify paying everyone, so firms leave the bonus off; when workers respond strongly (small $\eta$), even a small bonus draws in many, so firms pay it. Free-riding and specialization arise only when workers respond weakly, exactly when the bonus goes unused. The two levers therefore operate in separate regions and never interact. Consistent with this separation, our numerical search finds asymmetric
outcomes only through engagement-based free-riding, as in
\cref{sec:asymmetry_duopoly}; in those equilibria both firms make the same bonus choice. We do not find equilibria in which one firm keeps workers with cash while the other builds skill.

When workers respond strongly enough for the bonus to be used, both firms pay it. Because both firms pay, neither gains a relative sorting advantage, so the bonus is largely passed through to workers as extra compensation. In the numerical exercises, the bonus can substitute for some costly engagement of below-benchmark workers, but it does not alter
the core mechanisms: engagement remains concentrated where worker sorting is most valuable, AI capability and reliability have the same qualitative effects, and the free-riding equilibrium persists. Thus allowing firms to vary pay changes the division between cash and engagement in some
regions, but it does not overturn the paper's main engagement logic.

\end{document}